\documentclass{article}




    \usepackage[final, nonatbib]{neurips_2020}


\usepackage[utf8]{inputenc} 
\usepackage[T1]{fontenc}    
\usepackage{hyperref}       
\usepackage{url}            
\usepackage{booktabs}       
\usepackage{amsfonts}       
\usepackage{nicefrac}       
\usepackage{microtype}      

\usepackage{amsmath}
\usepackage{amssymb}
\usepackage{mathtools}
\usepackage[authoryear, round]{natbib} 
\usepackage{stmaryrd}
\usepackage{tgtermes}

\title{Analysis and Design of Thompson Sampling \\ for Stochastic Partial Monitoring}

\usepackage[ruled,vlined,linesnumbered]{algorithm2e}
\usepackage{amsmath}
\usepackage{amssymb}
\usepackage{amsthm}
\usepackage{bbm}
\usepackage{bm}
\usepackage[font=footnotesize,labelfont=bf]{caption}
\usepackage{eucal}  
\usepackage{mathrsfs}
\usepackage{tabularx}


\hypersetup{
  colorlinks=true,
  linkcolor=blue,
  citecolor=blue,
}


\newtheorem{theorem}{Theorem}
\newtheorem*{theorem*}{Theorem}
\newtheorem*{proof*}{proof}

\newtheorem*{assumption*}{Assumption}
\newtheorem{lemma}[theorem]{Lemma}
\newtheorem*{lemma*}{Lemma}
\newtheorem{proposition}[theorem]{Proposition}
\newtheorem*{proposition*}{Proposition}
\newtheorem{corollary}[theorem]{Corollary}
\newtheorem*{corollary*}{Corollary}
\newtheorem{fact}[theorem]{Fact}
\newtheorem*{fact*}{Fact}

\theoremstyle{definition} 
\newtheorem{definition}{Definition}
\newtheorem*{definition*}{Definition}


\theoremstyle{remark}  
\newtheorem*{remark}{Remark}

\SetArgSty{textnormal}
\SetKwInOut{Input}{Input}
\SetKwInOut{Output}{Output}
\SetKwComment{Comment}{$\triangleright$\ }{}

\def\e{\mathrm{e}}
\def\E{\mathbb{E}}
\def\P{\mathbb{P}}
\def\R{\mathbb{R}}
\def\N{\mathbb{N}}

\def\calA{\mathcal{A}}
\def\calB{\mathcal{B}}
\def\calC{\mathcal{C}}

\def\calF{\mathcal{F}}
\def\calG{\mathcal{G}}

\def\calN{\mathcal{N}}

\def\calP{\mathcal{P}}
\def\calS{\mathcal{S}}
\def\calT{\mathcal{T}}
\def\calU{\mathcal{U}}
\def\calV{\mathcal{V}}

\def\calX{\mathcal{X}}
\def\calY{\mathcal{Y}}

\newcommand{\one}{\mbox{(i)}}
\newcommand{\two}{\mbox{(i\hspace{-.1em}i)}}

\DeclareMathOperator*{\argmax}{arg\,max}
\DeclareMathOperator*{\argmin}{arg\,min}

\newcommand{\ep}{\epsilon}

\newcommand{\de}{\delta}
\newcommand{\la}{\lambda}

\newcommand{\ind}[1]{\mathbbm{1}{\left[#1\right]}}
\newcommand{\pr}[1]{\P{\left\{#1\right\}}}
\newcommand{\pipr}[1]{\pi{\left(#1\right)}}  
\newcommand{\ev}[1]{{\left\{#1\right\}}}  
\newcommand{\expect}{\operatornamewithlimits{\mathbb{E}}}
\newcommand{\Expect}[1]{\expect{\left[#1\right]}}
\newcommand{\ExpectX}[2]{\E_{#1}{\left[#2\right]}}  
\newcommand{\Ex}[1]{\mathbb{E}\left[#1\right]}  

\newcommand{\relmiddle}[1]{\mathrel{}\middle#1\mathrel{}}  
\newcommand{\KL}[2]{\mathcal{D}_{\mathrm{KL}}\left({#1} \middle\| {#2}\right)}

\newcommand{\diff}[2]{\frac{\mathrm{d}{#1}}{\mathrm{d}{#2}}}

\renewcommand{\d}{\mathrm{d}}
\newcommand{\dx}{\d{x}}

\newcommand{\By}[1]{\quad \left(\text{by {#1}} \right) }
\newcommand{\Bym}[1]{\quad \left(\text{by } {#1} \right) }  
\newcommand{\Since}[1]{\quad \left(\text{{#1}} \right) }
\newcommand{\Sincem}[1]{\quad \left( {#1} \right) }  
\newcommand{\since}[1]{\quad\left(\mbox{#1}\right)}
\newcommand{\nn}{\nonumber\\}
\newcommand{\n}{\nonumber}
\newcommand{\per}{\,.}
\newcommand{\com}{\,,}

\def\iid{\stackrel{\mathrm{i.i.d.}}{\sim}}





\DeclarePairedDelimiter{\abs}{\lvert}{\rvert} %

\DeclarePairedDelimiter{\crl}{\{}{\}}
\DeclarePairedDelimiter{\prn}{(}{)}
\DeclarePairedDelimiter{\nrm}{\|}{\|}

\DeclareMathOperator{\Order}{O}
\DeclareMathOperator{\tilOrder}{\tilde{O}}
\DeclareMathOperator{\order}{o}
\DeclareMathOperator{\tilTheta}{\tilde{\Theta}}

\newcommand{\normx}[1]{\left\Vert#1\right\Vert}
\newcommand{\pax}[1]{\left(#1\right)}

\newcommand{\brx}[1]{\left\{#1\right\}}


\newcommand{\Regret}{\mathrm{Reg}}


\newcommand{\lossmat}{\mathbf{L}}
\newcommand{\fbmat}{\mathbf{H}}

\newcommand{\unif}[1]{\calU({#1})}

\newcommand{\psub}{{p^{(\alpha)}}}
\newcommand{\hsub}{{h^{(\alpha)}}}
\newcommand{\bsub}{{b^{(\alpha)}}}

\newcommand{\onemat}{\mathbf{1}}

\newcommand{\sumT}{\sum_{t=1}^T}

\newcommand{\qit}{q_i^{(t)}}
\newcommand{\qkt}{q_k^{(t)}}
\newcommand{\qin}{q_{i,n}}
\newcommand{\qini}{q_{i,n_i}}
\newcommand{\qknk}{q_{k,n_k}}

\newcommand{\phat}{\hat{p}}
\newcommand{\phatt}{\hat{p}_t}
\newcommand{\pbar}{\bar{p}}
\newcommand{\pact}{\bar{p}_t}  
\newcommand{\ptilt}{\tilde{p}_t}

\newcommand{\Ci}{\calC_i}
\newcommand{\Cj}{\calC_j}
\newcommand{\Ck}{\calC_k}
\newcommand{\Nplus}[2]{N_{{#1}, {#2}}^+}

\newcommand{\Ai}{\calA_i(t)}

\newcommand{\Aicomp}{\calA_i^c(t)}
\newcommand{\Aini}{\calA_{i,n_i}}  
\newcommand{\Atili}{\tilde{\calA}_i(t)}
\newcommand{\Atilone}{\tilde{\calA}_1(t)}
\newcommand{\Atilicomp}{\tilde{\calA}_i^c(t)}
\newcommand{\Atilonecomp}{\tilde{\calA}_1^c(t)}
\newcommand{\Aaci}{\bar{\calA}_i(t)}
\newcommand{\Aacicomp}{\bar{\calA}_i^c(t)}


\newcommand{\Aack}{\bar{\calA}_k(t)}
\newcommand{\Aackcomp}{\bar{\calA}_k^c(t)}



\newcommand{\Znk}{Z_{n_k}}


\DeclareMathOperator{\Img}{Im}

\usepackage{subfigure}
\newlength{\subfigwidth}
\newlength{\subfigcolsep}
\setlength{\subfigcolsep}{0.5\tabcolsep}

\newcommand{\ie}{\textit{i.e., }}
\newcommand{\eg}{\textit{e.g., }}

\newcommand{\idmat}{I}  

\newcommand{\gbart}{\bar{g}_t}


\newcommand{\Zbsli}{Z_{\backslash i}}

\newcommand{\BPM}{\mathrm{BPM}}
\newcommand{\BtBPM}{{B_t^{\BPM}}}
\newcommand{\btBPM}{{b_t^{\BPM}}}

\newcommand{\x}{,\,}

\newcommand{\eppr}{\epsilon^\prime}

\newcommand{\pjk}{p^{(j,k)}}
\newcommand{\Lx}{L^{(x)}}

\author{
  Taira Tsuchiya \\
  The University of Tokyo \\
  RIKEN AIP \\
  \texttt{tsuchiya@ms.k.u-tokyo.ac.jp} \\
  \And
  Junya Honda \\
  The University of Tokyo \\
  RIKEN AIP \\
  \texttt{honda@edu.k.u-tokyo.ac.jp} \\
  \And
  Masashi Sugiyama \\
  RIKEN AIP \\
  The University of Tokyo \\
  \texttt{sugi@k.u-tokyo.ac.jp} 
}

\begin{document}

\maketitle
\begin{abstract}
We investigate finite stochastic partial monitoring,
which is a general model for sequential learning with limited feedback.
While Thompson sampling is one of the most promising algorithms on a variety of online decision-making problems,
its properties for stochastic partial monitoring have not been theoretically investigated,
and
the existing algorithm relies on a heuristic approximation of the posterior distribution.
To mitigate these problems,
we present a novel Thompson-sampling-based algorithm,
which enables us to exactly sample the target parameter from the posterior distribution.
Besides,
we prove that the new algorithm achieves the logarithmic \emph{problem-dependent expected pseudo-regret} $\Order(\log T)$ 
for a linearized variant of the problem with local observability.
This result is the first regret bound of Thompson sampling for partial monitoring, which also becomes the first logarithmic regret bound of Thompson sampling for linear bandits.
\end{abstract}
\section{Introduction}
Partial monitoring (PM) is a general sequential decision-making problem with limited feedback~\citep{Rustichini99general, Piccolboni01FeedExp3}.
PM is attracting broad interest because
it includes a wide range of problems such as
the multi-armed bandit problem~\citep{LaiRobbins85},
a linear optimization problem with full or bandit feedback~\citep{Zinkevich03linearfull, Dani08linearbandit},
dynamic pricing~\citep{Kleinberg03dp},
and label efficient prediction~\citep{CesaBianchi04lep}.

A PM game can be seen as a sequential game that is played by two players: a learner and an opponent.
At every round, the learner chooses an action, while the opponent chooses an outcome.
Then, the learner suffers an unobserved loss and receives a feedback symbol, 
both of which are determined from the selected action and outcome.
The main characteristic of this game is that the learner cannot directly observe the outcome and loss.
The goal of the learner is to minimize his/her cumulative loss over all rounds.
The performance of the learner is evaluated by the regret,
which is defined as the difference between the cumulative losses of the learner and the optimal action 
(\ie the action whose expected loss is the smallest).

There are mainly two types of PM games, which are the \emph{stochastic} and \emph{adversarial} settings~\citep{Piccolboni01FeedExp3, Bartok11minimax}.
In the stochastic setting, the outcome at each round is determined from the \emph{opponent's strategy},
which is a probability vector over the opponent's possible choices.
On the other hand, in the adversarial setting, the outcomes are arbitrarily decided by the opponent.
We refer to the PM game with finite actions and finite outcomes as a \emph{finite} PM game.
In this paper, we focus on the stochastic finite game.

One of the first algorithms for PM was considered by~\citet{Piccolboni01FeedExp3}. 
They proposed the FeedExp3 algorithm, the key idea of which is to use an unbiased estimator of the losses.
They showed that the FeedExp3 algorithm attains $\tilOrder(T^{3/4})$ minimax regret for a certain class of PM games, and showed that
any algorithm suffers linear minimax regret $\Omega(T)$ for the other class.
Here $T$ is the time horizon and the notation $\tilOrder(\cdot)$ hides polylogarithmic factors.
The upper bound $\tilOrder(T^{3/4})$ is later improved by~\citet{CesaBianchi06regret} to $\Order(T^{2/3})$,
and they also provided a game with a matching lower bound.

In the seminal paper by~\citet{Bartok11minimax}, they classified PM games into four classes based on their minimax regrets.
To be more specific, they classified games into trivial, easy, hard, and hopeless games,
where their minimax regrets are $0$, $\tilTheta(\sqrt{T})$, $\Theta(T^{2/3})$, and $\Theta(T)$, respectively.
Note that the easy game is also called a~\emph{locally observable} game.
After their work,
several algorithms have been proposed for the finite PM problem~\citep{Bartok12CBP, Vanchinathan14BPM, Komiyama15PMDEMD}.
For the problem-dependent regret analysis, \citet{Komiyama15PMDEMD} proposed an algorithm
that achieves $\Order(\log T)$ regret with the optimal constant factor.
However, it requires to solve a time-consuming optimization problem with infinitely many constraints at each round.
In addition, this algorithm relies on the forced exploration to achieve the optimality, which makes the empirical performance near-optimal only after prohibitively many rounds, say, $10^5$ or $10^6$.

Thompson sampling (TS,~\citealp{Thompson1933likelihood}) is one of the most promising algorithms on a variety of online decision-making problems
such as the multi-armed bandit~\citep{LaiRobbins85} and the linear bandit~\citep{Agrawal13payoff},
and the effectiveness of TS has been investigated both empirically~\citep{Chapelle11empirical} and theoretically~\citep{Kaufmann12finitetime, Agrawal13further, Honda14prior}.
In the literature on PM,
\citet{Vanchinathan14BPM} proposed a TS-based algorithm called BPM-TS (Bayes-update for PM based on TS) for stochastic PM,
which empirically achieved state-of-the-art performance.
Their algorithm uses Gaussian approximation to handle the complicated posterior distribution of the opponent's strategy.
However, this approximation is somewhat heuristic and
can degrade the empirical performance due to the discrepancy from the exact posterior distribution.
Furthermore, no theoretical guarantee is provided for BPM-TS. 

Our goals are to establish a new TS-based algorithm for stochastic PM,
which allows us to sample the opponent's strategy parameter from the exact posterior distribution,
and investigate whether the TS-based algorithm can achive sub-linear regret in stochastic PM.
Using the accept-reject sampling,
we propose a new TS-based algorithm for PM (TSPM),
which is equipped with a numerical scheme to obtain a posterior sample from the complicated posterior distribution.
We derive a logarithmic regret upper bound $\Order(\log{T})$ for the proposed algorithm on the locally observable game under a linearized variant of the problem.
This is the first regret bound for TS on the locally observable game.
Moreover, our setting includes the linear bandit problem and our result is also the first logarithmic expected regret bound of TS for the linear bandit, whereas a high-probability bound was provided, for example, in~\citet{Agrawal13payoff}.
Finally, we compare the performance of TSPM with existing algorithms in numerical experiments,
and show that TSPM outperforms existing algorithms.
\section{Preliminaries}

This paper studies finite stochastic PM games~\citep{Bartok11minimax}.
A PM game with $N$ actions and $M$ outcomes is defined by
a pair of a loss matrix $\lossmat = (\ell_{i,j}) \in \R^{N \times M}$ and feedback matrix $\fbmat = (h_{i,j}) \in [A]^{N \times M}$,
where $A$ is the number of feedback symbols
and $[A] = \ev{1,2,\ldots,A}$.

A PM game can be seen as a sequential game that is played by two players: the learner and the opponent.
At each round $t = 1,2,\dots,T$, the learner selects action $i(t) \in [N]$,
and at the same time the opponent selects an outcome based on the opponent's strategy $p^* \in \calP_M$,
where $\calP_n = \ev{p\in\R^n : p_k \geq 0, \sum_{k=1}^n p_k = 1}$ is the $(n-1)$-dimensional probability simplex.
The outcome $j(t)$ of each round is an independent and identically distributed sample from $p^*$,
and then, the learner suffers loss $\ell_{i(t), j(t)}$ at time $t$.
The learner cannot directly observe the value of this loss,
but instead observes the \emph{feedback symbol} $y(t)=h_{i(t),j(t)} \in [A]$.
The setting explained above has been widely studied in the literature of stochastic PM~\citep{Bartok11minimax, Komiyama15PMDEMD},
and we call this the \emph{discrete} setting.
In Section~\ref{sec:theory}, we also introduce a \emph{linear} setting for theoretical analysis, 
which is a slightly different setting from the discrete one.

The learner aims to minimize the cumulative loss 
over $T$ rounds.
The expected loss of action $i$ is given by $L_i^\top p^*$,
where $L_i$ is the $i$-th column of $\lossmat^\top$.
We say action $i$ is~\emph{optimal} under strategy $p^*$
if $(L_i - L_j)^\top p^* \leq 0$ for any $j \neq i$.
We assume that the optimal action is unique, and
without loss of generality that the optimal action is action $1$.
Let $\Delta_i = (L_i - L_1)^\top p^* \ge 0$ for $i \in [N]$
and $N_i(t)$ be the number of times action $i$ is selected before the $t$-th round.
When the time step is clear from the context, we use $n_i$ instead of $N_i(t)$.
We adopt the pseudo-regret to measure the performance:
$\Regret(T) = \sumT \Delta_{i(t)} = \sum_{i \in [N]} \Delta_{i} N_i(T+1)$.
This is the relative performance of the algorithm against the~\emph{oracle}, which knows the optimal action $1$ before the game starts.

We introduce the following definitions to clarify the class of PM games,
for which we develop an algorithm and derive a regret upper bound.
The following cell decomposition is the concept to divide the simplex $\calP_M$ based on the loss matrix
to identify the optimal action, which depends on the opponent's strategy $p^*$.
\begin{definition}[Cell decomposition and Pareto-optimality~\citep{Bartok11minimax}]
    For every action $i\in[N]$, 
    \emph{cell} 
    $\Ci \coloneqq \{p \in \calP_M : (L_i - L_j)^\top p \leq 0,\,\forall j \neq i\}$ 
    is the set of opponent's strategies for which action $i$ is optimal.
    Action $i$ is \emph{Pareto-optimal} if there exists an opponent's strategy $p^*$ under which action $i$ is optimal.
\end{definition}
Each cell is a convex closed polytope.
Next, we define~\emph{neighbors} between two Pareto-optimal actions,
which intuitively means that the two actions ``touch'' each other in their surfaces.
\begin{definition}[Neighbors and neighborhood action~\citep{Bartok11minimax}]
    Two Pareto-optimal actions $i$ and $j$ are \emph{neighbors} if $\Ci \cap \Cj$ is an $(M-2)$-dimensional polytope.
    For two neighboring actions $i,j\in[N]$, 
    the \emph{neighborhood action set} is defined as $\Nplus{i}{j} = \{ k\in[N] : \Ci \cap \Cj \subseteq \Ck \}$.
\end{definition}
Note that the
neighborhood action set
$\Nplus{i}{j}$ includes actions $i$ and $j$ from its definition.
Next, we define the \emph{signal matrix}, which encodes the information of the feedback matrix $\fbmat$ so that we can utilize the feedback information.
\begin{definition}[Signal matrix~\citep{Komiyama15PMDEMD}]
    The signal matrix $S_i \in \ev{0,1}^{A \times M}$ of action $i$ is defined as $(S_i)_{y,j} = \ind{h_{i,j} = y}$, 
    where $\ind{X} = 1$ if the event $X$ is true and $0$ otherwise.
\end{definition}
Note that if we define the signal matrix as above,
$S_i p^* \in \R^A$ is a probability vector over feedback symbols of action $i$.
The following~\emph{local observability} condition separates easy and hard games,
this condition intuitively means that the information obtained by taking actions in the neighborhood action set $\Nplus{i}{j}$ 
is sufficient to distinguish the loss difference between actions $i$ and $j$.
\begin{definition}[Local observability~\citep{Bartok11minimax}]
    A partial monitoring game is said to be~\emph{locally observable} if
    for all pairs $i,j$ of neighboring actions,
    $L_i - L_j \in \oplus_{k\in\Nplus{i}{j}} \Img S_k^\top$,
    where $\Img V$ is the image of the linear map $V$, and $V \oplus W$ is the direct sum between the vector spaces $V$ and $W$.
\end{definition}
We also consider 
the concept of the \emph{strong local observability} condition, which implies the above local observability condition.
\begin{definition}[Strong local observability]\label{def_strong}
    A partial monitoring game is said to be~\emph{strongly locally observable} if
    for all pairs $i,j \in [N]$,
    $L_i - L_j \in \Img S_i^\top \oplus \Img S_j^\top$.
\end{definition}
This condition was assumed in the theoretical analysis in~\citet{Vanchinathan14BPM},
and we also assume this condition in theoretical analysis in Section~\ref{sec:theory}.
Note that the strong local observability means that,
for any $j \neq k$, there exists $z_{j,k} \neq 0 \in \R^{2A}$ such that
$ L_j - L_k = (S_j^\top, S_k^\top)\,z_{j,k}$.

\noindent \textbf{Notation.}
Let $\nrm{\cdot}$ and $\nrm{\cdot}_p$ be the Euclidian norm and $p$-norm,
and let $\nrm{x}_A = \sqrt{x^\top A x}$ be the norm induced by the positive semidefinite matrix $A \succeq 0$.
Let $\KL{p}{q} = \sum_{a=1}^A p_a \log (p_a/q_a)$ be the Kullback-Leibler divergence of $p$ from $q$.
The vector $e_{y} \in \R^M$ is the $y$-th orthonormal basis of $\R^M$,
and $\onemat_{n} = [1,\ldots,1]^\top$ is the $n$-dimensional all-one vector.
Let $\qit$ be the empirical feedback distribution of action $i$ at time $t$,
\ie $\qit = [n_{i1}/n_i, \ldots, n_{iA}/n_i]^\top \in \calP_A$,
where
$n_{iy} = \sum_{s=1}^t \ind{i(s) = i, y(s) = y}$ and $n_i = \sum_{y=1}^{A} n_{iy}$.
The notation is summarized in Appendix~\ref{sec:notation}.

\noindent \textbf{Methods for Sampling from Posterior Distribution.}
We briefly review the methods to draw a sample from the posterior distribution.
While TS is one of the most promising algorithms,
the posterior distribution can be in a quite complicated form,
which makes obtaining a sample from it computationally hard.
To overcome this issue, a variety of approximate posterior sampling methods have been considered,
such as Gibbs sampling, Langevin Monte Carlo, Laplace approximation, and the bootstrap~\cite[Section~5]{Russo18tutorial}.
Recent work~\citep{Lu17ensamble} proposed a flexible approximation method, which can even efficiently be applied to quite complex models such as neural networks.
However, more recent work revealed that algorithms based on such an approximation procedure~\emph{can} suffer a linear regret~\citep{Phan19inference}, even if the approximation error in terms of the $\alpha$-divergence is small enough.

Although BPM-TS is one of the best methods for stochastic PM,
it approximates the posterior by a Gaussian distribution in a heuristic way, 
which can degrade the empirical performance due to the distributional discrepancy from the exact posterior distribution.
Furthermore, no theoretical guarantee is provided for BPM-TS.
In this paper, we mitigate these problems by providing a new algorithm for stochastic PM, 
which allows us to exactly draw samples from the posterior distribution.
We also give theoretical analysis for the proposed algorithm.

\section{Thompson-sampling-based Algorithm for Partial Monitoring}
\label{sec:tspm}

In this section, we present a new algorithm for stochastic PM games,
where we name the algorithm TSPM (TS-based algorithm for PM).
The algorithm is given in Algorithm~\ref{alg:TSPM}, and we will explain the subroutines in
the following.

\begin{algorithm}[t!]
    \KwIn{prior parameter $\lambda > 0$}
    Set $B_0 \leftarrow \lambda \idmat_M, b_0 \leftarrow 0$. \\
    Take each action for $n\geq1$ times. \\
    \For{$t = 1, 2, \ldots, T$}{
        Sample $\ptilt \sim \pi(p \mid \ev{i(s), y(s)}_{s=1}^t)$ based on the accept-reject sampling (Algorithm~\ref{alg:rejection_sampling}).  \\
        Take action $i(t)=\argmax_{i\in[N]} L_i^\top \ptilt$ and observe feedback $y(t)$.  \\
        Update $B_t \leftarrow B_{t-1} + S_{i(t)}^\top S_{i(t)},\; b_t \leftarrow b_{t-1} + S_{i(t)}^\top e_{y(t)}$.   \\
    }
\caption{TSPM Algorithm}\label{alg:TSPM}
\end{algorithm}

\begin{figure}
\vspace{-3mm}
    \begin{minipage}[t!]{.48\linewidth}
    \begin{algorithm}[H]
        \KwIn{constant $R \in[0,1]$}
        \While{true}{
            Sample $\ptilt \sim g_t(p)$ (Algorithm~\ref{alg:sample_p_exact}). \\
            Sample $\tilde{u} \sim \unif{[0, 1]}$. \\
            \If{$R\tilde{u} < {F_t(\ptilt)}/{ G_t(\ptilt)}$ \label{line:rejection_sampling:cond}}{
                \Return $\ptilt$.  \\
            }
        }
    \caption{Accept-Reject Sampling}\label{alg:rejection_sampling}
    \end{algorithm}
    \end{minipage}
    \hfill
    \begin{minipage}[t!]{.48\linewidth}
    \begin{algorithm}[H]
        Compute $\tilde{B}_t, \tilde{b}_t$ from $B_t, b_t$.  \\
        \Repeat{$\psub \in \calP_{M-1}$ \label{line:sample_p_exact:cond}}{
            Sample $\psub \sim \calN(\tilde{B}_t^{-1} \tilde{b}_t, \tilde{B}_t^{-1})$.
        }
        \Return $\tilde{p} = [\psub^\top\x 1 - \sum_{i=1}^{M-1} (\psub)_i]^\top$. \\

        \caption{Sampling from $g_t(p)$}\label{alg:sample_p_exact}
    \end{algorithm}
    \end{minipage}
\end{figure}

\subsection{Accept-Reject Sampling}\label{subsec_rejection}
We adopt the accept-reject sampling~\citep{Casella04acceptreject}
to~\emph{exactly} draw samples from the posterior distribution.
The accept-reject sampling is a technique to draw samples from a specific distribution $f$,
and a key feature is to use a \emph{proposal distribution} $g$,
from which we can easily draw a sample
and whose ratio to $f$, that is $f/g$, is bounded by a constant value $R$.
To obtain samples from $f$,
$\one$ we generate samples $X \sim g$;
$\two$ accept $X$ with probability $f(X)/R g(X)$.
Note that $f$ and $g$ do not have to be normalized
when the acceptance probability is calculated.

Let $\pipr{p}$ be a prior distribution for $p$.
Then 
an unnormalized density of
the posterior distribution for $p$ can be expressed as
\begin{align}
    F_t(p)
    &=
    \pipr{p}
    \prod_{i=1}^N \exp \crl[\Big]{ -n_i \KL{\qit}{S_i p} } \com
    \label{f_unnormalized}
\end{align}
the detailed derivation of which is given in
Appendix~\ref{sec:proof_of_rejection_sampling}.
We use the proposal distribution
with unnormalized density
\begin{align}
    G_t(p)
    &=
    \pipr{p}
    \prod_{i=1}^N \exp \crl[\Big]{ - \frac12 n_i \nrm{ \qit - S_i p }^2 } \per
    \label{g_unnormalized}
\end{align}
Based on these distributions,
we use Algorithm~\ref{alg:rejection_sampling}
for exact sampling from the posterior distribution,
where
$\unif{[0, 1]}$ is the uniform distribution over $[0,1]$ and
$g_t(p)$ is the distribution corresponding to the unnormalized density $G_t(p)$ in~\eqref{g_unnormalized}.
The following proposition shows that setting $R=1$ realizes the exact sampling.
\begin{proposition}
    \label{prop:rejection_sampling_edited}
    Let $f_t(p)$ be the distribution corresponding to the unnormalized density $F_t(p)$ in~\eqref{f_unnormalized}.
    Then, the output of Algorithm~\ref{alg:rejection_sampling} 
    with $R=1$ follows $f_t(p)$.
\end{proposition}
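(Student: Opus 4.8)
The plan is to invoke the textbook correctness guarantee of accept--reject sampling, so that the whole argument reduces to checking that $R=1$ is a valid envelope constant, i.e.\ that $F_t(p)\le G_t(p)$ for every $p$ in the domain of $f_t$. Recall the guarantee: if we draw $\ptilt\sim g_t$ and accept it with probability $F_t(\ptilt)/(R\,G_t(\ptilt))$ --- which is exactly the test $R\tilde u < F_t(\ptilt)/G_t(\ptilt)$ with $\tilde u\sim\unif{[0,1]}$, and which must never exceed $1$ --- then an accepted proposal has density proportional to $g_t(p)\cdot F_t(p)/(R\,G_t(p))\propto F_t(p)$, since $g_t\propto G_t$; after normalization this is $f_t$, independently of the value of $R$. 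One also needs $\supp f_t\subseteq\supp g_t$, which holds because $F_t$ and $G_t$ share the prior factor while the remaining exponential factors of $G_t$ are strictly positive, so $\supp F_t\subseteq\supp\pipr{\cdot}=\supp G_t$. Hence it remains only to establish $F_t(p)\le G_t(p)$ and to confirm that Algorithm~\ref{alg:sample_p_exact} indeed samples from $g_t$.

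For the inequality, compare~\eqref{f_unnormalized} and~\eqref{g_unnormalized}: the prior $\pipr{p}$ cancels, and it suffices to prove, for each action $i\in[N]$,
\begin{equation}
    \KL{\qit}{S_i p}\;\ge\;\tfrac12\,\nrm{\qit - S_i p}^2 .
    \label{eq:sketch_pinsker}
\end{equation}
Since each column of $S_i$ is a standard basis vector, $S_i$ has nonnegative entries and unit column sums, so $S_i p\in\calP_A$ whenever $p\in\calP_M$; thus both arguments of the divergence are genuine probability vectors and Pinsker's inequality gives $\KL{\qit}{S_i p}\ge\tfrac12\nrm{\qit - S_i p}_1^2$. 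Combining this with $\nrm{x}_1\ge\nrm{x}_2$ yields~\eqref{eq:sketch_pinsker}. (When $S_i p$ vanishes on a coordinate where $\qit$ is positive, the left-hand side is $+\infty$, so~\eqref{eq:sketch_pinsker} is trivial there; this is also precisely where $F_t(p)=0$.) Multiplying~\eqref{eq:sketch_pinsker} by $n_i\ge0$, summing over $i\in[N]$, and exponentiating gives $F_t(p)\le G_t(p)$.

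Finally, to see that Algorithm~\ref{alg:sample_p_exact} samples from $g_t\propto G_t$, parametrize the simplex by $\psub\in\R^{M-1}$ via $\tilde p=[(\psub)^\top,\,1-\sum_{i=1}^{M-1}(\psub)_i]^\top$; then $-\tfrac12\sum_i n_i\nrm{\qit - S_i p}^2$ together with the Gaussian prior is a quadratic in $\psub$ whose Hessian and linear term are exactly the $\tilde B_t,\tilde b_t$ built from $B_t,b_t$, so completing the square identifies $g_t$, restricted to the simplex, with the Gaussian $\calN(\tilde B_t^{-1}\tilde b_t,\tilde B_t^{-1})$ conditioned on $\{\psub\in\calP_{M-1}\}$ --- exactly the law produced by the repeat--until loop of Algorithm~\ref{alg:sample_p_exact}. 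This reparametrization is the one carried out in Appendix~\ref{sec:proof_of_rejection_sampling}. Since the relevant acceptance probabilities are positive, all loops terminate almost surely, and the three pieces together prove the claim.

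The proof is short; the only step needing care is that Pinsker's inequality is naturally stated with the $1$-norm, so one must observe that passing to the Euclidean norm (which is what appears in $G_t$) only weakens the bound, and check that the divergence's arguments really are probability vectors while handling the case where the divergence is infinite. The more mechanical point is verifying that Algorithm~\ref{alg:sample_p_exact} targets $g_t$, i.e.\ the quadratic-form bookkeeping behind $\tilde B_t,\tilde b_t$; I expect that, rather than the inequality, to be where most of the writing goes.
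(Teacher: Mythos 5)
Your proposal is correct and follows essentially the same route as the paper: the core step is the bound $F_t(p)\le G_t(p)$ obtained by applying Pinsker's inequality to each $\KL{\qit}{S_i p}$ term and then relaxing $\nrm{\cdot}_1$ to $\nrm{\cdot}_2$, after which the standard accept--reject guarantee with envelope constant $R=1$ gives the result. The extra care you take with supports and the infinite-KL case, and the verification that Algorithm~\ref{alg:sample_p_exact} targets $g_t$ (which the paper handles separately as Proposition~\ref{prop:sample_p_exact} in Appendix~\ref{sec:proof_of_sample_p_exact}), are consistent with the paper's treatment.
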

This proposition can easily be proved by Pinsker's inequality, which is detailed in Appendix~\ref{sec:proof_of_rejection_sampling}.

In practice, 
$R\in[0,1]$ is a parameter to balance the amount of over-exploration and the computational efficiency.
As $R$ decreases from 1, the algorithm tends to accept a point $p$ far from the mode.
The case $R=0$ corresponds 
the TSPM algorithm where the proposal distribution is used without the accept-reject sampling, 
which
we call \emph{TSPM-Gaussian}.
As we will see in Section~\ref{sec:theory},
TSPM-Gaussian corresponds to exact sampling of the posterior distribution when the feedback follows a Gaussian distribution rather than a multinomial distribution.

TSPM-Gaussian can be related to BPM-TS~\citep{Vanchinathan14BPM} in the sense that both of them use samples from Gaussian distributions.
Nevertheless, they use different Gaussians and TSPM-Gaussian performs much better than BPM-TS as we will see in the experiments.
Details on the relation between TSPM-Gaussian and BPM-TS are
described in Appendix~\ref{sec:relation_tspm_bpm}.

In general, we can realize efficient sampling with a small number of rejections if the proposal distribution and the target distribution are close to each other.
On the other hand, in our problem, the densities in~\eqref{f_unnormalized} and~\eqref{g_unnormalized} for each fixed point $p$ exponentially decay with the number of samples $n_i$ if the empirical feedback distribution $\qit$ converges.
This means that $F_t(p)$ and $G_t(p)$ have an exponentially large relative gap in most rounds.
Nevertheless, the number of rejections does not increase with $t$ as we will see in the experiments, which suggests that the proposal distribution approximates the target distribution well with high probability.

\subsection{Sampling from Proposal Distribution}\label{subsec_proposal}

When we consider Gaussian density $\calN(0,\lambda I_M)$ truncated over $\calP_M$ as a prior,
the proposal distribution also has the Gaussian density $\calN(B_t^{-1}b_t\x B_t^{-1})$
over $\calP_M$,
where
    \begin{align}
        \label{eq:update_rule}
            B_t = \lambda I_M+\sum_{i=1}^N n_i S_i^\top S_i = B_{t-1} + S_{i(t)}^\top S_{i(t)} \com
            \quad
            b_t = \sum_{i=1}^N n_i S_i^\top \qit = b_{t-1} + S_{i(t)}^\top e_{y(t)} \per
    \end{align}
Here note that the probability simplex
$\calP_M$ is in an $(M-1)$-dimensional space and a sample from $\calN(0,\lambda I_M)$ is not contained in $\calP_M$ with probability one.
In the literature, \eg \citet{Altmann14simplex}, 
sampling methods for Gaussian distributions truncated on a simplex have been discussed.
We use one of these procedures summarized in Algorithm~\ref{alg:sample_p_exact}, 
where we first sample $M-1$ elements of $p$ from another Gaussian distribution and determine the remaining element by the constraint
$\sum_{i=1}^M p_i = 1$.
\begin{proposition}
    \label{prop:sample_p_exact}
    Sampling from $g_t(p)$ is equivalent to Algorithm~\ref{alg:sample_p_exact} with
    \begin{align*}
        \tilde{B}_t = C_t - 2 D_t + f_t \onemat_{M-1} \onemat_{M-1}^\top \com
        \quad
        \tilde{b}_t = f_t \onemat_{M-1} - d_t + b^{(\alpha)}_t - b^{(M)} \onemat_{M-1} \com
    \end{align*}
    where
    $ B_t =
    \begin{bmatrix}
        C_t & d_t \\
        d_t^\top & f_t \\
    \end{bmatrix}
    $
    for
    $C_t \in \R^{{M-1}\times{M-1}}$, $d_t \in \R^{M-1}$, $f_t \in \R$,
    $b_t = [{b^{(\alpha)}_t}^\top, b^{(M)}_t]^\top \in \R^{M-1} \times \R$,
    and
    $D_t = \frac12 \prn{d_t \onemat_{M-1}^\top + \onemat_{M-1} d_t^\top }$.
\end{proposition}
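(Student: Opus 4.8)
The plan is to eliminate the redundant coordinate $p_M$ in the truncated-Gaussian proposal $g_t$ using the constraint $\sum_{i=1}^M p_i = 1$, and to check that the resulting $(M-1)$-dimensional truncated Gaussian coincides with the one sampled inside the repeat-loop of Algorithm~\ref{alg:sample_p_exact}. First I would put $G_t$ from~\eqref{g_unnormalized} into Gaussian form: with the prior $\pipr{p}\propto\exp\crl{-\tfrac12\lambda\nrm{p}^2}\ind{p\in\calP_M}$, expanding each $\nrm{\qit-S_ip}^2=\nrm{\qit}^2-2{\qit}^\top S_ip+p^\top S_i^\top S_ip$ and discarding all terms independent of $p$ gives $G_t(p)\propto\exp\crl{-\tfrac12\prn{p^\top B_tp-2b_t^\top p}}\ind{p\in\calP_M}$ with $B_t,b_t$ as in~\eqref{eq:update_rule}; since $B_t\succeq\lambda I_M\succ0$, this is $\calN(B_t^{-1}b_t,B_t^{-1})$ truncated to $\calP_M$ (the fact already recorded around~\eqref{eq:update_rule}).

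Next I would introduce the affine bijection $p=e_M+T\psub$ between $\crl{\psub\in\R^{M-1}:[\psub^\top\x 1-\onemat_{M-1}^\top\psub]^\top\in\calP_M}$ and $\calP_M$, where $\psub=(p_1,\dots,p_{M-1})^\top$ and $T\in\R^{M\times(M-1)}$ is obtained by stacking $I_{M-1}$ on top of $-\onemat_{M-1}^\top$. The Jacobian of this map is constant, so $g_t$, viewed as a density on the simplex, pushes forward under $p\mapsto\psub$ to the density $\propto G_t(e_M+T\psub)$ on that domain. Substituting $p=e_M+T\psub$ into the exponent yields
\[
p^\top B_tp-2b_t^\top p=\psub^\top\prn{T^\top B_tT}\psub-2\prn{T^\top(b_t-B_te_M)}^\top\psub+\mathrm{const}\per
\]
Writing $B_t$ in the block form of the statement, a direct computation gives $T^\top B_tT=C_t-d_t\onemat_{M-1}^\top-\onemat_{M-1}d_t^\top+f_t\onemat_{M-1}\onemat_{M-1}^\top=C_t-2D_t+f_t\onemat_{M-1}\onemat_{M-1}^\top=\tilde B_t$, which is positive definite since $T$ has full column rank and $B_t\succ0$; and, with $b_t$ split as in the statement, $T^\top(b_t-B_te_M)=\prn{b_t^{(\alpha)}-b_t^{(M)}\onemat_{M-1}}-\prn{d_t-f_t\onemat_{M-1}}=\tilde b_t$. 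Hence the pushforward of $g_t$ is $\calN(\tilde B_t^{-1}\tilde b_t,\tilde B_t^{-1})$ conditioned on the region in the repeat-loop condition of Algorithm~\ref{alg:sample_p_exact}. Since that algorithm draws exactly from this conditional law and then returns $[\psub^\top\x 1-\onemat_{M-1}^\top\psub]^\top=e_M+T\psub$, its output follows $g_t$, which is the claim.

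The content here is entirely bookkeeping rather than a substantive difficulty; the two places to be careful are (i) not dropping the cross term $2e_M^\top B_tT\psub$ produced by the constant offset $e_M$ of the affine map, which is exactly what supplies the $-d_t$ and $b_t^{(M)}\onemat_{M-1}$ contributions to $\tilde b_t$, and (ii) the symmetrization $-d_t\onemat_{M-1}^\top-\onemat_{M-1}d_t^\top=-2D_t$ arising in $T^\top B_tT$, which is where $D_t=\tfrac12\prn{d_t\onemat_{M-1}^\top+\onemat_{M-1}d_t^\top}$ enters. The positive-definiteness of $\tilde B_t$ comes for free from the identity $\tilde B_t=T^\top B_tT$, so the $(M-1)$-dimensional Gaussian is well-defined.
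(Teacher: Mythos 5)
Your proof is correct and follows essentially the same route as the paper: both eliminate $p_M$ via the simplex constraint, rewrite the quadratic form $p^\top B_t p - 2b_t^\top p$ in terms of $\psub$, and read off $\tilde B_t$ and $\tilde b_t$, with your $T^\top B_t T$ and $T^\top(b_t - B_t e_M)$ computations reproducing exactly the paper's term-by-term expansion (including the symmetrization that produces $D_t$) and your closing remark on the repeat-loop matching the paper's treatment of the truncation to $\calP_{M-1}$. The only difference is presentational: you package the substitution as the affine map $p = e_M + T\psub$, which makes the bookkeeping more compact, whereas the paper expands the blocks by hand.
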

We give the proof of this proposition
for self-containedness in Appendix~\ref{sec:proof_of_sample_p_exact}.

\section{Theoretical Analysis}
\label{sec:theory}
This section considers a regret upper bound of the TSPM algorithm.

In the theoretical analysis, we consider a \emph{linear} setting of PM.
In the linear PM,
the learner suffers the expected loss $L_{i(t)}^\top p^*$ as in the discrete setting,
and
receives feedback vector
$y(t) = S_i p^* + \epsilon_t$ for $\epsilon_t\sim\calN(0, \idmat_M)$
whereas the one-hot representation of $y(t)$ is distributed by the probability vector $S_ip^*$ in the discrete setting.
Therefore, if $\epsilon_t$ can be regarded as a sub-Gaussian random variable
as in~\citet{Kirschner20ids}
then the linear PM includes the discrete PM,
though our theoretical analysis requires $\epsilon_t$ to be Gaussian.
The relation between discrete and linear settings can also be seen from the observation that
bandit problems with Bernoulli and Gaussian rewards can be expressed as discrete and linear PM,
respectively.
The linear PM also includes the linear bandit problem, where
the feedback vector is expressed as $L_i^{\top}p^*+\epsilon_t$.

In the linear PM,
$G_t(p)$ in~\eqref{g_unnormalized} becomes the exact posterior distribution rather than a proposal distribution.
The definition of the cell decomposition for this setting is 
largely the same as that of discrete setting
and detailed in Appendix~\ref{sec:proof_of_regret_upper_bound}.
Therefore, TS with exact posterior sampling in the linear PM corresponds to TSPM-Gaussian.
In the linear PM, the unknown parameter $p^*$ is in $\R^M$ rather than in $\calP_M$, and therefore we consider the prior $\pipr{p}=\calN(0,\lambda I_M)$ over $\R^M$, where the posterior distribution becomes $\calN(B_t^{-1}b_t\x B_t^{-1})$.

There are a few works that analyze TS for the PM because of its difficulty.
For example in~\citet{Vanchinathan14BPM}, an analysis of the TS-based algorithm (BPM-TS) is not given 
despite the fact that its performance is better than the algorithm based on a confidence ellipsoid (BPM-LEAST).
\citet{Zimmert19mirror} considered the theoretical aspect of a variant of TS for the linear PM in view of the Bayes regret, but this algorithm is based on the knowledge on the time horizon and different from the family of TS used in practice. 
More specifically, their algorithm considers the posterior distribution for \textit{regret} (not pseudo-regret), and an action is chosen according to the posterior probability that each arm minimizes the \textit{cumulative} regret.
Thus, the time horizon also needs to be known.

\noindent \textbf{Types of Regret Bounds.}
We focus on the \emph{(a) problem-dependent (b) expected pseudo-regret}.
(a)~
In the literature, a \emph{minimax} (or \emph{problem-independent}) regret bound has mainly been considered, for example, to classify difficulties of the PM problem~\citep{Bartok10toward, Bartok11minimax}.
On the other hand, a \emph{problem-dependent} regret bound
often reflects the empirical performance more clearly than the minimax regret
\citep{Bartok12CBP, Vanchinathan14BPM, Komiyama15PMDEMD}.
For this reason,
we consider this problem-dependent regret bound.
(b)~
In complicated settings of bandit problems,
a \emph{high-probability regret bound} has mainly been considered~\citep{Abbasi11improved, Agrawal13payoff},
which bounds the pseudo-regret with high probability $1-\delta$.
Though such a bound can be transformed to an expected regret bound,
this type of analysis often sacrifices the tightness since a linear regret might be suffered with small probability $\delta$.
This is why the analysis in~\citet{Vanchinathan14BPM} for BPM-LEAST finally yielded an $\tilOrder(\sqrt{T})$ expected regret bound
whereas their high-probability bound is $\Order(\log T)$.

\subsection{Regret Upper Bound}
In the following theorem, we show that
logarithmic problem-dependent expected regret is achievable by the TSPM-Gaussian algorithm.
\begin{theorem}[Regret upper bound]
    \label{thm:regret_upper_bound}
    Consider any finite stochastic linear partial monitoring game.
    Assume that the game is strongly locally observable and
    $\Delta_{i} = (L_i - L_1)^\top p^* > 0$ for any $i \neq 1$.
    Then, the regret of TSPM-Gaussian satisfies for sufficiently large $T$ that
    \begin{align}
        \Expect{\Regret(T)} 
        = 
        \Order
        \prn[\bigg]{
                \frac{A N^2 M \max_{i\in[N]} \Delta_i}{\Lambda^2}
        \log T } \com
    \label{thm_bound}
    \end{align}
    where
    $\Lambda \coloneqq \min_{i \neq 1}\Lambda_i$ for
    $\Lambda_i=\Delta_{i}/{\nrm{z_{1,i}}}$ with
    $z_{1,i}$ defined after Definition~\ref{def_strong}.
\end{theorem}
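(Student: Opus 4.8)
Here is the strategy I would follow.

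The plan is to bound $\E[N_i(T+1)]$ for each suboptimal action $i$ and then sum $\E[\Regret(T)]=\sum_{i\neq1}\Delta_i\,\E[N_i(T+1)]$, using the elementary identity $\Delta_i/\Lambda_i^2=\|z_{1,i}\|^2/\Delta_i\le\max_j\Delta_j/\Lambda^2$ (because $\Lambda\le\Lambda_i$) so that an $\Order(N\cdot AM\log T/\Lambda_i^2)$ bound per action yields the stated $\Order(AN^2M\max_i\Delta_i\cdot\log T/\Lambda^2)$. The starting point is a geometric reduction: if $i(t)=i$ then the sampled parameter $\ptilt$ lies in the cell $\calC_i=\{p:(L_i-L_j)^\top p\le0\ \forall j\}$ (now a subset of $\R^M$), so in particular $(L_i-L_1)^\top\ptilt\le0$; since $(L_i-L_1)^\top p^*=\Delta_i>0$ this forces $(L_i-L_1)^\top(p^*-\ptilt)\ge\Delta_i$, which I split as $(L_i-L_1)^\top(p^*-\phatt)+(L_i-L_1)^\top(\phatt-\ptilt)$ with $\phatt=B_t^{-1}b_t$. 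Both pieces are governed by the posterior uncertainty $\|L_i-L_1\|_{B_t^{-1}}$ in the direction $L_i-L_1$, and strong local observability is precisely what makes this uncertainty shrink: writing $L_i-L_1=(S_i^\top,S_1^\top)z_{1,i}$, using $B_t\succeq\lambda\idmat_M+n_kS_k^\top S_k$ for $k\in\{i,1\}$ together with the spectral inequality $S_k(\lambda\idmat_M+n_kS_k^\top S_k)^{-1}S_k^\top\preceq\frac1{n_k}\idmat_A$ (from the SVD of $S_k$), one gets $\|L_i-L_1\|_{B_t^{-1}}\le\|z_{1,i}\|\bigl(n_i^{-1/2}+n_1^{-1/2}\bigr)$. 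Hence, once both $n_i$ and $n_1$ exceed $\kappa_i=\Order(AM\log T/\Lambda_i^2)$, this direction carries negligible uncertainty.

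Next I introduce a high-probability accuracy event $\calG=\bigcap_{t\le T}\{\|\phatt-p^*\|_{B_t}\le\beta_t\}$, where $\beta_t$ is a dimension-dependent, polylogarithmic radius obtained from a self-normalized martingale bound on $\sum_{s}S_{i(s)}^\top\epsilon_s$ in the $B_t^{-1}$-norm plus the prior-bias term $\lambda B_t^{-1}p^*$; a union bound gives $\P(\calG^c)=\Order(1/T)$, and on $\calG$ one has $|(L_i-L_1)^\top(p^*-\phatt)|\le\beta_t\|L_i-L_1\|_{B_t^{-1}}$. I then split
\[
N_i(T+1)=\sum_{t}\ind{i(t)=i,\,n_i\ge\kappa_i,\,n_1\ge\kappa_i}+\sum_{t}\ind{i(t)=i,\,n_i<\kappa_i}+\sum_{t}\ind{i(t)=i,\,n_1<\kappa_i}.
\]
The second sum is deterministically at most $\kappa_i$, since each contributing round increments $n_i$. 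For the first sum, on $\calG$ the uncertainty bound from the previous paragraph makes $(L_i-L_1)^\top(p^*-\phatt)\le\Delta_i/2$, so $i(t)=i$ forces the fresh Gaussian increment $(L_i-L_1)^\top(\phatt-\ptilt)$ — conditionally $\calN$ with variance $\|L_i-L_1\|_{B_t^{-1}}^2$ — to exceed $\Delta_i/2$, an event of conditional probability at most $\exp(-\Omega(\Delta_i^2/\|L_i-L_1\|_{B_t^{-1}}^2))\le T^{-2}$; summing over $t$ and adding $\P(\calG^c)$, this sum contributes $\Order(1)$ in expectation.

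The remaining term $\sum_{t}\ind{i(t)=i,\,n_1<\kappa_i}$ — the number of rounds at which a suboptimal action is taken before the optimal action $1$ has itself been played $\kappa_i$ times — is the crux, and is where the familiar circularity of Thompson-sampling analyses appears: the uncertainty bound is useful only once $n_1$ is large, yet lower-bounding $n_1$ requires control of the sampling distribution. I would resolve this by an Agrawal--Goyal-style charging argument: for each fixed value $j<\kappa_i$ of $n_1$, the number of rounds elapsing before the next play of action $1$ has conditional expectation at most $1/p_{1,t}$, where $p_{1,t}=\P(\ptilt\in\calC_1\mid\calF_{t-1})$ is the mass that the Gaussian posterior $\calN(\phatt,B_t^{-1})$ assigns to the cell $\calC_1$; since the assumption $\Delta_i>0$ for all $i\neq1$ places $p^*$ in the interior of the full-dimensional polytope $\calC_1$, an anti-concentration estimate for this Gaussian (valid on $\calG$, exploiting that $B_t$ stays bounded so the posterior is not over-concentrated) bounds $\sum_{j<\kappa_i}\E[1/p_{1,t}]$ by $\Order(\kappa_i)$ up to factors of $N$ and $\log T$. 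Carrying out this anti-concentration estimate in the polytopal geometry of $\calC_1$, rather than through the scalar comparisons available in the multi-armed case, is the step I expect to demand the most care.

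Collecting the three sums gives $\E[N_i(T+1)]=\Order(N\cdot AM\log T/\Lambda_i^2)$, the $AM$ factor gathering the dimension-dependent concentration and union-bound terms and the extra $N$ coming from the charging argument; multiplying by $\Delta_i$, using $\Delta_i/\Lambda_i^2\le\max_j\Delta_j/\Lambda^2$, and summing over the at most $N-1$ suboptimal actions yields $\E[\Regret(T)]=\Order(AN^2M\max_i\Delta_i\cdot\log T/\Lambda^2)$ for $T$ sufficiently large, as claimed.
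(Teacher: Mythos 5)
Your overall architecture---reduce to bounding $\E[N_i(T+1)]$ per suboptimal action, split rounds according to whether $n_i$ and $n_1$ have reached a threshold $\kappa_i=\Order(AM\log T/\Lambda_i^2)$, kill the late rounds by Gaussian concentration of the fresh sample, and handle the early rounds by charging against the probability of playing action $1$---is broadly parallel to the paper's decomposition into cases (A)--(E), and several of your ingredients are sound and have counterparts in the appendix: the bound $\nrm{L_i-L_1}_{B_t^{-1}}\le\nrm{z_{1,i}}\,(n_i^{-1/2}+n_1^{-1/2})$ is the same use of strong local observability as Lemma~\ref{lem:loc_prop} and Theorem~\ref{thm_delta}, your self-normalized control of $\hat p_t$ corresponds to Lemma~\ref{lem:martingale}, and the $T^{-2}$ tail for the sampled parameter corresponds to Lemma~\ref{lem:Mahalanobis_concentration}.

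The gap is at the step you yourself flag as the crux: the lower bound on $p_{1,t}=\pr{\ptilt\in\calC_1 \mid \calF_t}$ that your charging argument needs. Your justification---anti-concentration of the posterior on $\calC_1$, ``valid on $\calG$, exploiting that $B_t$ stays bounded so the posterior is not over-concentrated''---does not hold: $B_t$ grows linearly in $t$ along explored directions, so the posterior \emph{does} over-concentrate, around $\phatt$; and the event $\calG$ controls $\phatt-p^*$ only in the $B_t$-norm, so in poorly explored directions $\phatt$ can sit at Euclidean distance $\Theta(1)$ from $p^*$, possibly deep inside a wrong cell $\calC_k$. In that regime $p_{1,t}$ is exponentially small in $N_k(t)$, the charge $\sum_t 1/p_{1,t}$ is not summable, and nothing in your event $\calG$ rules this out. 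This is precisely the situation the paper's cases (C) and (E) are built to handle: Theorem~\ref{thm_delta} shows that strong local observability forces a quantified separation $\nrm{S_1(\phatt-p^*)}\ge\delta>0$ whenever $\phatt$ lies near a wrong cell yet is close to $p^*$ in the $S_k$ directions; this gap is fed into the supermartingale $\exp(\nrm{\phatt-p^*}_{B_t}^2/2)/\sqrt{|B_t|}$ of Lemma~\ref{lem:martingale} to show such rounds contribute only $\Order(1)$ in expectation; and the stopping-time bound of Theorem~\ref{thm_stop} is applied with the always-valid lower bound $\pr{\ptilt\in\calC_{\bar i_t,t}\mid\calF_t}\ge p_0/N$ on the \emph{most likely} cell near the posterior mode, rather than on $\calC_1$ itself. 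Without a substitute for this machinery your third sum is not controlled, so the proposal as written does not close.
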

\begin{remark}
In the proof of Theorem~\ref{thm:regret_upper_bound},
it is sufficient to assume that
    $L_1 - L_i \in \Img S_1^\top \oplus \Img S_i^\top$ for $i \in [N]$,
which is weaker than the strong local observability,
though it is still sometimes stronger than the local observability condition.
\end{remark}
The proof of Theorem~\ref{thm:regret_upper_bound} is given in Appendix~\ref{sec:proof_of_regret_upper_bound}.
This result is the first problem-dependent bound of TS for PM, which also becomes the first logarithmic regret bound of TS for linear bandits.

The norm of $z_{j,k}$ in $\Lambda$ intuitively indicates the difficulty of the problem. 
Whereas we can estimate $(S_jp, S_k p)$ with noise through taking actions $j$ and $k$, the actual interest is the gap of the losses $p^\top(L_j - L_k ) = (S_jp, S_k p)^\top z_{j,k}$. Thus, if $\Vert z_{j,k}\Vert$ is large, the gap estimation becomes difficult since the noise is enhanced through $z_{j,k}$.

Unfortunately, the derived bound in Theorem~\ref{thm:regret_upper_bound} has quadratic dependence on $N$,
which seems to be not tight.
This quadratic dependence comes from the difficulty of the \emph{expected} regret analysis.
In general, we evaluate the regret before and after the convergence of the statistics separately.
Whereas the latter one usually becomes dominant, the main difficulty comes from the analysis of the former one, which might become large with low probability~\citep{Agrawal12analysis, Kaufmann12finitetime, Agrawal13further}.

In our analysis, we were not able to bound the former one within
a non-dominant order,
though it is still logarithmic in $T$.
In fact,
our analysis shows that
the regret after convergence is 
$\Order( \sum_{i\neq1} \Delta_i \frac{A}{\Lambda^2} \log T)$
as shown in Lemma~\ref{lem:case_A} in Appendix~\ref{sec:proof_of_regret_upper_bound}, 
which will become the regret
with high probability.
In particular, if we consider the classic bandit problem as a PM game, we can confirm that the derived bound after convergence becomes
the best possible bound
\begin{align}
    \Order\prn[\bigg]{\sum_{i\neq 1} \frac{\log T}{\Delta_i} }\n
\end{align}
by considering $\Lambda_i$ depending on each suboptimal arm $i$ as the difficulty measure instead of $\Lambda$.
Still, deriving a regret bound for the term before convergence within an non-dominant order is an important future work.

\subsection{Technical Difficulties of the Analysis}
The main difficulty of this regret analysis is that PM requires to consider the statistics of \emph{all} actions when the number of selections $N_i(t)$ of some action $i$ is evaluated.
This is in stark contrast to the analysis of the classic bandit problems, 
where it becomes sufficient to evaluate statistics of action $i$ and the best action 1.
This makes the analysis remarkably complicated in TS, 
where we need to separately consider the randomness caused by the feedback and TS.

To overcome this difficulty, we handle the effect of actions of no interest in two different novel ways depending on each decomposed regret.
The first one is to evaluate the worst-case effect of these actions based on an argument (Lemma~\ref{lem:expectation_Z}) related to the law of the iterated logarithm (LIL), 
which is sometimes used in the best-arm identification literature to improve the performance~\citep{Jamieson14lil}.
The second one is to bound the action-selection probability of TS using an argument of (super-)martingale (Theorem~\ref{thm_stop}), which is of independent interest.
Whereas such a technique is often used for the construction of confidence bounds~\citep{Abbasi11improved}, we reveal that it is also useful for evaluation of the regret of TS.

We only focused on the Gaussian noise $\ep_t \sim \calN(0, I_M)$,
rather than the more general sub-Gaussian noise.
This restriction to the Gaussian noise comes from the essential difficulty of the problem-dependent analysis of TS, 
where lower bounds for some probabilities are needed whereas the sub-Gaussian assumption is suited for obtaining upper bounds.
To the best of our knowledge, 
the problem-dependent regret analysis for TS on the sub-Gaussian case has never been investigated even for the multi-armed bandit setting,
which is quite simple compared to that of PM.
In the literature of the problem-dependent regret analysis, 
the noise distribution is restricted to 
distributions with explicitly given forms, e.g., Bernoulli, Gaussian, or more generally a one-dimensional canonical exponential family~\citep{Kaufmann12finitetime, Agrawal13further, Korda13onedim}.
Their analysis relies on the specific characteristic of the distribution 
to bound the problem-dependent regret.
\section{Experiments}
\label{sec:experiments}

In this section, we numerically compare the performance of TSPM and TSPM-Gaussian
against existing methods, which are RandomPM (the algorithm which selects action randomly), FeedExp3~\citep{Piccolboni01FeedExp3}, and BPM-TS~\citep{Vanchinathan14BPM}.
Recently,~\citet{Lattimore19information} considered
the sampling-based algorithm called Mario sampling for easy games.
Mario sampling coincides with TS (except for the difference between pseudo-regret and regret with known time horizon) mentioned in the last section when any pair of actions is a neighbor.
As shown in Appendix~\ref{sec:prop_dp_easy},
this property is indeed satisfied for dp-easy games defined in the following.
Therefore, the performance is essentially the same between TSPM with $R=1$ and Mario sampling.
To compare the performance, we consider a dynamic pricing problem,
which is a typical example of PM games.
We conducted experiments on the discrete setting because 
the experiments for PM has been mainly focused on the discrete setting.

In the dynamic pricing game, the player corresponds to a seller, and the opponent corresponds to a buyer.
At each round, the seller sells an item for a specific price $i(t)$, and the buyer comes
with an evaluation price $j(t)$ for the item,
where the selling price and the evaluation price correspond to the action and outcome, respectively.
The buyer buys the item
if the selling price $i(t)$ is smaller than or equal to $j(t)$
and not otherwise.
The seller can
only know if the buyer bought the item (denoted as feedback $0$) or did not buy the item (denoted as $1$).
The seller aims to minimize the cumulative ``loss'', and there are two types of definitions for the loss,
where each induced game falls into the easy and hard games.
We call them \emph{dp-easy} and \emph{dp-hard} games, respectively.

In both cases, the seller incurs the constant loss $c > 0$ when the item is not bought due to the loss of opportunity to sell the item.
In contrast, when the item is not bought, the loss incurred to the seller is different between these settings.
The seller in the dp-easy game \emph{does not} take the buyer's evaluation price into account.
In other words, the seller gains the selling price $i(t)$ as a reward (equivalently incurs $-i(t)$ as a loss).
Therefore, the loss for the selling price $i(t)$ and the evaluation $j(t)$ is
\begin{align*}
    \ell_{i(t), j(t)} = - i(t) \ind{i(t) \leq j(t)} + c \ind{i(t) > j(t)} \per
\end{align*}
This setting can be regarded as a generalized version of the online posted price mechanism, 
which was addressed in, \eg~\citet{Blum04auctions} and~\citet{CesaBianchi06regret},
and an example of strongly locally observable games.

On the other hand, the seller in dp-hard game \emph{does} take the buyer's evaluation price into account when the item is bought.
In other words, the seller incurs the difference between the opponent evaluation and the selling price $j(t) - i(t)$ as a loss
because the seller could have made more profit if the seller had sold at the price $j(t)$.
Therefore, the loss incurred at time $t$ is
\begin{align*}
    \ell_{i(t), j(t)} = (j(t) - i(t)) \ind{i(t) \leq j(t)} + c \ind{i(t) > j(t)} \per
\end{align*}
This setting is also addressed in~\citet{CesaBianchi06regret},
and belongs to the class of hard games.
Note that our algorithm can also be applied to a hard game,
though there is no theoretical guarantee.

\setlength\abovecaptionskip{0.2pt}
\begin{figure}[t!]
    \begin{center}
    \setlength{\subfigwidth}{.32\linewidth}
    \addtolength{\subfigwidth}{-.32\subfigcolsep}
    \begin{minipage}[t]{\subfigwidth}
        \centering
        \subfigure[dp-easy, $N=M=3$]{\includegraphics[scale=0.4]{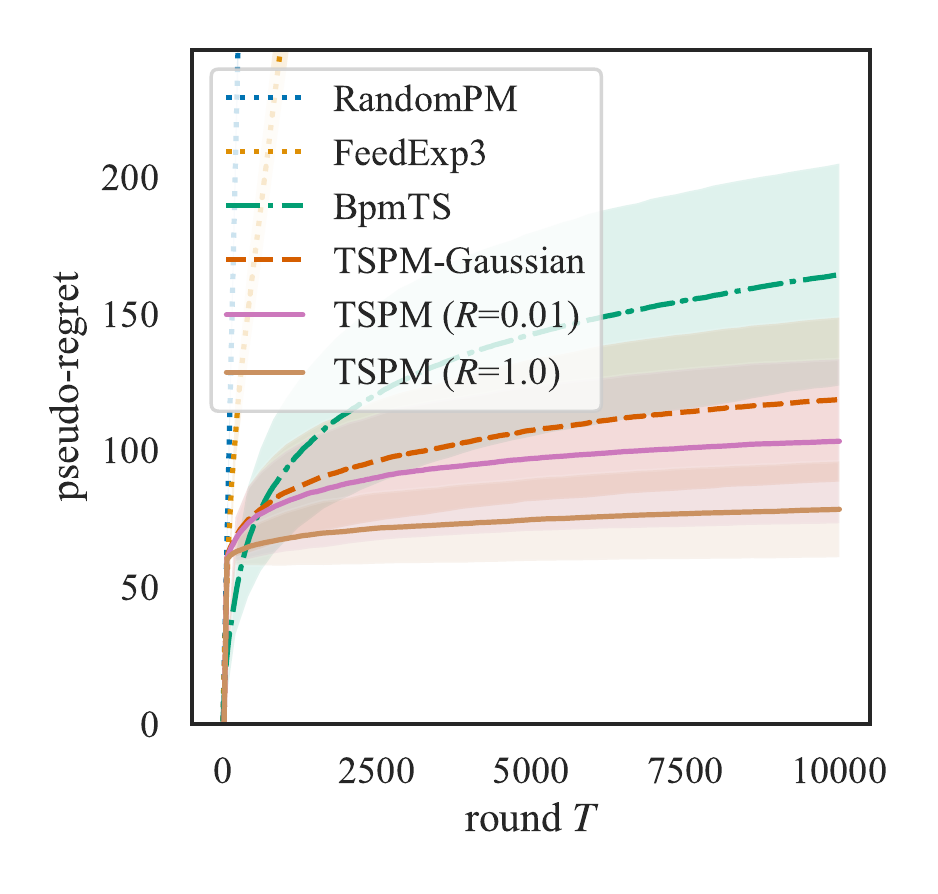}
        }
        \vspace{-0.501em}
    \end{minipage}\hfill
    \begin{minipage}[t]{\subfigwidth}
        \centering
        \subfigure[dp-easy, $N=M=5$]{\includegraphics[scale=0.4]{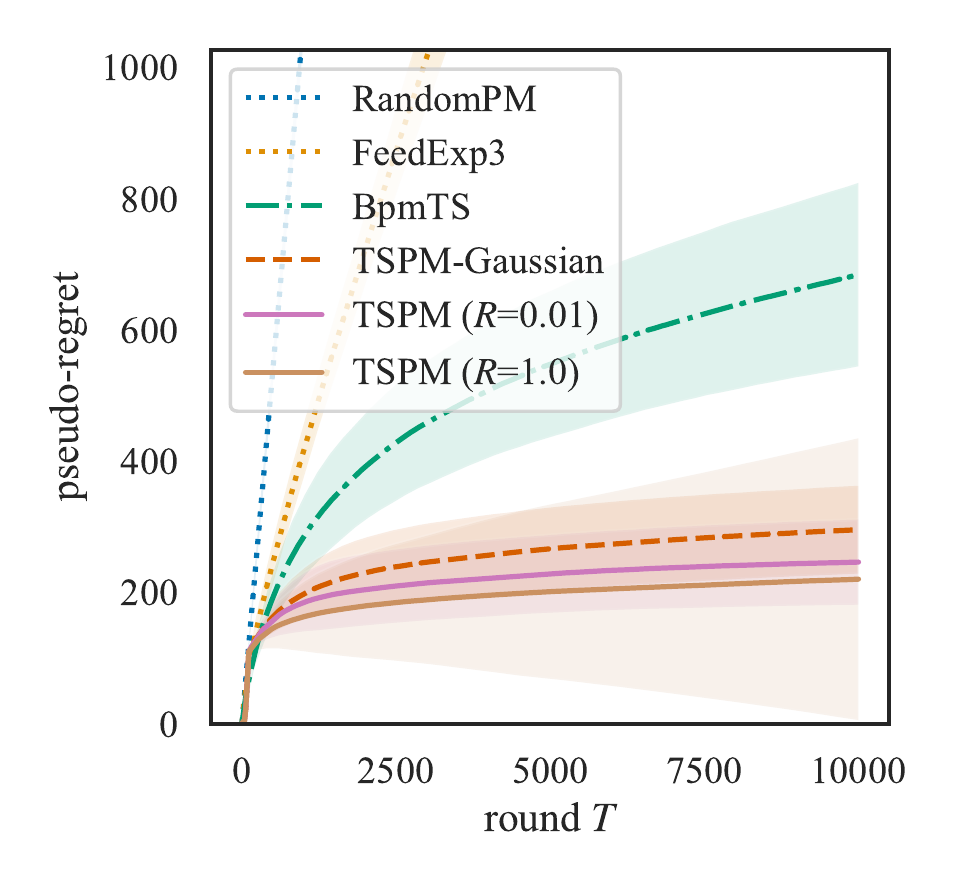}
        }
        \vspace{-0.501em}
    \end{minipage}\hfill
    \begin{minipage}[t]{\subfigwidth}
        \centering
        \subfigure[dp-easy, $N=M=7$]{\includegraphics[scale=0.4]{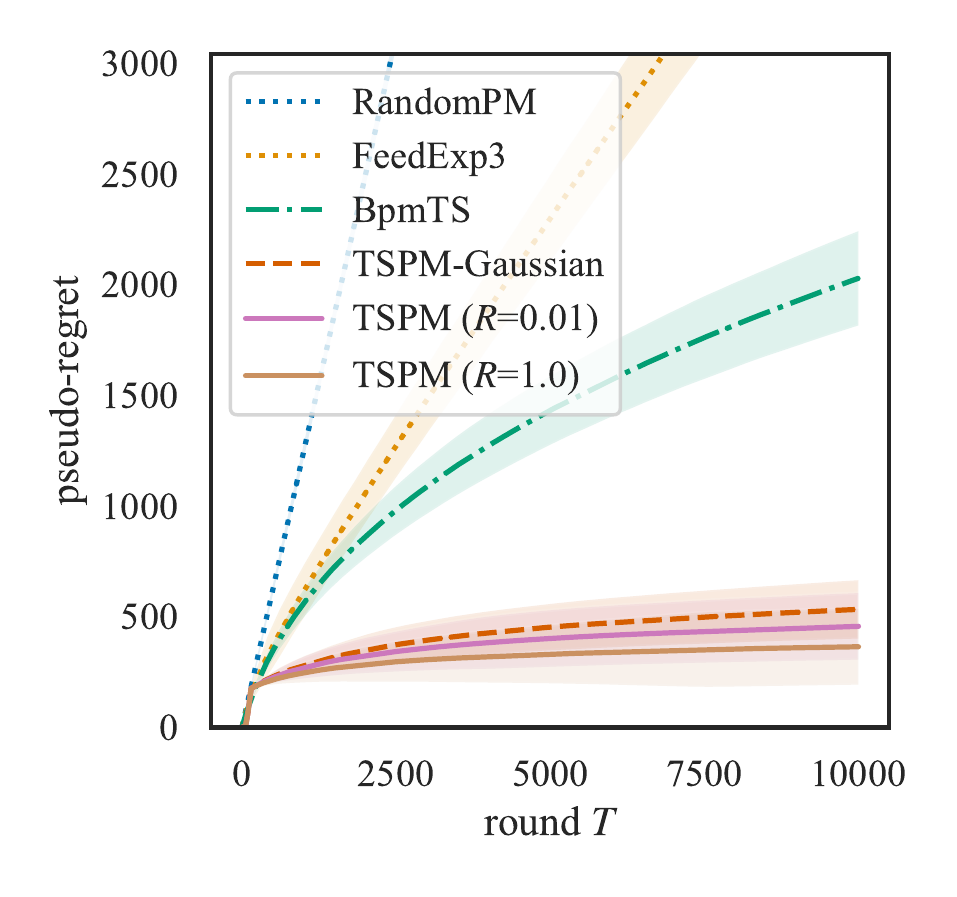}
        }
        \vspace{-0.501em}
    \end{minipage}
    \begin{minipage}[t]{\subfigwidth}
        \centering
        \subfigure[dp-hard, $N=M=3$]{\includegraphics[scale=0.4]{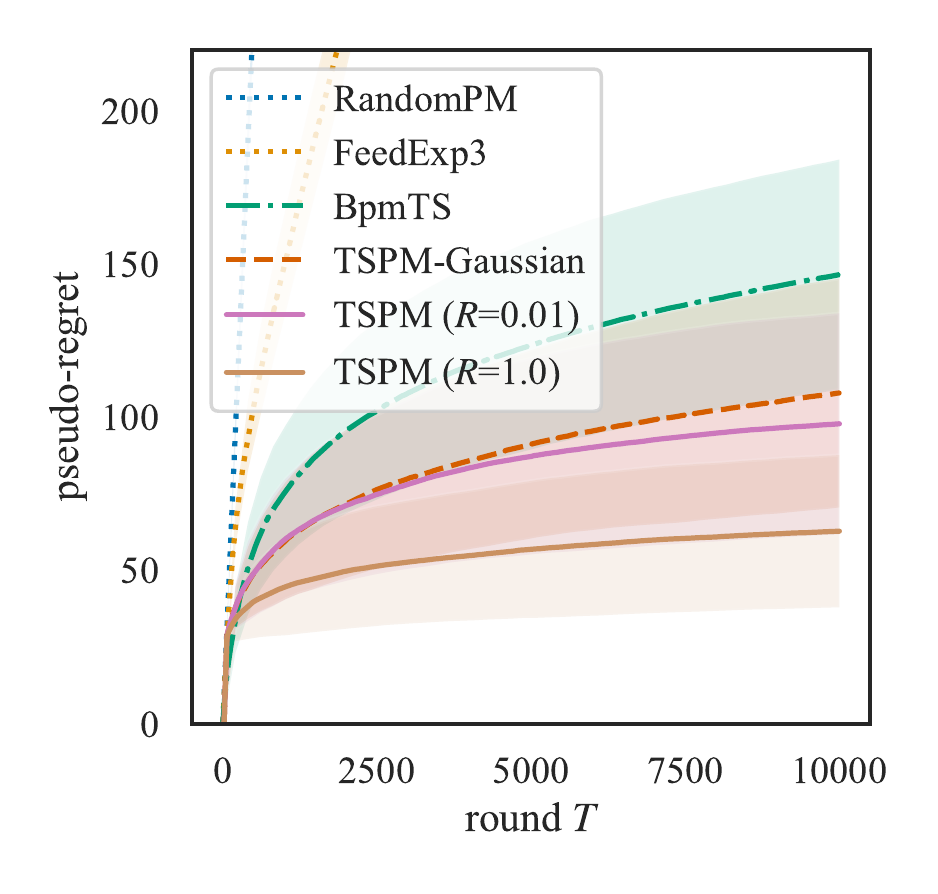}
        }
        \vspace{-0.501em}
    \end{minipage}\hfill
    \begin{minipage}[t]{\subfigwidth}
        \centering
        \subfigure[dp-hard, $N=M=5$]{\includegraphics[scale=0.4]{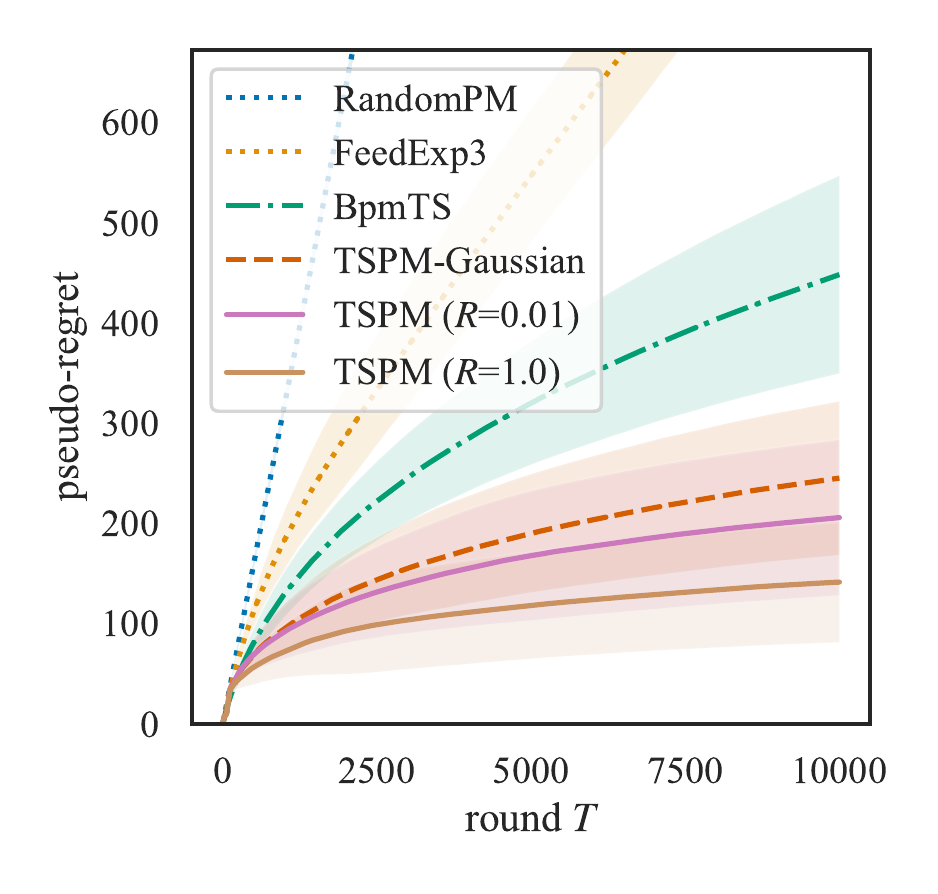}
        }
        \vspace{-0.501em}
    \end{minipage}\hfill
    \begin{minipage}[t]{\subfigwidth}
        \centering
        \subfigure[dp-hard, $N=M=7$]{\includegraphics[scale=0.4]{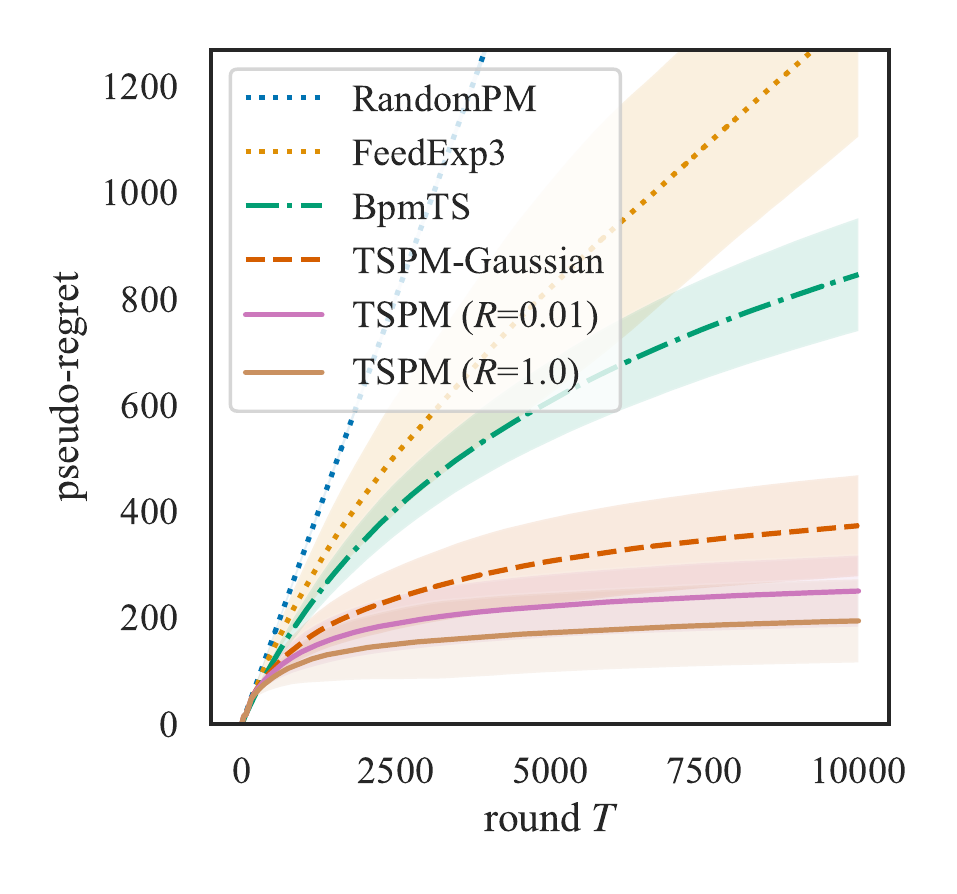}
        }
        \vspace{-0.501em}
    \end{minipage}\hfill
    \end{center}
    \caption{
        Regret-round plots of algorithms.
        The solid lines indicate the average over $100$ independent trials.
        The thin fillings are the standard error.
    }\label{fig:result_regret}
    \vspace{-1.0em}
\end{figure}%

\setlength\abovecaptionskip{0.2pt}
\begin{figure}[t!]
    \begin{center}
    \setlength{\subfigwidth}{.32\linewidth}
    \addtolength{\subfigwidth}{-.32\subfigcolsep}
    \begin{minipage}[t]{\subfigwidth}
        \centering
        \subfigure[dp-easy, $N=M=3$]{\includegraphics[scale=0.4]{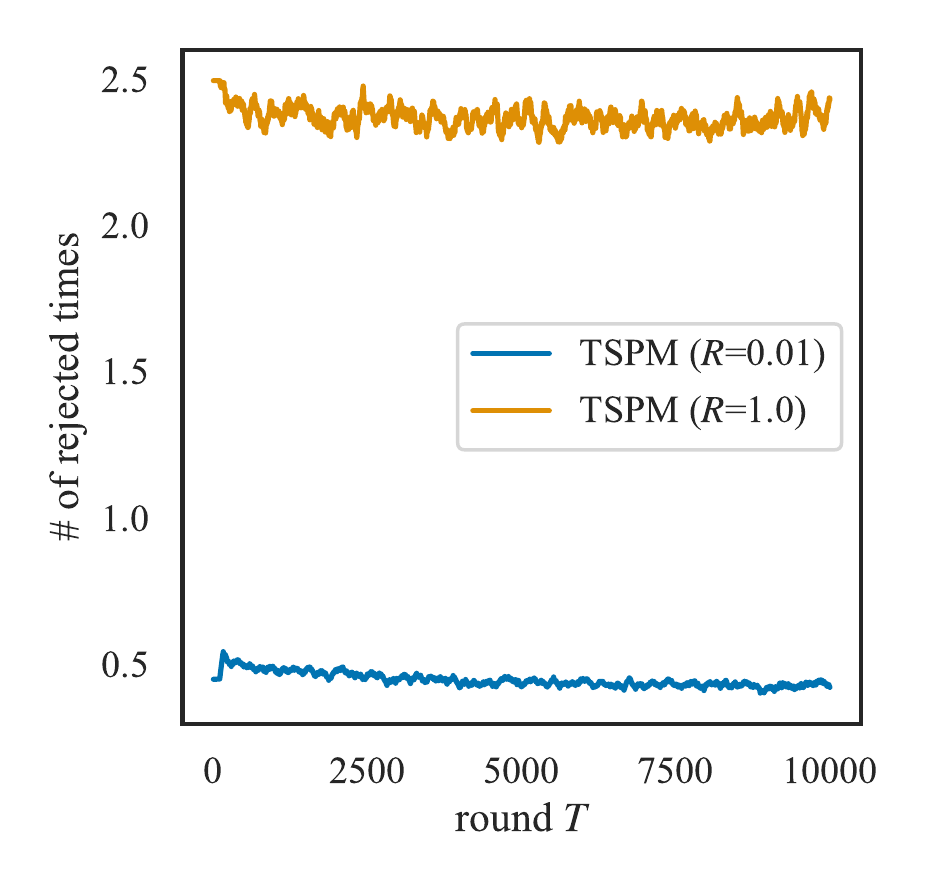}
        }
        \vspace{-0.501em}
    \end{minipage}\hfill
    \begin{minipage}[t]{\subfigwidth}
        \centering
        \subfigure[dp-easy, $N=M=5$]{\includegraphics[scale=0.4]{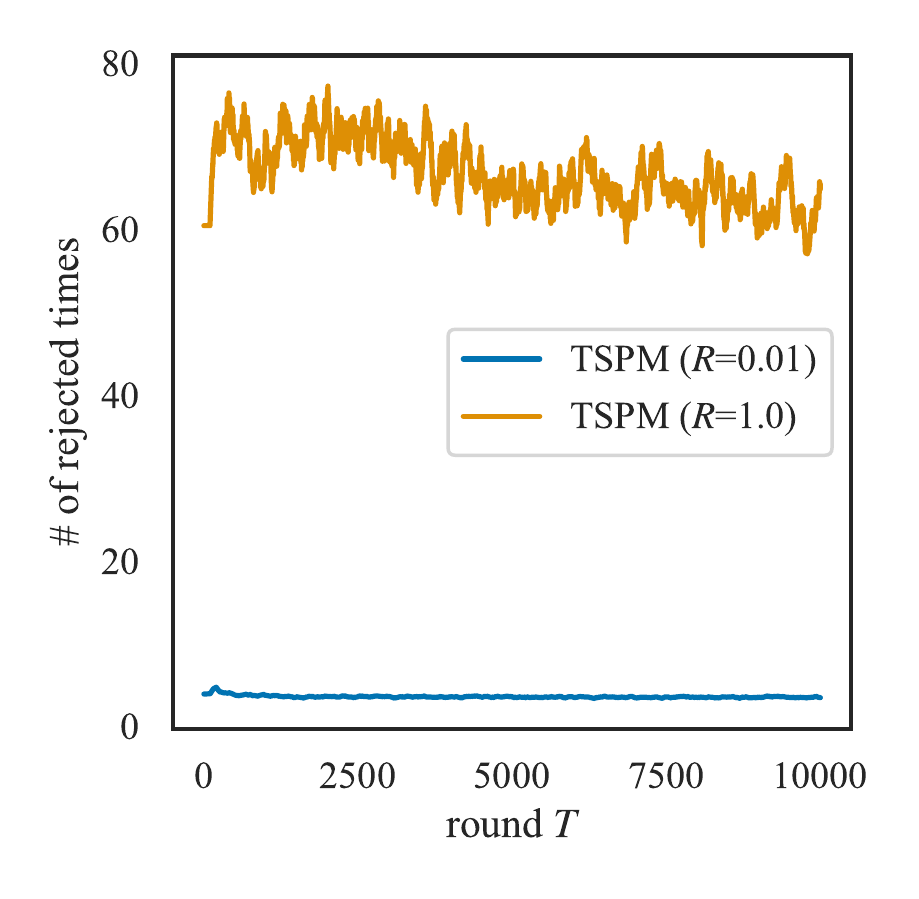}
        }
        \vspace{-0.501em}
    \end{minipage}\hfill
    \begin{minipage}[t]{\subfigwidth}
        \centering
        \subfigure[dp-easy, $N=M=7$]{\includegraphics[scale=0.4]{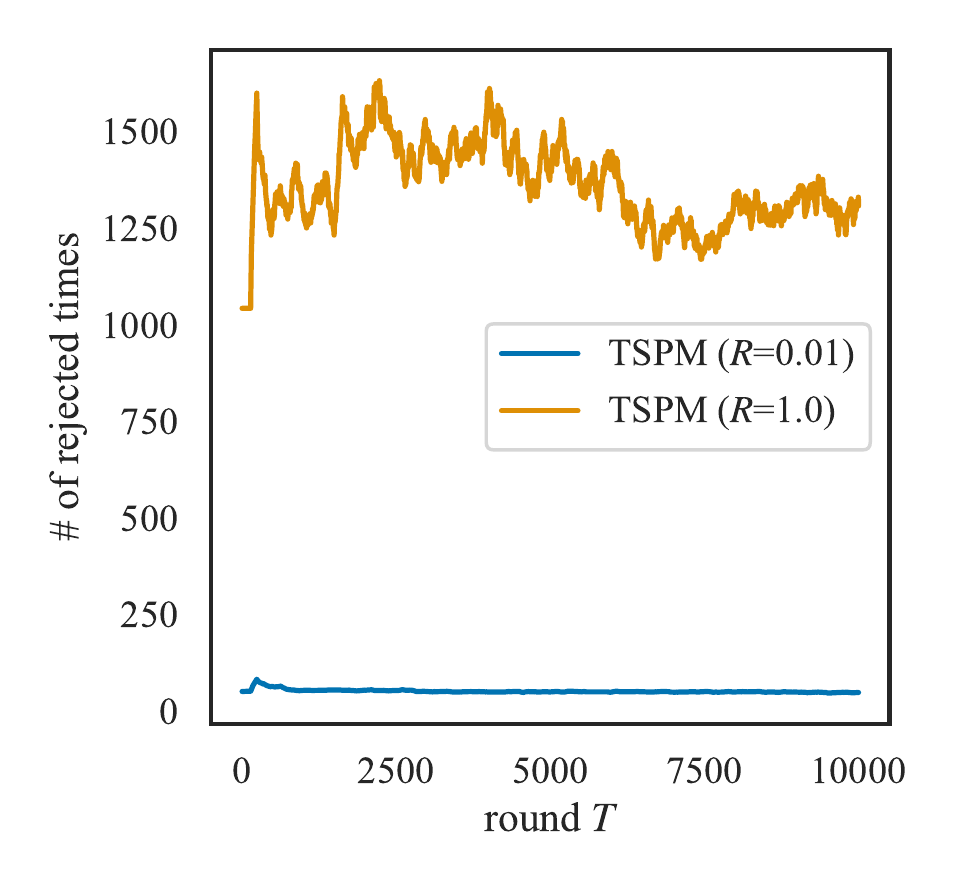}
        }
        \vspace{-0.501em}
    \end{minipage}
    \end{center}
    \caption{
        The number of rejected times by the accept-reject sampling.
        The solid lines indicate the average over $100$ independent trials after taking moving average with window size $100$.
    }\label{fig:result_n_rejected}
    \vspace{-1.0em}
\end{figure}%

\noindent\textbf{Setup.}
In the both dp-easy and dp-hard games, we fixed $N=M \in \ev{3,5,7}$ and $c=2$.
We fixed the time horizon $T$ to $10000$ and simulated $100$ times.
For FeedExp3 and BPM-TS, the setup of hyperparameters follows their original papers.
For TSPM, we set $\lambda=0.001$, and $R$ was selected from $\ev{0.01, 1.0}$.
Here, recall that 
TSPM with $R=1$ and $R=0$ correspond to the exact sampling and TSPM-Gaussian, respectively,
and a smaller value of $R$ gives the higher acceptance probability in the accept-reject sampling.
Therefore, using small $R$ makes the algorithm time-efficient,
although it can worsen the performance since
it over-explores the tail of the posterior distributions.
To stabilize sampling from the proposal distribution in Algorithm~\ref{alg:sample_p_exact}, 
we used an initialization that takes each action $n=10A$ times.
The detailed settings of the experiments with more results
are given in Appendix~\ref{sec:all_experiments}.

\noindent\textbf{Results.}
Figure~\ref{fig:result_regret} is
the empirical comparison of the proposed algorithms against the benchmark methods.
This result shows
that, in all cases, the TSPM with exact sampling gives the best performance.
TSPM-Gaussian also outperforms BPM-TS
even though both of them use Gaussian distributions as posteriors.
Besides, the experimental results suggest that our algorithm
performs reasonably well even for a hard game.
It can be observed that the proposed methods outperform BPM-TS more significantly for a larger number of outcomes. Further discussion for this observation is given in Appendix~\ref{sec:relation_tspm_bpm}.

Figure~\ref{fig:result_n_rejected} shows the number of rejections
at each time step in the accept-reject sampling.
We counted the number of times that
either Line~\ref{line:rejection_sampling:cond} in Algorithm~\ref{alg:rejection_sampling}
or Line~\ref{line:sample_p_exact:cond} in Algorithm~\ref{alg:sample_p_exact} was not satisfied.
In the accept-reject sampling, it is desirable that the frequency of rejection
does not increase as the time-step $t$
and 
does not increase rapidly with the number of outcomes.
We can see that the former one is indeed satisfied.
For the latter property,
the frequency of rejection becomes unfortunately large 
when exact sampling ($R=1$) is conducted.
Still, we can substantially improve this frequency by setting $R$ to be a small value or zero,
which still keeps regret tremendously better than that of BPM with almost the same time-efficiency as BPM-TS.

\section{Conclusion and Discussion}
\label{sec:conclusion}

This paper investigated
Thompson sampling (TS) for stochastic partial monitoring from the algorithmic and theoretical viewpoints.
We provided a new algorithm that enables
exact sampling from the posterior distribution,
and
numerically showed that the proposed algorithm outperforms existing methods.
Besides, we provided an upper bound for the problem-dependent logarithmic expected pseudo-regret 
for the linearized version of the partial monitoring.
To our knowledge, this bound is the first logarithmic problem-dependent expected pseudo-regret bound of
a TS-based algorithm for linear bandit problems and strongly locally observable partial monitoring games.

There are several remaining questions.
As mentioned in Section~\ref{sec:theory},
\citet{Kirschner20ids} considered linear partial monitoring
with the feedback structure $y(t) = S_{i(t)} p^* + \epsilon_t$,
where $(\epsilon_t)_{t=1}^T$ is a sequence of independent sub-Gaussian noise vector in $\R^M$.
This setting is the generalization of our linear setting,
where $(\epsilon_t)_{t=1}^T$ are i.i.d.~Gaussian vectors.
Therefore, a natural question that arises is whether we can extend our analysis on TSPM-Gaussian to the sub-Gaussian case,
although we believe it would be not straightforward as discussed in Section~\ref{sec:theory}.
It is also an important open problem to derive a regret bound on TSPM using the exact posterior sampling for the discrete
partial monitoring.
Although we conjecture that the algorithm also achieves logarithmic regret for the setting,
there still remain some difficulties in the analysis.
In particular,
we have to handle the KL divergence in $f_t(p)$
and consider the restriction of the support of the opponent's strategy to $\calP_M$,
which make the analysis much more complicated.
Besides, it is worth noting that 
the theoretical analysis of TS for hard games has never been theoretically investigated.
We believe that in general TS suffers linear regret in the minimax sense 
due to its greediness.
However,
we conjecture that TS can achieve the sub-linear regret for some specific instances of hard games
in the sense of the problem-dependent regret,
as empirically observed in the experiments.
Finally, it is an important open problem to derive the minimax regret for anytime TS-based algorithms.
This needs more detailed analysis on $\order(\log T)$ terms in the regret bound, which were dropped in our main result.

\section*{Broader Impact}
\noindent \textbf{Application.}
Partial monitoring (PM) includes various online decision-making problems such as multi-armed bandits, linear bandits, dynamic pricing, and label efficient prediction.
Not only can PM handles them, 
the dueling bandits, combinatorial bandits, transductive bandits, and many other problems can be seen as a partial monitoring game,
as discussed in~\cite{Kirschner20ids}.
Therefore, our analysis of Thompson sampling (TS) for PM games pushes the application of TS to a more wide range of online decision-making problems forward.
Moreover, PM has the potential that 
novel online-decision making problems are newly discovered,
where we have to handle the limited feedback in an online fashion.

\noindent \textbf{Practical Use.}
The obvious advantage of using TS is that 
the users can easily apply the algorithm to their problems.
They do not have to solve 
mathematical optimization problems,
which are often required to solve when using non-sampling-based algorithms
~\citep{Bartok12CBP, Komiyama15PMDEMD}.
For the negative side, 
the theoretical analysis for the regret upper bound might make 
the users become overconfident when the users use their algorithms.
For example, they might use the TSPM algorithm to the linear PM game
with heavy-tailed noise, such as sub-exponential noise, without noticing it.
Nevertheless,
this is not an TS-specific problem, 
but one that can be found in many theoretical studies,
and TS is still one of the most promising policies.

\section*{Acknowledgements}
The authors would like to thank the meta-reviewer and reviewers for a lot of helpful comments.
The authors would like to thank
Kento Nozawa and Ikko Yamane for maintaining servers for our experiments,
and Kenny Song for helpful discussion on the writing. 
TT was supported by Toyota-Dwango AI Scholarship,
and RIKEN Junior Research Associate Program for the final part of the project.
JH was supported by KAKENHI 18K17998, 
and MS was supported by KAKENHI 17H00757.


\bibliographystyle{plainnat}

\newpage
\newpage
\appendix

\section{Notation}
\label{sec:notation}
Table~\ref{tb:notation} summarizes the symbols used in this paper.
\begin{table}[h]
\centering
\small
\begin{center}
\captionof{table}{\label{tb:notation} List of symbols used in this paper.}
\begin{tabular}{lll}
    \toprule
    Symbol & Meaning \\
    \hline
    $\calP_n$ & $(n-1)$-dimensional probability simplex \\
    $\nrm{\cdot}$ & Euclidian norm for vector and operator norm for matrix\\
    $\nrm{\cdot}_p$ & $p$-norm \\
    $\nrm{\cdot}_A$ & norm induced by positive semidefinite matrix $A$ \\ 
    $\KL{p}{q}$ & KL divergence from $q$ to $p$ \\
    $B_r^n(p)$ & $n$-dimensional Euclidian ball of radius $r$ at point $p \in \R^N$ \\
    \hline
    $N, M \in \N$ & the number of actions and outcomes \\
    $\Sigma$ & set of feedback symbols \\
    $A$ & the number of feedback symbols \\
    $p^* \in \calP_M$ & opponent's strategy \\
    $T$ & time horizon \\
    $\lossmat = (\ell_{i,j}) \in \R^{N \times M}$ & loss matrix \\
    $\fbmat = (h_{i,j}) \in \Sigma^{N \times M}$ & feedback matrix \\
    $S_i \in \ev{0, 1}^{A \times M} \; (i=1,\dots,N)$ & signal matrix  \\
    $i(t)$ & action taken at time $t$  \\
    $N_i(t)$ & the number of times the action $i$ is taken before time $t\in[T]$  \\
    $j(t)$ & outcome taken by opponent at time $t$ \\
    $y(t)$ & feedback observed at time $t$  \\
    $F_t(p)$ & unnormalized posterior distribution in~\eqref{f_unnormalized} \\ 
    $f_t(p)$ & probability density function corresponding to $F_t(p)$ \\
    $G_t(p)$ & unnormalized proposal distribution for $F_t(p)$ in~\eqref{g_unnormalized} \\
    $g_t(p)$ & probability density function corresponding to $G_t(p)$ \\
    $\qit \in \calP_M$ & empirical feedback distribution of action $i$ by time $t$ \\
    $\qin \in \calP_M$ & empirical feedback distribution of action $i$ after the action is taken $n$ times \\
    $\Ci \subset \calP_M$ & cell of action $i$ \\
    \bottomrule
\end{tabular}
\end{center}
\end{table}


\section{Posterior Distribution and Proposal Distribution in Section~\ref{sec:tspm}}
\label{sec:proof_of_rejection_sampling}
In this appendix, we discuss representation of the posterior distribution
and its relation with the proposal distribution.
\begin{proposition}
    $F_t(p)$ in~\eqref{f_unnormalized} is proportional to the posterior distribution of the opponent's strategy,
    and $F_t(p) \le G_t(p)$ for all $p \in \calP_M$.
\end{proposition}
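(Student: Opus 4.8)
The plan is to treat the two assertions in turn. For the identification of $F_t$ with the posterior, I would invoke Bayes' rule: conditionally on the played actions $\{i(s)\}_{s=1}^t$ and on the strategy $p$, each feedback symbol $y(s)$ is an independent draw from the categorical law $S_{i(s)} p \in \calP_A$ over feedback symbols, so the likelihood of the whole feedback sequence is $\prod_{i=1}^N \prod_{y=1}^A (S_i p)_y^{n_{iy}}$, and the posterior density is proportional to $\pi(p)$ times this product. Writing $n_{iy} = n_i (\qit)_y$ and taking logarithms gives $\sum_{y} n_{iy} \log (S_i p)_y = -n_i \KL{\qit}{S_i p} + n_i \sum_y (\qit)_y \log (\qit)_y$, in which the last sum depends only on the collected data and not on $p$ (with the convention $0\log 0 = 0$). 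Exponentiating, the $p$-independent factors collect into a single positive constant, and the prior times the likelihood equals that constant times $F_t(p)$; this is precisely the first claim.

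For the domination $F_t \le G_t$, note that both densities carry the common nonnegative prior factor $\pi(p)$, so it suffices to compare the products of exponentials factor by factor, i.e.\ to show $n_i \KL{\qit}{S_i p} \ge \tfrac12 n_i \nrm{\qit - S_i p}^2$ for every action $i$ and every $p \in \calP_M$. Since the columns of $S_i$ are standard basis vectors of $\R^A$, the vector $S_i p$ lies in $\calP_A$, and $\qit$ does too, so Pinsker's inequality gives $\KL{\qit}{S_i p} \ge \tfrac12 \nrm{\qit - S_i p}_1^2$; combined with the elementary bound $\nrm{v}_2 \le \nrm{v}_1$ on $\R^A$, this yields $\KL{\qit}{S_i p} \ge \tfrac12 \nrm{\qit - S_i p}_2^2$. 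Multiplying by $n_i \ge 0$, negating, exponentiating, and taking the product over $i$ (the inequality also holding trivially in the degenerate case $\KL{\qit}{S_i p} = +\infty$, where the left-hand factor vanishes) gives $F_t(p) \le G_t(p)$ pointwise on $\calP_M$.

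Neither step is technically hard; the only places that warrant a little care are the bookkeeping of the data-dependent but $p$-independent constants in the first part, and checking that $S_i p \in \calP_A$ so that Pinsker is applicable in the second. As a byproduct, the second assertion shows that $F_t$ is dominated by the integrable $G_t$, so its total mass over $\calP_M$ is finite and the normalized density $f_t$ appearing in Proposition~\ref{prop:rejection_sampling_edited} is well-defined.
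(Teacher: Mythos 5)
Your proposal is correct and follows essentially the same route as the paper's proof: Bayes' rule plus the rewriting of the multinomial likelihood as $\exp\{-n_i \KL{\qit}{S_i p}\}$ up to a $p$-independent constant, followed by Pinsker's inequality and the bound $\nrm{\cdot}_2 \le \nrm{\cdot}_1$. The extra bookkeeping you supply (the explicit entropy term absorbed into the proportionality constant, and the check that $S_i p \in \calP_A$ so Pinsker applies) is exactly what the paper's ``$\propto$'' steps leave implicit.
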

\begin{proof}
The posterior distribution of the opponent's strategy parameter $\pipr{ p \relmiddle| \ev{ i(s), y(s) }_{s=1}^t }$ 
is rewritten as
\begin{align}
    \pipr{ p \relmiddle| \ev{i(s), y(s)}_{s=1}^t }
    &\propto
    \pipr{p,  \ev{i(s), y(s)}_{s=1}^t } \nn
    &\propto
    \pipr{p} \prod_{s=1}^t \pr{y(s) \relmiddle| i(s), p}  \nn
    &=
    \pipr{p} \prod_{i=1}^N  \prod_{y=1}^{A} (S_{i,y} p)^{n_{iy}}  \nn
    &\propto
    \pipr{p} \prod_{i=1}^N \exp \crl[\Big]{ -n_i \KL{\qit}{S_i p} } \com
\end{align}
where $S_{i,y}$ is the $i$-th row of the signal matrix $S_i$,
and
note that $\qit$ is the empirical feedback distribution of action $i$ at time $t$,
that is,
$\qit = [n_{i1}/n_i, \ldots, n_{iA}/n_i]^\top \in \calP_A$
for 
$n_{iy} = \sum_{s=1}^t \ind{i(s) = i, y(s) = y}$ and $ n_i = \sum_{y=1}^{A} n_{iy}$.

Next, we show that $F_t(p) \le G_t(p)$ holds for all $p \in \calP_M$. 
Using the Pinsker's inequality,
the unnormalized posterior distribution $F_t(p)$ can be bounded from above as
\begin{align}
    F_t(p)
    &=
    \pipr{p}
        \prod_{i=1}^N \exp \crl[\Big]{ -n_i \KL{\qit}{S_i p} } \nn
    &\leq
    \pipr{p} \prod_{i=1}^N \exp \crl[\Big]{ - \frac12 n_i \nrm{ \qit - S_i p }_1^2 } 
    \By{Pinsker's ineq.} \nn
    &=
    \pipr{p} \exp \crl[\Big]{ - \frac12 \sum_{i=1}^N n_i \nrm{ \qit - S_i p }_1^2 }  \nn
    &\leq
    \pipr{p}
        \exp \crl[\Big]{ - \frac12 \sum_{i=1}^N n_i \nrm{ \qit - S_i p }^2 } 
    \Bym{\nrm{\qit - S_i p}_1 \geq \nrm{ \qit - S_i p }} \nn
    &=
    G_t(p)  \per
\end{align}
\end{proof}
\begin{remark}
The unnormalized density $G_t(p)$ is indeed Gaussian.
Recalling that $B_t$ and $b_t$ are defined in~\eqref{eq:update_rule} as
\begin{align}
    B_t = \sum_{i=1}^N n_i S_i^\top S_i = \sum_{s=1}^t S_{i(s)}^\top S_{i(s)} = B_{t-1} + S_{i(t)}^\top S_{i(t)},
    \quad
    b_t = \sum_{i=1}^N n_i S_i^\top \qit = b_{t-1} + S_{i(t)}^\top e_{y(t)} \com
\end{align}
we have
\begin{align}
    \sum_{i=1}^N n_i \nrm{ \qit - S_i p }^2
    &=
    \sum_{i=1}^N n_i (\qit - S_i p)^\top (\qit - S_i p)  \nn
    &=
    p^\top \underbrace{  \prn[\Big]{ \sum_{i=1}^N n_i S_i^\top S_i } }_{B_t}  p,
    -
    2 \underbrace{ \prn[\Big]{\sum_{i=1}^N n_i S_i^\top \qit }^\top }_{b_t}  p
    +
    \underbrace{\sum_{i=1}^N n_i \nrm{ \qit }^2}_{c_t}  \nn
    &=
    p^\top B_t p - 2b_t^\top p + c_t  \nn
    &=
    (p - B_t^{-1} b_t)^\top B_t (p - B_t^{-1}b_t) + c_t - b_t^\top B_t^{-1} b_t \per
\end{align}
Therefore, we have
\begin{align}
    \exp \crl[\Big]{ - \frac12 \sum_{i=1}^N n_i \nrm{ \qit - S_i p }^2 } 
    &\propto
    \exp \crl[\Big]{- \frac12 (p - B_t^{-1} b_t)^\top B_t (p - B_t^{-1}b_t) }  \per
\end{align}

\end{remark}

\section{Proof of Proposition~\ref{prop:sample_p_exact}}
\label{sec:proof_of_sample_p_exact}
We will see that the the procedure 
sampling $\ptilt$ from $g_t(p)$
and Algorithm~\ref{alg:sample_p_exact} are equivalent.
First, we derive the Gaussian density of $g_t(p)$ projected onto $\{p\in\R^M : \sum_{i=1}^M p_i = 1\}$.

For simplicity, we omit the subscript $t$ and write, \eg $B$ instead of $B_t$.
We define
$p = [\psub^\top, p_M]^\top \in \R^{M-1} \times \R$. 
Let $h = B^{-1} b$, and define
$h = [\hsub^\top, h_M]^\top \in \R^{M-1} \times \R$. 
Let
$ B =
\begin{bmatrix}
    C & d \\
    d^\top & f \\
\end{bmatrix}
$, where $C \in \R^{{M-1}\times{M-1}}, d \in \R^{M-1}$, and $f \in \R$.
Also, let $b = [\bsub^\top, b^{(M)}]^\top \in \R^{M-1} \times \R$. 

Using the decomposition
\begin{align}
    (p - B^{-1} b)^\top B (p - B^{-1}b)
    &=
    \underbrace{p^\top Bp}_{\text{(a)}}
    -
    2 \underbrace{h^\top B p}_{\text{(b)}}
    +
    h^\top B h \com
\end{align}
we rewrite each term by restricting the domain of $p$ so that
it satisfies the condition $\sum_{i=1}^M p_i = 1$.
Now the first term (a) is rewritten as
\begin{align}
    \text{(a)}
    &=
    \psub^\top C \psub + 2 \psub^\top d p_M + f p_M^2  \nn
    &=
    \psub^\top C \psub
    +
    2 \underbrace{\psub^\top d \prn[\Big]{1 - \sum_{i=1}^{M-1} p_i }}_{\text{(a1)}}
    +
    f \underbrace{\prn[\Big]{1 - \sum_{i=1}^{M-1} p_i}^2}_{\text{(a2)}} \per
\end{align}
The term (a1) is rewritten as
\begin{align}
    \text{(a1)}
    &=
    \psub^\top d - \psub^\top d \sum_{i=1}^{M-1} p_i  \nn
    &=
    \psub^\top d - \psub^\top d \onemat_{M-1}^\top \psub  \nn
    &=
    \psub^\top d - \psub^\top D \psub \quad \left(D = \frac{1}{2} \left(d \onemat_{M-1}^\top + \onemat_{M-1} d^\top \right) \right) \com
\end{align}
and the term (a2) is rewritten as
\begin{align}
    \text{(a2)}
    &=
    \prn[\Big]{1 - \sum_{i=1}^{M-1} p_i }^2  \nn
    &=
    1 - 2 \sum_{i=1}^{M-1} p_i + \prn[\Big]{\sum_{i=1}^{M-1} p_i}^2  \nn
    &=
    1 - 2 \onemat_{M-1}^\top \psub + \psub^\top \onemat_{M-1} \onemat_{M-1}^\top \psub \per
\end{align}
Therefore,
\begin{align}
    \text{(a)}
    &=
    \psub^\top \underbrace{(C - 2D + f \onemat_{M-1} \onemat_{M-1}^\top)}_{\tilde{B}} \psub 
    - 
    2 (f \onemat_{M-1} - d)^\top \psub + f \per
\end{align}
With regard to the term (b), we have
\begin{align}
    \text{(b)}
    &=
    b^\top p  \nn
    &=
    \bsub^\top \psub^\top + b^{(M)} p_M  \nn
    &=
    (\bsub - b^{(M)} \onemat_{M-1})^\top \psub + b^{(M)} \per
\end{align}
Therefore,
\begin{align}
    &
    (p - B^{-1} b)^\top B (p - B^{-1}b)  \nn
    &=
    \psub^\top \tilde{B} \psub
    - 2 (\underbrace{f \onemat_{M-1} - d + \bsub - b^{(M)} \onemat_{M-1}}_{\tilde{b}})^\top \psub
    + f - 2 b^{(M)}  + h^\top B h \nn
    &=
    (\psub - \tilde{B}^{-1} \tilde{b})^\top \tilde{B} (\psub - \tilde{B}^{-1} \tilde{b})
    + f - 2 b^{(M)} - \tilde{b}^\top \tilde{B}^{-1} \tilde{b} + b^\top B^{-1} b \quad
    \Bym{h^\top B h = b^\top B^{-1} b} \per
\end{align}
From the above argument, 
the density $\calN(\tilde{B}^{-1} b\x \tilde{B}^{-1})$ is the Gaussian distribution of $g_t(p)$ 
on $\{p \in \R^M : \sum_{i=1}^M p_i = 1 \}$.
Therefore, 
the 
$p = [\psub^\top\x 1-\sum_{i=1}^{M-1} (\psub)_i]^\top$
for $\psub \sim \calN(\tilde{B}^{-1} b\x \tilde{B}^{-1})$
is supported over 
$\{p \in \R^M : \sum_{i=1}^M p_i = 1 \}$.

If the sample $\psub$ from $\calN(\tilde{B}^{-1} b\x \tilde{B}^{-1})$ is in $\calP_{M-1}$,
then we can obtain the last element $p^{(M)}$ by $p^{(M)} = 1 - \sum_{i=1}^{M-1} (\psub)_i$.
Otherwise, the probability that $\psub$ is the first $M-1$ elements of the sample from $g_t(p)$ is zero,
and hence, $[\psub^\top, p^{(M)}]^\top$ cannot be a sample from $g_t(p)$.
Therefore, sampling $\ptilt$ from $g_t(p)$ and Algorithm~\ref{alg:sample_p_exact} are equivalent.

\section{Relation between TSPM-Gaussian and BPM-TS}
\label{sec:relation_tspm_bpm}
In this appendix,
we discuss the relation between TSPM-Gaussian and BPM-TS~\citep{Vanchinathan14BPM}.

\noindent \textbf{Underlying Feedback Structure. }
Here, we discuss the underlying feedback structure behind TSPM-Gaussian and BPM-TS.

We first consider the underlying feedback structure behind BPM-TS.
In the following, we see that the feedback structure
\begin{align}
    y(t) 
    = 
    S_{i(t)} p + S_{i(t)} \ep \com \; \epsilon \sim \calN(0, \idmat_M)
\end{align}
induces the posterior distribution in BPM-TS.
Under this feedback structure,
we have $y(t) \sim \calN(S_{i(t)} p, S_{i(t)} S_{i(t)}^\top)$.

When we take the prior distribution $\pipr{p}$ as $\calN(0, \sigma_0^2 \idmat_M)$, 
the posterior distribution  for the opponent's strategy parameter can be written as
\begin{align}
    &
    \pipr{ p \relmiddle| \ev{i(s), y(s)}_{s=1}^t }  \nn
    &\propto
    \pipr{p} \prod_{s=1}^t \pipr{y(s) \relmiddle| i(s)\x p}  \nn
    &=
    \pipr{p} \prod_{s=1}^t \P_{y \sim  \calN(S_{i(s)} p, S_{i(s)} S_{i(s)}^\top)} \left\{ y = y(s) \right\}  \nn
    &=
    \exp \prn[\Big]{ -\frac{p^\top p}{2\sigma_0^2} }
    \prod_{s=1}^t \exp\prn[\Big]{  -\frac12 (y(s) - S_{i(s)}p)^\top (S_{i(s)} S_{i(s)}^\top)^{-1} (y(s) - S_{i(s)}p) }  \nn
    &=
    \exp
    \crl[\Big]{
        -\frac12
        \prn[\Big]{
            p^\top \prn[\Big]{\frac{1}{\sigma_0^2} \idmat_M + \sum_{s=1}^t S_{i(s)}^\top (S_{i(s)} S_{i(s)}^\top)^{-1} S_{i(s)} } p
        }  \nn
        &\qquad\qquad-
        2
        \prn[\Big]{
            \sum_{s=1}^t y(s)^\top (S_{i(s)} S_{i(s)}^\top)^{-1} S_{i(s)} p
        }
        +
        \text{(a term independent of $p$)}
    }  \nn
    &\propto
    \exp \crl[\Big]{ -\frac12 (p^\top \BtBPM p - 2 \btBPM^\top p) }  \nn
    &\propto
    \exp \crl[\Big]{ -\frac12 (p - {\BtBPM}^{-1} \btBPM)^\top \BtBPM (p - \BtBPM^{-1} \btBPM) } \com
\end{align}
where
\begin{align}
    \BtBPM
    &=
    \frac{1}{\sigma_0^2} \idmat_M + \sum_{s=1}^t S_{i(s)}^\top (S_{i(s)} S_{i(s)}^\top)^{-1} S_{i(s)}
    =
    B_{t-1}^{\BPM} + S_{i(t)}^\top (S_{i(t)} S_{i(t)}^\top)^{-1} S_{i(t)} \com \\
    \btBPM
    &=
    \sum_{s=1}^t S_{i(s)}^\top (S_{i(s)} S_{i(s)}^\top)^{-1} y(s)
    =
    b_{t-1}^{\BPM} + S_{i(t)}^\top (S_{i(t)} S_{i(t)}^\top)^{-1} y(t) \per
\end{align}
Therefore, the posterior distribution $\pipr{p \mid \ev{i(s), y(s)}_{s=1}^t}$ is
\begin{align}
    \frac{1}{\sqrt{(2\pi)^M | \BtBPM^{-1} | }} 
    \exp\crl[\Big]{ -\frac12 (p - \BtBPM^{-1} \btBPM)^\top \BtBPM (p - \BtBPM^{-1} \btBPM) } \per
\end{align}
and this distribution indeed corresponds to the posterior distribution 
in BPM-TS~\citep{Vanchinathan14BPM} with $\BtBPM = \Sigma_t^{-1}$.

Using the same argument, we can confirm that the feedback structure
\begin{align}
    y_t 
    =
    S_i p + \ep \com \; \epsilon \sim \calN(0, \idmat_M) \per
\end{align}
induces
\begin{align}
	\bar{g}_t(p)
	\coloneqq
	\frac{1}{\sqrt{(2\pi)^M|B_{t}^{-1}|}}
	\exp\prn[\Big]{
	-
	\frac12\normx{p-B_t^{-1} b_t}_{B_{t}}^2
	} \com
\end{align}
which corresponds to the posterior distribution for TSPM in linear partial monitoring.

\noindent \textbf{Covariances in TSPM-Gaussian and BPM-TS. }
In the linear partial monitoring, TSPM assumes noise with covariance $I_M$, 
which is compatible with the fact that the discrete setting can be regarded as linear PM with $I_M$-sub-Gaussian noise. 
On the other hand, BPM-TS assumes covariance $S_i S_i^\top$,
and in general $I_M \preceq S_i S_i^\top$ holds.
Therefore, BPM-TS assumes unnecessarily larger covariance, which makes learning slow down.

\section{Preliminaries for Regret Analysis}
\label{sec:preliminaries_for_regret_analysis}
In this appendix, we give some technical lemmas, which are used for the derivation of the regret bound
in Appendix~\ref{sec:proof_of_regret_upper_bound}.
Here, we write $X \succeq Y$ to denote $X - Y \succeq 0$.
For $a, b \in \R$, let $a \wedge b$ be $a$ if $a \leq b$ otherwise $b$,
and $a \vee b$ be $b$ if $a \leq b$ otherwise $a$.
We use $h(a) \coloneqq \P_{X\sim\chi^2_M}\left\{ X \geq a \right\}$ to
evaluate the behavior of the posterior samples,
where $\chi^2_M$ is the chi-squared distribution with $M$ degree of freedom.

\subsection{Basic Lemmas}
\begin{fact}[Moment generating function of squared-Gaussian distribution]
    \label{fact:mgf_sqGauss}
    Let $X$ be the random variable following the standard normal distribution.
    Then, the moment generating function of $X^2$ is
    $\Expect{\exp(\xi X^2)} = (1-2\xi)^{-1/2}$ for $\xi < 1/2$.
\end{fact}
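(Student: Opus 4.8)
The plan is to compute the integral directly against the standard normal density and reduce it to a Gaussian integral by completing the square in the exponent. Concretely, I would write
\begin{align*}
    \Expect{\exp(\xi X^2)}
    &=
    \frac{1}{\sqrt{2\pi}} \int_{-\infty}^{\infty} \exp(\xi x^2) \exp\prn[\Big]{-\frac{x^2}{2}} \dx
    =
    \frac{1}{\sqrt{2\pi}} \int_{-\infty}^{\infty} \exp\prn[\Big]{-\frac{(1-2\xi) x^2}{2}} \dx \per
\end{align*}
The constraint $\xi < 1/2$ enters here: it is exactly the condition under which $1 - 2\xi > 0$, so that the integrand decays at infinity and the integral converges. (For $\xi \ge 1/2$ the integral diverges, matching the claim that the MGF is only finite on $(-\infty, 1/2)$.)

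Next I would perform the change of variables $u = \sqrt{1-2\xi}\, x$, so that $\dx = \d u / \sqrt{1-2\xi}$, yielding
\begin{align*}
    \Expect{\exp(\xi X^2)}
    =
    \frac{1}{\sqrt{1-2\xi}} \cdot \frac{1}{\sqrt{2\pi}} \int_{-\infty}^{\infty} \exp\prn[\Big]{-\frac{u^2}{2}} \d u
    =
    \frac{1}{\sqrt{1-2\xi}}
    =
    (1-2\xi)^{-1/2} \com
\end{align*}
where the remaining integral equals $\sqrt{2\pi}$ since it is the normalizing constant of the standard normal density. This completes the computation.

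There is essentially no obstacle here; the statement is a textbook Gaussian moment-generating-function identity and the only point requiring care is tracking the domain $\xi < 1/2$ so that the substitution $\sqrt{1-2\xi}$ is real and positive and the integral is finite. An alternative one-line route, if preferred, is to recognize that $(1-2\xi)^{-1/2}$ is the $\chi^2_1$ moment generating function evaluated at $\xi$, which then also connects naturally to the function $h(a) = \P_{X \sim \chi^2_M}\{X \ge a\}$ introduced just before the fact; but the direct integral computation is the cleanest and most self-contained.
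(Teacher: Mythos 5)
Your computation is correct: completing the square, noting that $\xi < 1/2$ is exactly the condition for convergence, and substituting $u = \sqrt{1-2\xi}\,x$ yields $(1-2\xi)^{-1/2}$ as claimed. The paper states this as a Fact without proof, treating it as the standard Gaussian/\,$\chi^2_1$ moment-generating-function identity, and your direct integral derivation is precisely the textbook argument one would supply.
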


\begin{lemma}[Chernoff bound for chi-squared random variable]
    \label{lem:chisq_chernoff}
    Let $X$ be the random variable following the chi-squared distribution with $k$ degree of freedom.
    Then, for any $a \geq 0$ and $0 \le \xi < 1/2$,
    \begin{align}
        \pr{X \geq a}
        \leq
        \e^{-\xi a} (1 - 2\xi)^{-\frac{k}{2}}.
    \end{align}
\end{lemma}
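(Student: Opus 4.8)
The plan is to use the standard exponential Markov inequality (Chernoff method) together with the moment generating function of a chi-squared random variable, which follows directly from Fact~\ref{fact:mgf_sqGauss}. First I would write $X = \sum_{m=1}^k X_m^2$ where $X_1,\dots,X_k$ are i.i.d.\ standard normal random variables, so that by independence and Fact~\ref{fact:mgf_sqGauss}, for any $0 \le \xi < 1/2$,
\begin{align}
    \Expect{\e^{\xi X}}
    = \prod_{m=1}^k \Expect{\e^{\xi X_m^2}}
    = (1 - 2\xi)^{-k/2}.
\end{align}

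Next I would apply Markov's inequality to the nonnegative random variable $\e^{\xi X}$: for any $a \ge 0$ and $0 \le \xi < 1/2$,
\begin{align}
    \pr{X \ge a}
    = \pr{\e^{\xi X} \ge \e^{\xi a}}
    \le \e^{-\xi a}\,\Expect{\e^{\xi X}}
    = \e^{-\xi a}(1 - 2\xi)^{-k/2}.
\end{align}
This is exactly the claimed bound, so no optimization over $\xi$ is needed for the statement as phrased (the lemma asserts the inequality for every admissible $\xi$, not just the optimal one).

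There is essentially no obstacle here: the only minor points to be careful about are (i) noting that the case $\xi = 0$ is trivial (both sides equal $\pr{X \ge a} \le 1$), and (ii) invoking independence to factor the MGF, which is immediate from the representation of a chi-squared random variable as a sum of squares of independent standard normals. I would keep the write-up to these two displayed lines plus a sentence of justification.
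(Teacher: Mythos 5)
Your proposal is correct and follows essentially the same route as the paper's proof: decompose $X$ into a sum of squares of i.i.d.\ standard normals, apply Markov's inequality to $\e^{\xi X}$, and evaluate the moment generating function via Fact~\ref{fact:mgf_sqGauss}. Nothing is missing.
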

\begin{proof}
    By Markov's inequality,
    the LHS can be bounded as
    \begin{align}
        \pr{X \geq a}
        &=
        \pr{\sum_{i=1}^k X_i^2 \geq a} \; (X_1, \ldots, X_k \iid \calN(0, 1))  \nn
        &=
        \pr{\exp\prn[\Big]{\xi \sum_{i=1}^k X_i^2} \geq \exp(\xi a)}   \nn
        &\leq
        \e^{-\xi a} \prn[\Big]{ \Expect{\e^{\xi X_1^2}} }^k
        \By{Markov's ineq.} \nn
        &=
        \e^{-\xi a} (1-2\xi)^{-\frac{k}{2}}
        \By{Fact~\ref{fact:mgf_sqGauss}}
        \com
    \end{align}
    which completes the proof.
\end{proof}

\subsection{Property of Strong Local Observability}
Recall that $\Delta_{i} = ( L_i - L_1 )^\top p^* > 0$ for $i \in [N]$, which is the difference of the expected loss of actions $i$ and $1$.
For this define
\begin{align} 
    \epsilon 
    \coloneqq 
    \left( \frac{1}{2\sqrt{A}} \min_{i \neq 1} \frac{\Delta_{i}}{\nrm{z_{1,i}}} \right)
    \wedge
    \left( \min_{p\in\calC_1^c} \frac43 \nrm{p - p^*} \right)
    \com
    \label{eq:def_epsilon}
\end{align}
which is used throughout the proof of this appendix and Appendix~\ref{sec:proof_of_regret_upper_bound}.
The following lemma provides the key property of the strong local observability condition.
\begin{lemma}
    \label{lem:loc_prop}
    For any partial monitoring game with strong local observability and
    $p \in \R^M$,
    any of the conditions 1--3 in the following is not satisfied:
    \begin{enumerate}
        \item $L_1^\top p > L_k^\top p$ \; (Worse action $k$ looks better under $p$.)
        \item $\nrm{S_1 p - S_1 p^*} \leq \epsilon$  
        \item $\nrm{S_k p - S_k p^*} \leq \epsilon$  \per 
    \end{enumerate}
\end{lemma}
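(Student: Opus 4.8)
The plan is to prove the contrapositive of what the lemma really says: I would fix $p \in \R^M$, \emph{assume} that conditions 2 and 3 hold, and show that then condition 1 must fail, i.e.\ $L_1^\top p \le L_k^\top p$. The whole leverage comes from strong local observability: by Definition~\ref{def_strong} there is a vector $z_{1,k}\in\R^{2A}$, which I split as $z_{1,k}=[z_1^\top,z_k^\top]^\top$ with $z_1,z_k\in\R^A$, such that $L_1-L_k=S_1^\top z_1+S_k^\top z_k$. Hence the loss gap $(L_1-L_k)^\top p$ is entirely determined by $S_1 p$ and $S_k p$, which are exactly the quantities that conditions 2 and 3 pin down near $S_1 p^*$ and $S_k p^*$.

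Concretely, I would first expand
\begin{align*}
    (L_1 - L_k)^\top p
    &= z_1^\top S_1 p + z_k^\top S_k p \\
    &= (L_1 - L_k)^\top p^* + z_1^\top (S_1 p - S_1 p^*) + z_k^\top (S_k p - S_k p^*),
\end{align*}
and note that $(L_1-L_k)^\top p^* = -(L_k-L_1)^\top p^* = -\Delta_k$. Next I would bound the two perturbation terms by Cauchy--Schwarz together with conditions 2 and 3, $z_1^\top(S_1 p - S_1 p^*)\le \nrm{z_1}\,\epsilon$ and $z_k^\top(S_k p - S_k p^*)\le \nrm{z_k}\,\epsilon$, so that
\[
    (L_1 - L_k)^\top p \;\le\; -\Delta_k + (\nrm{z_1}+\nrm{z_k})\,\epsilon \;\le\; -\Delta_k + \sqrt2\,\nrm{z_{1,k}}\,\epsilon,
\]
using $\nrm{z_1}+\nrm{z_k}\le\sqrt2\,\sqrt{\nrm{z_1}^2+\nrm{z_k}^2}=\sqrt2\,\nrm{z_{1,k}}$ (and, in particular, $\nrm{z_1},\nrm{z_k}\le\nrm{z_{1,k}}$).

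Finally I would substitute the choice of $\epsilon$ from~\eqref{eq:def_epsilon}: since $\epsilon\le \frac{1}{2\sqrt A}\cdot\frac{\Delta_k}{\nrm{z_{1,k}}}$ and $A\ge1$ gives $\frac{1}{2\sqrt A}\le\frac{1}{\sqrt2}$, we get $\sqrt2\,\nrm{z_{1,k}}\,\epsilon\le\Delta_k$, hence $(L_1-L_k)^\top p\le0$, i.e.\ $L_1^\top p\le L_k^\top p$, contradicting condition 1. I do not expect any real obstacle in this argument; it is a one-shot computation, and the only care needed is the bookkeeping of the split $z_{1,k}=(z_1,z_k)$ with $\nrm{z_1},\nrm{z_k}\le\nrm{z_{1,k}}$ and checking that the constant $\tfrac{1}{2\sqrt A}$ baked into~\eqref{eq:def_epsilon} is comfortably enough. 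Note also that the second term in the minimum defining $\epsilon$ (the one involving $\calC_1^c$) is irrelevant here and is only used elsewhere in the analysis.
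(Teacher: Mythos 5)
Your proof is correct and is essentially the paper's own argument: both use the vector $z_{1,k}$ from strong local observability to rewrite $(L_1-L_k)^\top(p-p^*)$, bound it by Cauchy--Schwarz under conditions 2 and 3, and contradict condition 1 via the definition of $\epsilon$ in~\eqref{eq:def_epsilon}. The only (immaterial) difference is that you apply Cauchy--Schwarz blockwise to get the factor $\sqrt2\,\nrm{z_{1,k}}\epsilon$, whereas the paper passes through element-wise bounds and obtains the looser $\sqrt{2A}\,\epsilon\nrm{z_{1,k}}$ --- both suffice given the $\tfrac{1}{2\sqrt A}$ in the definition of $\epsilon$.
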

\begin{proof}
    We prove by contradiction.
    Assume that there exists $p \in \R^M$ such that
    conditions 1--3 are simultaneously satisfied.

    Now, by the conditions 2 and 3, we have
    \begin{align}
        \begin{split}
            | S_1 p - S_1 p^* | &\preceq \epsilon \onemat_{A} \com \\
            | S_k p - S_k p^* | &\preceq  \epsilon \onemat_{A} \per
        \end{split}
    \end{align}
    Here, $|\cdot|$ is the element-wise absolute value,
    and $\preceq$ means that the inequality $\leq$ holds for each element.
    Therefore,
    \begin{align}
        \left|
        \left(
           \begin{array}{c}
               S_1 \\
               S_k
           \end{array}
        \right)
        (p - p^*)
        \right|
        \preceq \epsilon \onemat_{2A} \per
        \label{eq:element_wise_cond}
    \end{align}
    On the other hand, by the strong local observability condition,
    for any $k \neq 1$,
    there exists $z_{1,k} \neq 0 \in \R^{2A}$ 
    such that 
    \begin{align}
         (L_1 - L_k)^\top
         =
         z_{1,k}^\top \left(
            \begin{array}{c}
                S_1 \\
                S_k
            \end{array}
            \right) \per
            \label{eq:loc}
    \end{align}
    Now, we have
    \begin{align}
        &
        z_{1,k}^\top
        \left(
           \begin{array}{c}
               S_1 \\
               S_k
           \end{array}
        \right)
        (p - p^*)  \nn
        &\leq
        \nrm{z_{1,k}}
        \left\|
            \left(
               \begin{array}{c}
                   S_1 \\
                   S_k
               \end{array}
            \right)
            (p - p^*)
        \right\|
        \By{Cauchy-Schwarz ineq.}  \nn
        &\leq
        \sqrt{2A} \epsilon \nrm{z_{1,k}}
        \By{Eq.~\eqref{eq:element_wise_cond}} \com
        \label{eq:loc_prop_upper}
    \end{align}
    and
    \begin{align}
        &
        z_{1,k}^\top \left(
           \begin{array}{c}
               S_1 \\
               S_k
           \end{array}
           \right)
           (p - p^*)  \nn
        &=
        (L_1 - L_k)^\top (p - p^*)
        \By{Eq.~\eqref{eq:loc}}  \nn
        &=
        (L_1 - L_k)^\top p + (L_k - L_1)^\top p^*  \nn
        &\geq
        \Delta_{k}
        \By{Condition 1 \& def. of $\Delta_{k}$}  \per
        \label{eq:loc_prop_lower}
    \end{align}
    Therefore, from~\eqref{eq:loc_prop_upper} and~\eqref{eq:loc_prop_lower}, we have
    \begin{align}
        \Delta_{k}
        \leq
        \sqrt{2A} \epsilon \nrm{z_{1,k}}  \per
    \end{align}
    This inequality does not hold for all $k \neq 1$ 
    for the predefined value of $\ep$, since we have
    \begin{align}
        \epsilon \le \frac{1}{2\sqrt{A}} \min_{k \neq 1} \frac{\Delta_{k}}{\nrm{z_{1,k}}} \per
    \end{align}
    Therefore, the proof is completed by contradiction.
\end{proof}
\begin{remark}
    The similar result holds when the optimal action $1$ is
    replaced with action $j \neq k$ such that $\Delta_{j,k} \coloneqq (L_j - L_k)^\top p^* > 0$
    by taking $\ep$ satisfying
    \begin{align}
        \epsilon \le \frac{1}{2\sqrt{A}} \min_{j \neq k: \Delta_{j,k} > 0} \frac{\Delta_{j,k}}{\nrm{z_{j,k}}} \per
    \end{align}
\end{remark}

From Lemma~\ref{lem:loc_prop}, we have the following corollary.
\begin{corollary}
    \label{cor:loc_prop_cor1}
    For any $p\in\R^M$ satisfying $p \in \Ci$ and $\nrm{S_1 p - S_1 p^*} \leq \epsilon$,
    we have
    \begin{align}
        \nrm{S_i p - S_i p^*} > \ep \per
    \end{align}
\end{corollary}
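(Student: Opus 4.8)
The plan is to argue by contradiction, reusing the estimate at the heart of the proof of Lemma~\ref{lem:loc_prop}. Throughout we take $i \neq 1$, which is the only relevant case (the corollary is applied in the regret analysis to suboptimal actions; for $i=1$ its hypothesis $\nrm{S_1 p - S_1 p^*} \le \epsilon$ and its conclusion $\nrm{S_1 p - S_1 p^*} > \epsilon$ are outright contradictory). So suppose, for contradiction, that in addition to $p \in \Ci$ and $\nrm{S_1 p - S_1 p^*} \le \epsilon$ we also had $\nrm{S_i p - S_i p^*} \le \epsilon$.

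First I would note that $p \in \Ci$ gives $(L_i - L_1)^\top p \le 0$, \ie $(L_1 - L_i)^\top p \ge 0$ --- this is exactly the inequality that the proof of Lemma~\ref{lem:loc_prop} extracts from its Condition~1 when lower-bounding $(L_1 - L_i)^\top(p - p^*)$ by $\Delta_i$, except now in non-strict form. The two norm bounds are precisely Conditions~2 and~3 of that lemma with $k=i$. Following the identical chain: elementwise they give $\lvert (S_1^\top, S_i^\top)^\top (p - p^*) \rvert \preceq \epsilon \onemat_{2A}$; strong local observability furnishes $z_{1,i} \neq 0$ with $(L_1 - L_i)^\top = z_{1,i}^\top (S_1^\top, S_i^\top)^\top$; Cauchy--Schwarz then yields $(L_1 - L_i)^\top(p - p^*) \le \sqrt{2A}\,\epsilon\,\nrm{z_{1,i}}$. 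On the other hand $(L_1 - L_i)^\top(p - p^*) = (L_1 - L_i)^\top p + \Delta_i \ge \Delta_i$, so $\Delta_i \le \sqrt{2A}\,\epsilon\,\nrm{z_{1,i}}$. Since $\epsilon \le \frac{1}{2\sqrt{A}}\,\frac{\Delta_i}{\nrm{z_{1,i}}}$ by~\eqref{eq:def_epsilon}, the right-hand side is at most $\Delta_i/\sqrt{2} < \Delta_i$, a contradiction. Hence $\nrm{S_i p - S_i p^*} > \epsilon$.

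Equivalently and more compactly, the corollary follows by invoking Lemma~\ref{lem:loc_prop} with $k=i$: Condition~2 holds by hypothesis, $p \in \Ci$ supplies $(L_1 - L_i)^\top p \ge 0$, which is all the lemma's proof uses in place of its strict Condition~1, and therefore Condition~3 must fail. The one point deserving care --- indeed essentially the only obstacle --- is this boundary subtlety: when $p \in \calC_1 \cap \Ci$ one has $L_1^\top p = L_i^\top p$, so Condition~1 of Lemma~\ref{lem:loc_prop} is not met in its stated strict form, and one must observe that the proof of that lemma goes through verbatim under the non-strict inequality $(L_1 - L_i)^\top p \ge 0$.
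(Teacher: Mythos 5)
Your proof is correct and takes essentially the same route as the paper's, which simply observes that $p \in \Ci$ supplies (the non-strict form of) Condition~1 of Lemma~\ref{lem:loc_prop} with $k=i$ and concludes that Condition~3 must therefore fail. Your additional care about the strict-versus-non-strict boundary case is a point the paper glosses over, and your resolution is right: the lemma's proof only uses $(L_1-L_i)^\top p \ge 0$, and the final contradiction $\Delta_i \le \Delta_i/\sqrt{2}$ is strict regardless.
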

\begin{proof}
    Note that $p \in \Ci$ is equivalent to $(L_1 - L_i)^\top p^* > 0$ for any $i \neq 1$.
    Therefore, the result directly follows from Lemma~\ref{lem:loc_prop}.
\end{proof}

The next lemma is the property of Mahalanobis distance corresponding to $\gbart(p)$.
\begin{lemma}
    \label{lem:Mahalanobis_concentration}
    Define
    $\calT_i \coloneqq \{p \in \R^M : \nrm{ S_i p - S_i p^* } > \ep\}$.
    Assume that
    $N_i(t) \geq n_i$,
    $\nrm{S_i \phatt - S_i p^*} \le \ep/4$.
    Then, for any $0 \le \xi < 1/2$ 
    \begin{align}
        h\prn[\bigg]{ \inf_{p\in\calT_i} \nrm{ B_t^{1/2} (p - \phatt ) }^2}
        \leq
        \exp \prn[\Big]{ -\frac{9}{16} \xi \epsilon^2 n_i } (1 - 2\xi)^{-M/2} \per
    \end{align}

\end{lemma}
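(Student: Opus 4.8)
The plan is to reduce the claim to the Chernoff bound of Lemma~\ref{lem:chisq_chernoff} by showing the deterministic lower bound
\[
    \inf_{p\in\calT_i} \nrm{ B_t^{1/2}(p-\phatt) }^2 \;\geq\; \tfrac{9}{16}\,\epsilon^2 n_i \per
\]
Since $a\mapsto h(a)=\P_{X\sim\chi^2_M}\{X\geq a\}$ is non-increasing, this lower bound gives
$h\bigl(\inf_{p\in\calT_i}\nrm{B_t^{1/2}(p-\phatt)}^2\bigr)\leq h\bigl(\tfrac{9}{16}\epsilon^2 n_i\bigr)$, and then Lemma~\ref{lem:chisq_chernoff} with $k=M$ degrees of freedom yields $h\bigl(\tfrac{9}{16}\epsilon^2 n_i\bigr)\leq \exp(-\tfrac{9}{16}\xi\epsilon^2 n_i)(1-2\xi)^{-M/2}$ for every $0\le\xi<1/2$, which is exactly the asserted inequality.

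To establish the deterministic lower bound, first I would use the PSD ordering coming from the update rule~\eqref{eq:update_rule}: since $B_t=\lambda I_M+\sum_{j\in[N]} N_j(t)\,S_j^\top S_j$ and all summands together with $\lambda I_M$ are positive semidefinite, we have $B_t\succeq N_i(t)\,S_i^\top S_i\succeq n_i\, S_i^\top S_i$, where the last step uses the hypothesis $N_i(t)\geq n_i$. Consequently, for any $p\in\R^M$,
\[
    \nrm{B_t^{1/2}(p-\phatt)}^2 \;=\; (p-\phatt)^\top B_t (p-\phatt) \;\geq\; n_i\,(p-\phatt)^\top S_i^\top S_i (p-\phatt) \;=\; n_i\,\nrm{S_i p - S_i\phatt}^2\per
\]

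It then remains to bound $\nrm{S_i p - S_i\phatt}$ from below uniformly over $p\in\calT_i$. By the triangle inequality and the two hypotheses $\nrm{S_i p - S_i p^*}>\epsilon$ (definition of $\calT_i$) and $\nrm{S_i\phatt - S_i p^*}\le\epsilon/4$,
\[
    \nrm{S_i p - S_i\phatt} \;\geq\; \nrm{S_i p - S_i p^*} - \nrm{S_i p^* - S_i\phatt} \;>\; \epsilon - \tfrac{\epsilon}{4} \;=\; \tfrac{3}{4}\epsilon \per
\]
Combining the last two displays gives $\nrm{B_t^{1/2}(p-\phatt)}^2 > n_i\cdot\tfrac{9}{16}\epsilon^2$ for all $p\in\calT_i$, hence the infimum is at least $\tfrac{9}{16}\epsilon^2 n_i$, completing the argument. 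The proof is essentially bookkeeping; the only point needing a little care is the very first reduction step, namely justifying $B_t\succeq n_i S_i^\top S_i$ from the explicit form of $B_t$ and correctly invoking monotonicity of $h$, so I would state that step explicitly rather than leaving it implicit.
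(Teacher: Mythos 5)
Your proof is correct and follows essentially the same route as the paper's: both reduce the claim to the deterministic lower bound $\inf_{p\in\calT_i}\nrm{B_t^{1/2}(p-\phatt)}^2\ge \tfrac{9}{16}\ep^2 n_i$ via $B_t\succeq n_i S_i^\top S_i$ and the reverse triangle inequality, and then invoke the chi-squared Chernoff bound of Lemma~\ref{lem:chisq_chernoff}. The only cosmetic difference is that the paper passes through the intermediate sum $\sum_k N_k(t)\nrm{S_k(p-\phatt)}^2$ rather than stating the PSD ordering directly, which is the same estimate.
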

\begin{proof}
    To bound the LHS of the above inequality,
    we bound $\nrm{ B_t^{1/2} (p - \phatt )}^2$ from below for $p \in \calT_i$.
    Using the triangle inequality and the assumptions, we have
    \begin{align}
        \nrm{ S_i (p - \phatt) }
        &\ge
        \nrm{ S_i p - S_i p^* }  - \nrm{ S_i \phatt - S_i p^* }  \nn
        &>
        \ep - \ep/4
        >
        0 \per
        \label{eq:diff_p_phat}
    \end{align}
    Therefore, we have
    \begin{align}
        \nrm{ B_t^{1/2} (p - \phatt )}^2
        &\ge
        \sum_{k\in[N]} N_k(t) \nrm{ S_k (p - \phatt) }^2
        \By{def. of $B_t$} \nn
        &\geq
        n_i \nrm{ S_i (p - \phatt) }^2
        \Sincem{N_i(t) \ge n_i} \nn
        &>
        \frac{9}{16} \epsilon^2 n_i
        \By{Eq.~\eqref{eq:diff_p_phat}} \per
    \end{align}
    By the Chernoff bound for a chi-squared random variable in Lemma~\ref{lem:chisq_chernoff},
    we now have
    \begin{align}
        h(a)
        \leq
        \e^{-\xi a} (1 - 2\xi)^{-M/2} \com
    \end{align}
    for any $a \geq 0$ and $0 \le \xi < 1/2$.
    Hence, 
    using the fact that $\nrm{ B_t^{1/2} (p - \phatt ) }^2$ follows the chi-squared distribution with $M$ degree of freedom,
    we have
    \begin{align}
        h\prn[\bigg]{\inf_{p\in\calT_i} \nrm{ B_t^{1/2} (p - \phatt ) }^2}
        &\leq
        h\prn[\Big]{\frac{9}{16} \epsilon^2 n_i} \nn
        &\leq
        \exp \prn[\Big]{ -\frac{9}{16} \xi \epsilon^2 n_i } (1 - 2\xi)^{-M/2} \com
    \end{align}
    which completes the proof.
\end{proof}

\subsection{Statistics of Uninterested Actions}
For any $k\neq i$ and $n_k\in[T]$,
define
\begin{align}
    \Znk &\coloneqq n_k \nrm{\qknk - S_k p^* }^2 \com  \\
    \Zbsli &\coloneqq \sum_{k \neq i} \max_{n_k\in[T]} \Znk \per
\end{align}
In this section, we bound $\Expect{\Zbsli}$ from above.
Note that $\Zbsli$ is independent of the randomness of Thompson sampling.

\begin{lemma}[Upper bound for the expectation of $\Zbsli$]
    \label{lem:expectation_Z}
    \begin{align}
        \Expect{\Zbsli} 
        \le
        4N \prn[\Big]{\log T + \frac{A}{2} \log 2 + 1 }  \per
    \end{align}
\end{lemma}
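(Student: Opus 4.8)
The plan is to bound $\Expect{\Znk}$ for each fixed $k \neq i$ and $n_k \in [T]$ via a uniform-in-$n_k$ tail bound, and then combine over $n_k$ and $k$ using a union bound together with a careful integration of the resulting tail. First I would recall that for a fixed action $k$ taken $n_k$ times with i.i.d.\ feedback symbols drawn according to $S_k p^*$, the quantity $n_k \nrm{\qknk - S_k p^*}^2$ is, up to the difference between the $\ell_2$ norm on the simplex $\calP_A$ and the multinomial structure, comparable to a chi-squared-type statistic of order at most $A$ in expectation. More precisely, writing $\qknk - S_k p^* = \frac{1}{n_k}\sum_{s=1}^{n_k}(e_{y_s} - S_k p^*)$ for i.i.d.\ centered increments bounded in $\ell_2$ by a constant, $\Znk$ behaves like a self-normalized sum, so one expects $\pr{\Znk \ge x}$ to decay like $\exp(-cx)$ for $x \gtrsim A$, uniformly in $n_k$.

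The key technical device I would use is a maximal inequality over $n_k \in [T]$: since $\sqrt{n_k}(\qknk - S_k p^*)$ is a sum of bounded martingale differences, an application of a Doob-type or method-of-mixtures argument (in the spirit of the LIL-flavored bounds the authors allude to in the main text) gives a bound of the form $\pr{\max_{n_k \in [T]} \Znk \ge x} \le C(T) \exp(-\xi x)(1-2\xi)^{-A/2}$ for suitable $\xi < 1/2$, where the polynomial-in-$T$ factor $C(T)$ absorbs the union bound over the $T$ possible values of $n_k$ (or is handled directly by a peeling/stitching argument so that only $\log T$ appears). Then I would integrate the tail: $\Expect{\max_{n_k} \Znk} = \int_0^\infty \pr{\max_{n_k} \Znk \ge x}\,\dx \le x_0 + \int_{x_0}^\infty C(T)\exp(-\xi x)(1-2\xi)^{-A/2}\,\dx$ for a threshold $x_0$ chosen of order $\log T + \frac{A}{2}\log 2$, which makes the integral $O(1)$; choosing $\xi$ appropriately (e.g.\ $\xi$ close to $1/2$ so that the $(1-2\xi)^{-A/2}$ factor contributes the $\frac{A}{2}\log 2$ term after optimization) yields $\Expect{\max_{n_k \in [T]}\Znk} \le 4(\log T + \frac{A}{2}\log 2 + 1)$. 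Finally, summing over the $k \neq i$ (at most $N$ terms) gives the stated bound $\Expect{\Zbsli} \le 4N(\log T + \frac{A}{2}\log 2 + 1)$.

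The main obstacle I anticipate is getting the constants exactly as claimed — in particular, reconciling the chi-squared Chernoff bound of Lemma~\ref{lem:chisq_chernoff} (which is naturally stated for a true chi-squared with $M$, here $A$, degrees of freedom) with the fact that $\qknk$ lives on the simplex and its deviation is a multinomial fluctuation rather than a genuine Gaussian quadratic form. I would handle this either by a direct MGF computation for the bounded-increment sum (showing its MGF is dominated by that of the chi-squared), or by noting that $\nrm{\qknk - S_k p^*}^2 \le \nrm{\qknk - S_k p^*}_1^2 \cdot (\text{something})$ and bounding the multinomial $\ell_1$ deviation; the cleanest route is probably a union bound over $n_k$ combined with Lemma~\ref{lem:chisq_chernoff} applied to an appropriate Gaussian-dominated surrogate, then optimizing $\xi$ and the threshold. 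The bookkeeping to land precisely on the factor $4N$ and the additive $+1$ is where the real care is needed; everything else is a standard tail-integration argument.
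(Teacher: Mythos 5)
Your skeleton — a union bound over $n_k\in[T]$, a Chernoff tail bound of the form $\e^{-\xi x}(1-2\xi)^{-A/2}$, tail integration with a threshold $x_0$ of order $\frac{1}{\xi}\bigl(\log T+\frac{A}{2}\log 2\bigr)$, and a final sum over the at most $N$ actions $k\neq i$ — is exactly the paper's argument. However, the obstacle you spend most of your effort on is not actually present, and the way you propose to resolve it would not land on the stated constants. The lemma is proved in the \emph{linear} setting of Section~\ref{sec:theory}, where the feedback is $y(t)=S_kp^*+\epsilon$ with $\epsilon\sim\calN(0,I_A)$ exactly Gaussian. Hence $\sqrt{n_k}(\qknk-S_kp^*)\sim\calN(0,I_A/1)$ exactly and $\Znk\sim\chi^2_A$ exactly for each fixed $n_k$; there is no multinomial fluctuation to reconcile, no need for an MGF-domination surrogate, and no need for a Doob/method-of-mixtures maximal inequality or peeling — the paper simply writes $\pr{\max_{n_k\in[T]}\Znk\ge x}\le 1\wedge T\cdot\pr{Z_1\ge x}$ and applies Lemma~\ref{lem:chisq_chernoff} directly. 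If you instead insisted on the discrete (multinomial) model, the sub-Gaussian constants would change and the clean factor $4N\bigl(\log T+\frac{A}{2}\log 2+1\bigr)$ would not follow from the argument you sketch; this is precisely why the authors restrict the regret analysis to Gaussian noise.

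A second, smaller error: you say the $\frac{A}{2}\log 2$ term comes from taking $\xi$ ``close to $1/2$.'' It does not — as $\xi\to 1/2$ the factor $(1-2\xi)^{-A/2}$ diverges. The paper takes $\xi=1/4$, which simultaneously gives $-\frac{A}{2}\log(1-2\xi)=\frac{A}{2}\log 2$ and $\frac{1}{\xi}=4$, i.e.\ both the additive $\frac{A}{2}\log 2$ and the multiplicative constant $4$ in one stroke. With that choice and the exact chi-squared tail, the computation $\Expect{\max_{n_k}\Znk}\le x^*+\frac{1}{\xi}$ for $x^*=\frac{1}{\xi}\bigl(\log T-\frac{A}{2}\log(1-2\xi)\bigr)$ closes the proof with no further bookkeeping.
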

\begin{proof}
    Recall that in linear partial monitoring,
    the feedback $y(t) \in \R^A$ for action $k$ is given as
    \begin{align}
        y_t = S_k p^* + \epsilon \com \; \epsilon \sim \calN(0, I_A)
    \end{align}
    at round $t\in[T]$, 
    Therefore, $y(t) - S_k p^* \sim \calN(0, I_A)$.
    Since $\qknk = \frac{1}{n_k} \sum_{s\in[T]:i(s)=k} y(s)$ for any $n_k\in[T]$,
    we have
    \begin{align}
        q_{k, n_k} - S_k p^* = \frac{1}{n_k} \sum_{s\in[T]:i(s)=k} (y(s) - S_k p^*) \sim \calN(0, I_A/n_k) \per
    \end{align}
    Therefore,
    \begin{align}
        \sqrt{n_k} (\qknk - S_k p^*) \sim \calN(0, I_A) \com
    \end{align}
    and thus
    \begin{align}
        n_k \nrm{ \qknk - S_k p^* }^2 = \nrm{ \sqrt{n_k} (\qknk - S_k p^*) }^2 \sim \chi^2_A \per
    \end{align}
    Therefore, for any $0 \le \xi < 1/2$,
    \begin{align}
        \Expect{\max_{n_k\in[T]} \Znk}
        &=
        \int_0^\infty \pr{ \max_{n_k\in[T]} \Znk \geq x } \d x  \nn
        &\leq
        \int_0^\infty \left[ 1 \wedge T \cdot \pr{ Z_1 \geq x } \right] \dx
        \By{the union bound}  \nn
        &\leq
        \int_0^\infty \left[ 1 \wedge T \cdot \e^{-\xi x} (1-2\xi)^{-\frac{A}{2}} \right] \dx
        \By{$Z_1 \sim \chi^2_(A)$ and Lemma~\ref{lem:chisq_chernoff}}  \nn
        &=
        \int_0^{x^*} \d x
        +
        \int_{x^*}^\infty  T \cdot \e^{-\xi x} (1-2\xi)^{-\frac{A}{2}} \dx  \nn
        &\leq
        x^*
        +
        T \cdot \int_{x^*}^\infty \e^{-\xi x} (1-2\xi)^{-\frac{A}{2}} \dx  \nn
        &=
        x^*
        +
        T (1-2\xi)^{-\frac{A}{2}} \left[ -\frac{1}{\xi} \e^{-\xi x} \right]_{x^*}^\infty  \nn
        &=
        \frac{1}{\xi} \crl[\Big]{\log T - \frac{A}{2} \log(1-2\xi) + 1 } \com
    \end{align}
    where
    $x^* \coloneqq \frac{1}{\xi} \ev{ \log T  - \frac{A}{2} \log(1-2\xi) }$.
    Therefore, taking $\xi = 1/4$, we have
    \begin{align}
        \Expect{\Zbsli}
        &=
        \Expect{ \sum_{k \neq i} \max_{n_k\in[T]} \Znk }  \nn
        &\leq
        \sum_{k \neq i} \Expect{ \max_{n_k\in[T]} \Znk}  \nn
        &\leq
        (N-1) \frac{1}{\xi} \crl[\Big]{\log T - \frac{A}{2} \log(1-2\xi) + 1 }  \nn
        &\leq
        4N \prn[\Big]{\log T + \frac{A}{2} \log 2 + 1 }  \com
    \end{align}
    which completes the proof.
\end{proof}

\subsection{Mahalanobis Distance Process}
Discussions in this section are essentially very similar to~\citet[Lemma 11]{Abbasi11improved},
but their results are not directly applicable and we give the full derivation
for self-containedness.
To maximize the applicability
here we only assume sub-Gaussian noise rather than a Gaussian one.

Let
$\ep_t$
be zero-mean
$1$-sub-Gaussian random variable,
which satisfies
\begin{align}
	\Expect{\e^{\lambda^{\top} \ep_t}} \le \e^{-\frac{\normx{\lambda}^2}{2}}
\end{align}
for any $\lambda\in\R^M$.
\begin{lemma}\label{lem:subgaussian}
	For any vector $v\in\R^M$ and positive definite matrix $V\in\R^{M\times M}$
	such that $V\succ I$,
	\begin{align}
		\ExpectX{\ep_t}{\e^{\frac{\normx{\ep_t+v}_{V^{-1}}^2}{2}}}
		&\le
		\frac{\sqrt{|V|}}{\sqrt{|V-I|}}
		\e^{\frac{1}{2}
		v^{\top}(V-I)^{-1}v
		}\per
	\end{align}
\end{lemma}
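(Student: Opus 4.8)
The plan is to linearize the quadratic form $\nrm{\ep_t+v}_{V^{-1}}^2$ by introducing an auxiliary Gaussian vector, exchange the order of integration, and only then invoke the sub-Gaussian assumption pointwise. Concretely, I would start from the moment-generating-function identity for a centered Gaussian: if $Z\sim\calN(0,V^{-1})$, then for every $x\in\R^M$,
\[
\e^{\frac12 x^\top V^{-1} x} = \ExpectX{Z}{\e^{x^\top Z}}.
\]
Applying this with $x=\ep_t+v$ gives $\e^{\frac12\nrm{\ep_t+v}_{V^{-1}}^2}=\ExpectX{Z}{\e^{(\ep_t+v)^\top Z}}$, so that
\[
\ExpectX{\ep_t}{\e^{\frac12\nrm{\ep_t+v}_{V^{-1}}^2}}
= \ExpectX{\ep_t}{\ExpectX{Z}{\e^{(\ep_t+v)^\top Z}}}
= \ExpectX{Z}{\e^{v^\top Z}\,\ExpectX{\ep_t}{\e^{Z^\top\ep_t}}},
\]
where the interchange is justified by Tonelli's theorem because the integrand is nonnegative.

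Next I would use the $1$-sub-Gaussian hypothesis with $\lambda=Z$ held fixed inside the outer expectation to bound $\ExpectX{\ep_t}{\e^{Z^\top\ep_t}}\le\e^{\nrm{Z}^2/2}$; this is the only step that uses the noise assumption, and it becomes an equality when $\ep_t$ is exactly Gaussian. The claim then reduces to evaluating the Gaussian integral
\[
\ExpectX{Z\sim\calN(0,V^{-1})}{\e^{v^\top Z+\frac12\nrm{Z}^2}}
= \frac{1}{\sqrt{(2\pi)^M|V^{-1}|}}\int_{\R^M}\exp\crl[\Big]{v^\top z-\tfrac12 z^\top(V-I)z}\,\d z.
\]

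Finally I would complete the square: since $V\succ I$, the matrix $V-I$ is positive definite, so
\[
v^\top z-\tfrac12 z^\top(V-I)z = -\tfrac12\bigl(z-(V-I)^{-1}v\bigr)^\top(V-I)\bigl(z-(V-I)^{-1}v\bigr)+\tfrac12 v^\top(V-I)^{-1}v,
\]
and the integral evaluates to $\sqrt{(2\pi)^M/|V-I|}\,\e^{\frac12 v^\top(V-I)^{-1}v}$; combining with the normalizing constant and using $|V^{-1}|=1/|V|$ gives the stated bound $\frac{\sqrt{|V|}}{\sqrt{|V-I|}}\e^{\frac12 v^\top(V-I)^{-1}v}$. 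The only genuine subtlety is that the inner integral converges precisely because $V-I\succ0$, which is exactly what the hypothesis $V\succ I$ provides; without it the factor $\e^{\nrm{Z}^2/2}$ would fail to be integrable against the density of $Z$. Everything else is routine Gaussian bookkeeping, so I do not anticipate any real obstacle beyond keeping track of the determinant factors.
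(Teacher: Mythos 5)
Your proposal is correct and follows essentially the same route as the paper's own proof: both introduce an auxiliary Gaussian $\lambda\sim\calN(0,V^{-1})$ to linearize the quadratic form via the MGF identity, swap the order of expectations, apply the sub-Gaussian bound pointwise, and finish by completing the square in the resulting Gaussian integral. The only additions on your side are explicit justifications (Tonelli for the interchange, the role of $V\succ I$ for integrability) that the paper leaves implicit.
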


\begin{proof}
	For any $x\in\R^M$
	\begin{align}
		\ExpectX{\lambda\sim \calN(0,V^{-1})}{\e^{\lambda^{\top}x}}
		&=
		\e^{\frac{\normx{x}_{V^{-1}}^2}{2}}\per
	\end{align}
	Therefore, by letting
	$x=\ep_t+v$
	we see that
	\begin{align}
		\e^{\frac{\normx{\ep_t+v}_{V^{-1}}^2}{2}}
		&=
		\ExpectX{\lambda\sim \calN(0,V^{-1})}{\e^{\lambda^{\top}(\ep_t+v)}} \per
	\end{align}
	As a result, by the definition of sub-Gaussian random variables,
	we have
	\begin{align}
		\ExpectX{\ep_t}{\e^{\frac{\normx{\ep_t+v}_{V^{-1}}^2}{2}}}
		&=
		\ExpectX{\lambda\sim \calN(0,V^{-1})}{
		    \ExpectX{\ep_t}{\e^{\lambda^{\top}(\ep_t+v)}}
		}
		\nn
		&=
		\ExpectX{
		    \lambda\sim \calN(0,V^{-1})}{\e^{\lambda^{\top}v}\ExpectX{\ep_t}{\e^{\lambda^{\top}\ep_t}}
		}
		\nn
		&\le
		\ExpectX{\lambda\sim \calN(0,V^{-1})}{\e^{\lambda^{\top}v}\e^{\normx{\lambda}^2/2}}
		\nn
		&=
		\frac{1}{(2\pi)^{d/2}\sqrt{|V^{-1}|}}\int
		\e^{\lambda^{\top}v}\e^{\normx{\lambda}^2/2}\e^{-\normx{\la}_{V}^2/2}
		\d \la
		\nn
		&=
		\frac{1}{(2\pi)^{d/2}\sqrt{|V^{-1}|}}\int
		\e^{-\frac{1}{2}
		\pax{
		\la^{\top}(V-I)\la-2v^\top\la
		}
		}\d \la\nn
		&=
		\frac{\sqrt{|V-I|}}{(2\pi)^{d/2}\sqrt{|V^{-1}||V-I|}}\int
		\e^{-\frac{1}{2}
		\pax{
		(\la-(V-I)^{-1}v)^{\top}(V-I)(\la-(V-I)^{-1}v)
		-
		v^{\top}(V-I)^{-1}v
		}
		}\d \la\nn
		&=
		\frac{\sqrt{|V|}}{\sqrt{|V-I|}}
		\e^{\frac{1}{2}
		v^{\top}(V-I)^{-1}v
		}\per
	\end{align}
\end{proof}

\begin{lemma}\label{lem:martingale}
	\begin{align}
		\Ex{\exp\pax{
		\frac12\pax{
		\normx{\hat{p}_{t}-p^*}_{B_{t}}^2
		-
		\normx{\hat{p}_{t-1}-p^*}_{B_{t-1}}^2
		}}
		\relmiddle| 
		\hat{p}_{t-1}\x B_{t-1}\x S_{i(t-1)}}
		&\le
		\sqrt{\frac{|B_{t}|}{|B_{t-1}|}}\per
	\end{align}
\end{lemma}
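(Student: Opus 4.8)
The plan is to prove this one-step bound by deriving a recursion for the centered ridge estimator $\hat p_t - p^*$, conditioning on everything except the fresh noise $\epsilon_t$, and then computing the resulting conditional moment generating function (MGF) in closed form via the method of mixtures --- the same argument used for Lemma~\ref{lem:subgaussian} --- together with two classical matrix identities. Recall $\hat p_t = B_t^{-1}b_t$, $B_t = B_{t-1} + S_{i(t)}^\top S_{i(t)}$, $b_t = b_{t-1} + S_{i(t)}^\top y(t)$, and $y(t) = S_{i(t)}p^* + \epsilon_t$. Writing $S:=S_{i(t)}$ and $w:=\hat p_{t-1}-p^*$, a direct computation using $b_{t-1}=B_{t-1}\hat p_{t-1}$ gives $B_t(\hat p_t - p^*) = b_{t-1} + S^\top S p^* + S^\top\epsilon_t - B_t p^* = B_{t-1}w + S^\top\epsilon_t$, so with $v:=B_{t-1}w$ we have $\normx{\hat p_t-p^*}_{B_t}^2 = \normx{v + S^\top\epsilon_t}_{B_t^{-1}}^2$ and $\normx{\hat p_{t-1}-p^*}_{B_{t-1}}^2 = w^\top B_{t-1}w = v^\top B_{t-1}^{-1}v$. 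Conditioning on the information through round $t-1$ together with the action $i(t)$ makes $\hat p_{t-1}$, $B_{t-1}$, $S$ and $B_t$ measurable and leaves only $\epsilon_t$ (still $1$-sub-Gaussian) random, and $\exp(-\tfrac12 v^\top B_{t-1}^{-1}v)$ factors out; so, denoting this conditional expectation by $\Ex{\,\cdot\,}$, it remains to show \[ \Ex{\exp\big(\tfrac12\normx{v + S^\top\epsilon_t}_{B_t^{-1}}^2\big)} \le \sqrt{|B_t|/|B_{t-1}|}\;\exp\big(\tfrac12 v^\top B_{t-1}^{-1}v\big). \]

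For this I would expand $\tfrac12\normx{v+S^\top\epsilon_t}_{B_t^{-1}}^2 = \tfrac12 v^\top B_t^{-1}v + \ell^\top\epsilon_t + \tfrac12\epsilon_t^\top Q\epsilon_t$ with $\ell := SB_t^{-1}v$ and $Q:=SB_t^{-1}S^\top\succeq 0$, and then reproduce the mixture computation of Lemma~\ref{lem:subgaussian} in $\R^A$: write $\exp(\tfrac12\epsilon^\top Q\epsilon) = \ExpectX{g\sim\calN(0,Q)}{\exp(g^\top\epsilon)}$, interchange the two expectations, bound $\ExpectX{\epsilon_t}{\exp((g+\ell)^\top\epsilon_t)}\le\exp(\tfrac12\normx{g+\ell}^2)$ using $1$-sub-Gaussianity, and carry out the remaining Gaussian integral over $g$. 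This yields \[ \Ex{\exp\big(\tfrac12\normx{v+S^\top\epsilon_t}_{B_t^{-1}}^2\big)} \le |I_A - Q|^{-1/2}\,\exp\!\big(\tfrac12 v^\top B_t^{-1}v + \tfrac12\,\ell^\top(I_A-Q)^{-1}\ell\big), \] provided $I_A - Q \succ 0$; but this positive-definiteness is exactly the Schur-complement condition equivalent to $B_{t-1} = B_t - S^\top S \succ 0$, which holds since $B_{t-1}\succeq\lambda I_M\succ 0$.

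It then remains to identify the two factors. Sylvester's determinant identity gives $|I_A - SB_t^{-1}S^\top| = |I_M - B_t^{-1}S^\top S| = |B_{t-1}|/|B_t|$, so the prefactor equals exactly $\sqrt{|B_t|/|B_{t-1}|}$; and the Woodbury identity applied to $B_{t-1} = B_t - S^\top S$ gives $B_t^{-1} + B_t^{-1}S^\top(I_A - SB_t^{-1}S^\top)^{-1}SB_t^{-1} = B_{t-1}^{-1}$, hence $v^\top B_t^{-1}v + \ell^\top(I_A-Q)^{-1}\ell = v^\top B_{t-1}^{-1}v$. Plugging back and dividing by $\exp(\tfrac12 v^\top B_{t-1}^{-1}v) = \exp(\tfrac12\normx{\hat p_{t-1}-p^*}_{B_{t-1}}^2)$ gives the assertion. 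The only delicate points are purely computational --- getting the recursion and the conditional MGF right, and checking the definiteness $I_A - Q\succ 0$ that makes the mixture integral converge; note that with Gaussian (rather than merely sub-Gaussian) noise each of the above inequalities is an equality, so the bound is attained.
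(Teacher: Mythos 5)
Your proof is correct and uses essentially the same argument as the paper: both reduce the one-step increment of $\Vert \hat{p}_t-p^*\Vert_{B_t}^2$ to a quadratic form in the fresh noise $\epsilon_t$ and bound its conditional MGF by the Gaussian-mixture (method-of-mixtures) computation of Lemma~\ref{lem:subgaussian}, with the determinant ratio $\sqrt{|B_t|/|B_{t-1}|}$ emerging from the matrix determinant lemma. The only difference is bookkeeping: you organize the algebra through $B_t^{-1}$ via the clean recursion $B_t(\hat{p}_t-p^*)=B_{t-1}(\hat{p}_{t-1}-p^*)+S^\top\epsilon_t$ and close with Sylvester/Woodbury, whereas the paper expands $B_t^{-1}$ in terms of $B_{t-1}^{-1}$ by Sherman--Morrison--Woodbury up front; both routes share the same minor technicality when $SB_t^{-1}S^\top$ (resp.\ $C_t$) is singular, which is handled by restricting to the range of $S$ or by a limiting argument.
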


\begin{proof}
    Let $Z_t \coloneqq -\la p^*+\sum_{s=1}^{t} S_{i(s)}^\top \ep_s$,
    and we have
    \begin{itemize}
    	\item $B_{t}=\lambda I+\sum_{s=1}^{t}S_{i(s)}^{\top}S_{i(s)}$,
    	\item $b_{t}=\sum_{s=1}^{t} S_{i(s)}^\top y(s)=B_t p^*+Z_t$,
    	\item $\hat{p}_{t}=B_t^{-1}b_t=p^*+B_t^{-1}Z_t$.
    \end{itemize}
	
	In the following we omit the conditioning on
	$(\hat{p}_{t-1}\x B_{t-1}\x S_{i(t-1)})$
	for notational simplicity.

	Let us define
	$C_t \coloneqq S_{i(t)}B_{t-1}S_{i(t)}^\top$ and $d_t \coloneqq S_{i(t)}B_{t-1}^{-1}Z_{t-1}=S_{i(t)}(\hat{p}_t-p^*)$.
	Then, using the Sherman-Morrison-Woodbury formula we have
	
	\begin{align}
		\lefteqn{
		\normx{\hat{p}_{t}-p^*}_{B_{t}}^2
		-
		\normx{\hat{p}_{t-1}-p^*}_{B_{t-1}}^2
		}\nn
		&=
		Z_{t}^{\top} B_t^{-1} Z_{t}
		-
		Z_{t-1}^{\top} B_{t-1}^{-1} Z_{t-1}
		\nn
		&=
		(Z_{t-1}^{\top} + \ep_{t}^\top S_{i(t)})
		(B_{t-1}^{-1} - B_{t-1}^{-1} S_{i(t)}^\top (I+S_{i(t)} B_{t-1}^{-1} S_{i(t)}^\top)^{-1} S_{i(t)}B_{t-1}^{-1})
		(Z_{t-1} + S_{i(t)}^\top\ep_{t})
		-
		Z_{t-1}^\top B_{t-1}^{-1} Z_{t-1}
		\nn
		&=
		(Z_{t-1}^{\top}+\ep_{t}^{\top}S_{i(t)})
		B_{t-1}^{-1}
		(Z_{t-1}+S_{i(t)}^\top\ep_{t})
		-
		Z_{t-1}^{\top}B_{t}^{-1}Z_{t-1}
		\nn
		&\quad-
		(Z_{t-1}^{\top} + \ep_{t}^{\top}S_{i(t)})
		B_{t-1}^{-1}S_{i(t)}^\top(I+S_{i(t)} B_{t-1}^{-1} S_{i(t)}^\top)^{-1} S_{i(t)}B_{t-1}^{-1}
		(Z_{t-1} + S_{i(t)}^\top\ep_{t})
		\nn
		&=
		\ep_{t}^{\top}S_{i(t)}
		B_{t-1}^{-1}
		S_{i(t)}^\top\ep_{t}
		+
		2Z_{t-1}^{\top}
		B_{t-1}^{-1}
		S_{i(t)}^\top\ep_{t}
		\nn
		&\quad-
		(Z_{t-1}^{\top}+\ep_{t}^{\top}S_{i(t)})
		B_{t-1}^{-1}S_{i(t)}^\top (I + S_{i(t)}B_{t-1}^{-1}S_{i(t)}^\top)^{-1} S_{i(t)} B_{t-1}^{-1}
		(Z_{t-1}+S_{i(t)}^\top\ep_{t})
		\nn
		&=
		\ep_{t}^{\top}C_t\ep_{t}
		+
		2d_{t}^{\top}
		\ep_{t}
		-
		(d_{t}^{\top}+\ep_{t}^{\top}C_t)
		(I+C_t)^{-1}
		(d_{t}+C_t\ep_{t})
		\nn
		&=
		\ep_{t}^{\top}C_t(I-(I+C_t)^{-1}C_t)\ep_{t}
		+
		2d_{t}^{\top}
		(I-(I+C_t)^{-1}C_t)
		\ep_{t}
		-
		d_{t}^{\top}
		(I+C_t)^{-1}
		d_{t}\nn
		&=
		\ep_{t}^{\top}C_t(I+C_t)^{-1}\ep_{t}
		+
		2d_{t}^{\top}
		(I+C_t)^{-1}
		\ep_{t}
		-
		d_{t}^{\top}
		(I+C_t)^{-1}
		d_{t}
		\nn
		&=
		\normx{\ep_{t} + C_t^{-1}d_t}_{C_t(I+C_t)^{-1}}^2
		-
		d_t^\top(I + C_t)^{-1}C_t^{-1}d_t
		-
		d_{t}^{\top}
		(I+C_t)^{-1}
		d_{t}
		\nn
		&=
		\normx{\ep_{t} + C_t^{-1}d_t}_{C_t(I + C_t)^{-1}}^2
		-
		d_t^\top(I + C_t)^{-1} (I + C_t^{-1})d_t
		\per
	\end{align}
	
	Therefore,
	Lemma \ref{lem:subgaussian} with
	$V \coloneqq \pax{C_t(I+C_t)^{-1}}^{-1}=(I+C_t)C_t^{-1}\x v \coloneqq C_t^{-1}d_t$
	yields
	\begin{align}
		\lefteqn{
		\Ex{\exp\pax{
		\frac12\pax{
		\normx{\hat{p}_{t}-p^*}_{B_{t}}^2
		-
		\normx{\hat{p}_{t-1}-p^*}_{B_{t-1}}^2
		}}}
		}\nn
		&\le
		\frac{\sqrt{|(I+C_t)C_t^{-1}|}}{\sqrt{|(I+C_t)C_t^{-1}-I|}}
		\e^{\frac{1}{2}
		d_t^{\top}C_t^{-1}((I+C_t)C_t^{-1}-I)^{-1}C_t^{-1}d_t
		}
		\e^{
		-\frac12 d_t^\top (I+C_t)^{-1}(I+C_t^{-1})d_t
		}
		\nn
		&\le
		\frac{\sqrt{|(I+C_t)C_t^{-1}|}}{\sqrt{|C_t^{-1}|}}
		\e^{\frac{1}{2}
		d_t^{\top}C_t^{-1}(C_t^{-1})^{-1}C_t^{-1}d_t
		}
		\e^{
		-\frac12 d_t^\top (I+C_t)^{-1}(I+C_t^{-1})d_t
		}
		\nn
		&=
		\sqrt{|(I+C_t)|}
		\nn
		&=
		\sqrt{\frac{|B_{t}|}{|B_{t-1}|}} \com
	\end{align}
	where 
	see, \eg~\citet[Lemma 11]{Abbasi11improved} for the last equality.
\end{proof}

\subsection{Norms under Perturbations}
In the following two lemmas,
we give some analysis of norms under perturbations.

\begin{lemma}\label{lem:minimax}
	Let $A$ be a positive definite matrix.
	Let $a\in\R^d$
	and $\ep>0$ be such that
	$\ep<\normx{a}/3$.
	Then
	\begin{align}
		\min_{x:\normx{x}\le 2\ep}\max_{x':\normx{x'}\le \ep}\brx{(a+x+x')^\top A (a+x+x')}
		&=
		\min_{x'':\normx{x''}\le \ep}
		\brx{(a+x'')^\top A (a+x'')}\per\label{norm_opt1}
	\end{align}
\end{lemma}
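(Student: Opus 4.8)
Both sides of~\eqref{norm_opt1} are minima of continuous functions over compact sets, hence attained; write $m$ for the right-hand side and let $x^*$ attain it. I would prove the two inequalities separately. For the bound $\mathrm{LHS}\ge m$: for any feasible $x$ with $\normx{x}\le 2\ep$ the perturbation $x'=-x/2$ is feasible for the inner maximisation because $\normx{x/2}\le\ep$, so
\[
\max_{\normx{x'}\le\ep}(a+x+x')^\top A(a+x+x')\ \ge\ (a+x/2)^\top A(a+x/2)\ \ge\ m,
\]
the last step again using $\normx{x/2}\le\ep$ (so that $x/2$ is feasible on the right-hand side); minimising over $x$ gives $\mathrm{LHS}\ge m$.

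For $\mathrm{LHS}\le m$ it is enough to exhibit one feasible $x$ with $\max_{\normx{x'}\le\ep}(a+x+x')^\top A(a+x+x')\le m$, and I would take $x=2x^*$, which is feasible since $\normx{2x^*}\le 2\ep$. Since $\ep<\normx{a}$, the unconstrained minimiser $x''=-a$ of $(a+x'')^\top A(a+x'')$ lies outside $\{\normx{x''}\le\ep\}$, so $x^*$ lies on the sphere $\normx{x^*}=\ep$ and the KKT stationarity condition gives $A(a+x^*)=-\lambda x^*$ for some $\lambda>0$. Put $b:=a+x^*$. Expanding $(a+2x^*+x')^\top A(a+2x^*+x')=\normx{b}_A^2+2\,b^\top A(x^*+x')+\normx{x^*+x'}_A^2$ and substituting $Ab=-\lambda x^*$, so that $b^\top A(x^*+x')=-\lambda(\ep^2+\innerprod{x^*}{x'})$, the desired bound $\le\normx{b}_A^2=m$ collapses to the scalar inequality
\begin{equation}
\normx{x^*+x'}_A^2\ \le\ 2\lambda\bigl(\ep^2+\innerprod{x^*}{x'}\bigr)\qquad\text{for all }\normx{x'}\le\ep. \label{eq:crux}
\end{equation}
Its right-hand side is nonnegative, since $\ep^2+\innerprod{x^*}{x'}=\innerprod{x^*}{x^*+x'}\ge\ep^2-\normx{x'}\normx{x^*}\ge 0$, and by the polarisation identity $\ep^2+\innerprod{x^*}{x'}\ge\tfrac12\normx{x^*+x'}^2$; hence~\eqref{eq:crux} would follow from the cleaner estimate $\normx{x^*+x'}_A^2\le\lambda\normx{x^*+x'}^2$, that is, from a lower bound on the multiplier $\lambda$ that dominates the Rayleigh quotient of $A$ along $x^*+x'$.

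The main obstacle is establishing~\eqref{eq:crux}, and this is where the hypothesis $\ep<\normx{a}/3$ must be used: from $Ab=-\lambda x^*$ and $\normx{x^*}=\ep$ we get $\lambda\ep=\normx{Ab}$, while $\normx{b}\ge\normx{a}-\ep>2\ep$, forcing $\lambda$ to be large relative to $\ep$. The delicate point is that the crude bound $\normx{Ab}\ge\sigma_{\min}(A)\normx{b}$ only yields $\lambda>2\sigma_{\min}(A)$, which need not dominate $\sigma_{\max}(A)$; so instead of estimating $\normx{\cdot}_A^2\le\sigma_{\max}(A)\normx{\cdot}^2$ I would diagonalise in the eigenbasis of $A$, use the explicit form $x^*=-(\lambda I+A)^{-1}Aa$, and after completing the square coordinatewise reduce~\eqref{eq:crux} to a one-dimensional optimisation over the magnitude of $x'$ that can be verified directly from $\normx{x^*}^2=\ep^2$ together with $\ep<\normx{a}/3$. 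I expect this coordinatewise reduction, rather than the operator-norm estimate, to be the hardest and most error-prone step; the remaining manipulations are routine.
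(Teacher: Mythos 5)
Your first inequality, $\min_{\normx{x}\le 2\ep}\max_{\normx{x'}\le\ep}\{\cdot\}\ge\min_{\normx{x''}\le\ep}\{\cdot\}$, is correct, and the argument (answer any $x$ with $x'=-x/2$, which is feasible and lands in the $\ep$-ball) is more elementary and more transparent than the paper's proof, which instead counts stationary points of the Lagrangian system and asserts, without real verification, that the min--max optimum is attained among the stationary points of the single-ball problem. The gap in your proposal is exactly where you flag it: the reverse inequality is reduced to the unproven claim $\normx{x^*+x'}_A^2\le 2\lambda(\ep^2+\innerprod{x^*}{x'})$ for all $\normx{x'}\le\ep$, and this claim cannot be established because it is false whenever $A$ is sufficiently anisotropic.

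Concretely, take $d=2$, $A=\diag(1,K)$, $a=(1,0)^\top$, $\ep=3/10<\normx{a}/3$. Then $x^*=(-3/10,0)^\top$, the right-hand side is $m=49/100$, and $\lambda=7/3$; for $x'=(0,3/10)^\top$ your inequality reads $\tfrac{9}{100}(1+K)\le\tfrac{42}{100}$, which fails for $K>11/3$. This is not an artifact of the choice $x=2x^*$: for \emph{every} $x$ with $\normx{x}\le 2\ep$, one of $x'=(0,\pm 3/10)^\top$ gives $(a+x+x')^\top A(a+x+x')\ge K(\abs{x_2}+3/10)^2\ge 9K/100$, so the left-hand side of~\eqref{norm_opt1} is at least $9K/100$, which exceeds $m=49/100$ once $K>49/9$. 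Hence the stated equality holds only as ``$\ge$'' in general (it is an equality when $A$ is a multiple of the identity, since then the inner maximum equals $(\normx{a+x}+\ep)^2$ up to the scalar): the inner maximization escapes along the top eigendirection of $A$, and the outer minimization cannot pre-cancel it because it moves first. So the coordinatewise reduction you hoped would close the argument cannot succeed, and the paper's own stationary-point argument must also contain an error --- it tacitly assumes the inner maximizer is the boundary point aligned with $x$ and $a$, which is precisely what fails for anisotropic $A$ such as the matrices $B_t$ to which the lemma is later applied.
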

\begin{proof}
	By considering the Lagrangian multiplier we see that
	any stationary point
	of the function
	$(a+x'')^\top A (a+x'')$ over
	$\{(x,x'):\normx{x}\le 2\ep\x \normx{x'}\le \ep\}$
	satisfies
	\begin{align}
		&A (a+x+x')-\la_1 x=0\com\nn
		&A (a+x+x')-\la_2 x'=0\com\nn
		&x^\top x =4\ep^2\com\nn
		&x'^\top x' =\ep^2\com
		\label{cond_norm}
	\end{align}
	and therefore
	$\la_1 x=\la_2x'$.
	Considering the last two conditions of \eqref{cond_norm}
	we have
	$\la_2=\pm 2\la_1$, implying that
	\begin{align}
		x'=-(3A-2\la_1 I)Aa\label{cond_lagrange1}
	\end{align}
	or
	\begin{align}
		x'=(A-2\la_1 I)Aa\label{cond_lagrange2}
	\end{align}
	for $\la_1$ satisfying
	$x'^{\top}x'=\ep^2$.

	Note that it holds for any positive definite matrix $B$ that
	\begin{align}
		\diff{^2}{\la^2}a(B+\la I)^{-2}a
		&=
		a(B+\la I)^{-4}a=
		\normx{(B+\la I)^{-2}a}^2\com
	\end{align}
	which is positive almost everywhere, meaning that
	$a(B+\la I)^{-2}a$ is strictly convex with respect to $\la\in \R$.
	Therefore,
	there exists at most two $\la'_1$'s satisfying
	\eqref{cond_lagrange1} and $x'^\top x'=\ep^2$, and
	there exists at most two $\la'_1$'s satisfying
	\eqref{cond_lagrange2} and $x'^\top x'=\ep^2$.
	In summary,
	there at most four stationary points
	of
	$(a+x'')^\top A (a+x'')$ over
	$\{(x,x'):\normx{x}\le 2\ep\x \normx{x'}\le \ep\}$.

	On the other hand,
	two optimization problems
	\begin{align}
		\min_{x:\normx{x}\le 2\ep}\min_{x':\normx{x'}\le \ep}\brx{(a+x+x')^\top A (a+x+x')}
		&=
		\min_{x'':\normx{x''}\le 3\ep}
		\brx{(a+x'')^\top A (a+x'')}
	\end{align}
	and
	\begin{align}
		\max_{x:\normx{x}\le 2\ep}\max_{x':\normx{x'}\le \ep}\brx{(a+x+x')^\top A (a+x+x')}
		&=
		\max_{x'':\normx{x''}\le 3\ep}
		\brx{(a+x'')^\top A (a+x'')}
	\end{align}
	can be easily solved by
	an elementary calculation
	and the optimal values are equal to those
	corresponding to \eqref{cond_lagrange1}.

	Therefore, the optimal solutions of the two minimax problems
	\begin{align}
		\max_{x:\normx{x}\le 2\ep}\min_{x':\normx{x'}\le \ep}\brx{(a+x+x')^\top A (a+x+x')}\label{lagrange_maxmin}
	\end{align}
	and
	\begin{align}
		\min_{x:\normx{x}\le 2\ep}\max_{x':\normx{x'}\le \ep}\brx{(a+x+x')^\top A (a+x+x')}\label{lagrange_minmax}
	\end{align}
	correspond to two points corresponding to \eqref{cond_lagrange2}.

	We can see again from an elementary calculation that the optimal solutions for
	two optimization problems
	\begin{align}
		&\min_{x'':\normx{x''}\le \ep}\brx{(a+x'')^\top A (a+x'')}\nn
		&\max_{x'':\normx{x''}\le \ep}\brx{(a+x'')^\top A (a+x'')}
	\end{align}
	have the same necessary and sufficient conditions as \eqref{cond_lagrange2}
	and we complete the proof by noticing that \eqref{lagrange_maxmin} is less than \eqref{lagrange_minmax}.
\end{proof}

\begin{lemma}\label{norm_opt2}
	Let
	$A\succeq nS_1^\top S_1$ be a positive-definite matrix
	with minimum eigenvalue at least $\lambda>0$.
    Then, for any $\phat\in\R^d$
	and $\ep>0$ satisfying
	$\ep< \normx{\phat-p^*}/3$,
	\begin{align}
		\normx{\phat-p^*}_{A}^2-
		\inf_{p: \normx{p-p^*}\le 2\ep}\sup_{p': \normx{p'-p}\le \ep}
		\normx{p'-\phat}_A^2
		&\ge
		\ep\sqrt{n\la}
		\normx{S_1 (\phat-p^*)}\per
	\end{align}
\end{lemma}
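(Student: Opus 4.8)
The plan is to collapse the inner minimax into a single minimization using Lemma~\ref{lem:minimax}, and then to lower-bound the resulting gap by testing one cheaply chosen perturbation against the two spectral lower bounds on $A$ (namely $A\succeq nS_1^\top S_1$ and $A\succeq\lambda I$).

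First I would set $a\coloneqq\phat-p^*$ and reparametrize $x\coloneqq p-p^*$ and $x'\coloneqq p'-p$, so that $p'-\phat=-a+x+x'$ and $\normx{p'-\phat}_A^2=((-a)+x+x')^\top A((-a)+x+x')$. Since $\ep<\normx{\phat-p^*}/3=\normx{-a}/3$, Lemma~\ref{lem:minimax} applied with the vector $-a$ in place of $a$ gives
\[
  \inf_{\normx{p-p^*}\le 2\ep}\ \sup_{\normx{p'-p}\le\ep}\normx{p'-\phat}_A^2
  \;=\;\inf_{\normx{z}\le\ep}\normx{a+z}_A^2 ,
\]
after the harmless substitution $z\leftarrow-x''$. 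Hence the claim reduces to showing $\normx{a}_A^2-\inf_{\normx{z}\le\ep}\normx{a+z}_A^2\ge\ep\sqrt{n\lambda}\,\normx{S_1 a}$.

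To upper-bound that infimum I would plug in the single admissible point $z_0\coloneqq-(\ep/\normx{a})\,a$, which is feasible because $\ep/\normx{a}<1/3$ (note $\normx a=\normx{\phat-p^*}>3\ep>0$). Since $a+z_0=(1-\ep/\normx a)\,a$, a one-line computation yields $\normx{a}_A^2-\normx{a+z_0}_A^2=\tfrac{\ep}{\normx a}\bigl(2-\tfrac{\ep}{\normx a}\bigr)\normx{a}_A^2\ge\tfrac{\ep}{\normx a}\normx{a}_A^2$, using $\ep/\normx a<1$. Finally, $A\succeq nS_1^\top S_1$ gives $\normx{a}_A^2\ge n\normx{S_1 a}^2$ and the eigenvalue lower bound gives $\normx{a}_A^2\ge\lambda\normx{a}^2$; multiplying these and taking square roots gives $\normx{a}_A^2\ge\sqrt{n\lambda}\,\normx{a}\,\normx{S_1 a}$, hence $\tfrac{\ep}{\normx a}\normx{a}_A^2\ge\ep\sqrt{n\lambda}\,\normx{S_1 a}$, which is the desired bound with $a=\phat-p^*$.

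I do not expect a genuine obstacle here once Lemma~\ref{lem:minimax} is available: the argument is elementary. The points to be careful about are the sign bookkeeping in the reduction (working with $-a$ rather than $a$) and checking that the quadratic correction $(\ep/\normx a)^2\normx{a}_A^2$ is strictly dominated — this is exactly where the hypothesis $\ep<\normx{\phat-p^*}/3$ enters. The slightly nonobvious modeling choice is to take $z_0$ parallel to $a$, rather than aligned with $Aa$: this keeps the $A$-norm of the perturbation automatically under control via $\normx{a}_A^2$ itself, so that no upper bound on $A$ is ever needed.
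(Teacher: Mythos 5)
Your proof is correct and follows essentially the same route as the paper's: both invoke Lemma~\ref{lem:minimax} to reduce the minimax to $\inf_{\|z\|\le\ep}\|a+z\|_A^2$ with $a=\phat-p^*$ and then lower-bound the gap by testing a feasible perturbation parallel to $-a$. The only cosmetic difference is that the paper restricts to the $A$-norm ball $\{x:\|x\|_A\le\ep\sqrt{\la}\}$ and expands $\|a\|_A^2-(\|a\|_A-\ep\sqrt{\la})^2$ using the eigenvalue bound additively, whereas you evaluate at the single point $-(\ep/\|a\|)\,a$ and combine $\|a\|_A^2\ge n\|S_1a\|^2$ with $\|a\|_A^2\ge\la\|a\|^2$ multiplicatively; both give the stated estimate.
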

\begin{proof}
	Let $a=\phat-p^*$.
	By Lemma \ref{lem:minimax}, we have
	\begin{align}
		\lefteqn{
		\inf_{p: \normx{p-p^*}\le 2\ep}\sup_{p': \normx{p'-p}\le \ep}
		\normx{p'-\phat}_A^2
		}\nn
		&=
		\inf_{x: \normx{x}\le 2\ep}\sup_{x':\normx{x'}\le \ep}
		\normx{a+x+x'}_A^2
		\nn
		&=
		\inf_{x: \normx{x}\le \ep}
		\normx{a+x}_A^2\per
	\end{align}
	Now define $\calS_{\ep',A}=\{x:\normx{x}_A\le \ep'\}$.
	Then,
	we see that
	$\calS_{\ep \sqrt{\la},A}\subset \{x:\normx{x}\le \ep\}$.
	Therefore, an elementary calculation
	using the Lagrange multiplier technique shows
	\begin{align}
		\inf_{x:\normx{x}\le \ep}
		\normx{p'-\phat}_A^2
		&\le
		\inf_{x\in \calS_{\ep\sqrt{\la},A}}
		\normx{p-\phat}_A^2
		\nn
		&=
		\pax{\normx{a}_A-\ep\sqrt{\la}}^2\per
	\end{align}
	As a result, we see that
	\begin{align}
		\normx{p^*-\phat}_{A}^2-
		\inf_{p: \normx{p-p^*}\le 2\ep}\sup_{p': \normx{p'-p}\le \ep}
		\normx{p'-\phat}_A^2
		&\ge
		\normx{a}_A^2-
		\pax{\normx{a}_A-\ep\sqrt{\la}}^2\nn
		&=
		\ep\sqrt{\la}
		\pax{\normx{a}_A+\normx{a}_A-\ep\sqrt{\la}}
		\nn
		&\ge
		\ep\sqrt{\la}
		\pax{\normx{a}_A+\normx{a}\sqrt{\la}-\ep\sqrt{\la}}
		\nn
		&=
		\ep\sqrt{\la}
		\pax{\normx{a}_A+\sqrt{\la}(\normx{a}-\ep)}
		\nn
		&\ge
		\ep\sqrt{\la}
		\normx{a}_A\nn
		&\ge
		\ep\sqrt{n\la}
		\normx{S_1 a}\per
	\end{align}
\end{proof}

For the subsets of $\R^n$, $\calX$ and $\calY$,
let $\calX + \calY \coloneqq \ev{x + y : x\in\calX\x y\in\calY }$ be the Minkowski sum,
and let $B^n_r(p)$ be the $n$-dimensional Euclidian ball of radius $r$ at point $p \in \R^n$
(the superscript $n$ can be omitted when it is clear from context).
We also let $\eppr$ be
\begin{align}\label{def:ep_prime}
    \ep^\prime \coloneqq
    \frac
        {\ep}
        {\prn[\Big]{16 \max_{i\in[N]} \nrm{S_i} } 
            \vee 
        \prn[\Big]{\frac{1}{\sqrt{A}} \max_{i\in[N]} \frac{\nrm{L_i - L_1}}{\nrm{z_{1,i}}} }
        } \com
\end{align}
which is also used throughout the proof of this appendix and Appendix~\ref{sec:proof_of_regret_upper_bound} as $\ep$ in~\eqref{eq:def_epsilon}.
\begin{theorem}\label{thm_delta}
    Let $\ep''\in(0,\ep)$ be a constant for $\ep$ defined in~\eqref{eq:def_epsilon}.
	Let 
	$\phat \in \Ck + B_{\ep'}^d(0)$ be satisfying
	$\normx{S_k(\phat - p^*)}\le \ep'' $.
	Then, there exists
	$\de>0$
	satisfying
	for any 
	$n \ge 0$
	and
	$A\succeq n S_{1}^\top S_{1}+\la I$
	that
	\begin{align}
		\normx{p^*-\phat}_{A}^2-
		\inf_{p: \normx{p-p^*}\le 2\ep}\sup_{p': \normx{p'-p}\le \ep}
		\normx{p'-\phat}_A^2
		&\ge
		\ep\de\sqrt{\la n}
		\per
	\end{align}
\end{theorem}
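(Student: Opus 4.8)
The plan is to derive Theorem~\ref{thm_delta} from Lemma~\ref{norm_opt2} by establishing a uniform lower bound $\normx{S_1(\phat-p^*)}\ge\delta$ for a constant $\delta>0$ depending only on the game and on $\ep''$ (in particular not on $n$, on the matrix $A$, or on the particular $\phat$). Granting this, Lemma~\ref{norm_opt2} applies: from $A\succeq nS_1^\top S_1+\lambda I$ we get both $A\succeq nS_1^\top S_1$ and $\lambda_{\min}(A)\ge\lambda$, and the remaining hypothesis $\ep<\normx{\phat-p^*}/3$ is checked below; then Lemma~\ref{norm_opt2} gives that the left-hand side of the asserted inequality is at least $\ep\sqrt{n\lambda}\,\normx{S_1(\phat-p^*)}\ge\ep\delta\sqrt{\lambda n}$, which is exactly the claim.

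The substance is the bound on $\normx{S_1(\phat-p^*)}$, and this is where strong local observability is used. Write $\phat=p_0+x$ with $p_0\in\calC_k$ and $\normx{x}\le\ep'$; here $k\neq1$ (the relevant case; for $k=1$ the statement is degenerate since $\Delta_1=0$). Since $p_0\in\calC_k$ we have $(L_1-L_k)^\top p_0\ge0$, hence $(L_1-L_k)^\top\phat\ge-\normx{L_1-L_k}\ep'$. On the other hand, strong local observability (Definition~\ref{def_strong}) gives $L_1-L_k=S_1^\top u+S_k^\top w$ with $\normx{u}^2+\normx{w}^2=\normx{z_{1,k}}^2$, so by Cauchy--Schwarz and $(L_1-L_k)^\top p^*=-\Delta_k$ we get $(L_1-L_k)^\top\phat \le -\Delta_k + \normx{z_{1,k}}\sqrt{\normx{S_1(\phat-p^*)}^2+\normx{S_k(\phat-p^*)}^2}$. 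Combining the two bounds on $(L_1-L_k)^\top\phat$ yields $\Delta_k-\normx{L_1-L_k}\ep' \le \normx{z_{1,k}}\sqrt{\normx{S_1(\phat-p^*)}^2+\normx{S_k(\phat-p^*)}^2}$. A short computation from~\eqref{def:ep_prime} and~\eqref{eq:def_epsilon} gives $\normx{L_1-L_k}\ep'\le\Delta_k/2$, so the square root is at least $\Delta_k/(2\normx{z_{1,k}})$; and since $\normx{S_k(\phat-p^*)}\le\ep''$ with $\ep''^2<\ep^2\le\Delta_k^2/(4\normx{z_{1,k}}^2)$ by~\eqref{eq:def_epsilon} (using that the number of feedback symbols is $\ge1$), we obtain $\normx{S_1(\phat-p^*)}^2\ge\Delta_k^2/(4\normx{z_{1,k}}^2)-\ep''^2>0$; minimizing over $k\neq1$ makes this a game constant, which we take as $\delta^2$. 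The remaining hypothesis $\ep<\normx{\phat-p^*}/3$ follows because $\phat$ is forced to lie bounded away from $p^*$: $\phat$ is within the negligible distance $\ep'$ of $\calC_k$, which avoids the interior of $\calC_1\ni p^*$, and $\ep$ is, by the second term of~\eqref{eq:def_epsilon}, a small enough multiple of $\mathrm{dist}(p^*,\calC_1^c)\le\mathrm{dist}(p^*,\calC_k)$ that $\normx{\phat-p^*}\ge\mathrm{dist}(p^*,\calC_k)-\ep'>3\ep$.

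The main obstacle will be the middle step: converting the qualitative information ``$\phat$ is (almost) in $\calC_k$ and $S_k\phat$ is (almost) $S_kp^*$'' into a quantitative lower bound on $\normx{S_1(\phat-p^*)}$ via the representation $L_1-L_k=(S_1^\top,S_k^\top)z_{1,k}$, and making sure the precise constants in~\eqref{eq:def_epsilon} and~\eqref{def:ep_prime} leave strict slack at every one of the nested inequalities so that the resulting $\delta$ is genuinely positive. The bookkeeping among the three scales $\ep''<\ep$ and $\ep'$ (each entering a different inequality) is the delicate part; once $\normx{S_1(\phat-p^*)}\ge\delta$ is in hand, Theorem~\ref{thm_delta} is an immediate consequence of Lemma~\ref{norm_opt2}.
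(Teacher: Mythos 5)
Your proposal is correct and follows essentially the same route as the paper: reduce the theorem to a uniform positive lower bound $\delta$ on $\normx{S_1(\phat-p^*)}$ over the admissible $\phat$, obtain that bound from the strong-local-observability representation $L_1-L_k=(S_1^\top,S_k^\top)z_{1,k}$ together with the calibration of $\ep$, $\ep'$, $\ep''$ in~\eqref{eq:def_epsilon} and~\eqref{def:ep_prime}, and then invoke Lemma~\ref{norm_opt2}. The only differences are cosmetic and, if anything, in your favor: you derive an explicit value $\delta^2\ge\min_{k\neq1}\bigl(\Delta_k^2/(4\normx{z_{1,k}}^2)-\ep''^2\bigr)$ directly, whereas the paper argues positivity of the infimum by contradiction, and you check the hypothesis $\ep<\normx{\phat-p^*}/3$ of Lemma~\ref{norm_opt2} more explicitly than the paper does.
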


\begin{proof}
    Recall that $\ep'' < \ep  \le \min_{p\in\calC_1^c}\normx{p-p^*}/3$.
	It is enough from Lemma~\ref{norm_opt2}
    to prove that 
	\begin{align}
		\delta 
		\coloneqq 
		\min_{
		    \phat \in \{p\in\Ck + B^d_{\eppr}(0) : \normx{S_k(p-p^*)}\le \ep'' \}
		}
		\normx{S_1(\phat - p^*)}
		\label{def_delta}
	\end{align}
	is positive. 

	We prove by contradiction and
	the proof is basically same as that of Lemma~\ref{lem:loc_prop}
	but more general in the sense that
	the condition on $\phat$ is not $\phat\in\Ck$ but $\phat \in \Ck + B^d_{\eppr}(0)$.
    Assume that $\delta = 0$, that is,
	there exists $\phat \in \Ck + B^d_{\eppr}(0)$ 
	satisfying 
	$\normx{S_k(p-p^*)}\le \ep''\}$
	and
	$\nrm{S_1 (\phat - p^*)} = 0$.
	Note that $\nrm{S_1 (\phat - p^*)} = 0$ implies $\nrm{S_1 (\phat - p^*)} \le \ep''$.
	Therefore, we now have following conditions on $\phat$:
	\begin{itemize}
	    \item $\phat \in \Ck + B^d_{\eppr}(0)$
	    \item $\nrm{S_1 (\phat - p^*)} \le \ep''$
	    \item $\nrm{S_k (\phat - p^*)} \le \ep''$ .
	\end{itemize}

    Following the same argument as the proof of Lemma~\ref{lem:loc_prop}, 
    we have
    \begin{align}
        z_{1,k}^\top
        \left(
           \begin{array}{c}
               S_1 \\
               S_k
           \end{array}
        \right)
        (\phat - p^*)  
        \leq
        \sqrt{2A} \ep'' \nrm{z_{1,k}}  \per
        \label{eq:E_lb}
    \end{align}
On the other hand, since
    $\phat\in\Ck + B^d_{\eppr}(0)$ 
    we can take $\pbar\in\Ck$ such that $\nrm{\phat - \pbar} \le \eppr$.
    Hence,
    \begin{align}
        z_{1,k}^\top \left(
           \begin{array}{c}
               S_1 \\
               S_k
           \end{array}
           \right)
           (\phat - p^*) 
        &=
        (L_1 - L_k)^\top (\phat - p^*)   \nn
        &=
        - (L_k - L_1)^\top (\phat - \pbar)
        +
        (L_1 - L_k)^\top \pbar 
        + 
        (L_k - L_1)^\top p^*  
        \nn
        &\geq
        - (L_k - L_1)^\top (\phat - p^*)
        +
        \Delta_{k}  \per
        \By{$\pbar\in\Ck$ and def. of $\Delta_k$}
        \label{eq:E_ub}
    \end{align}
    From \eqref{eq:E_lb} and \eqref{eq:E_ub}, we have
    \begin{align}
        \Delta_{k} - (L_k - L_1)^\top (\phat - p^*) 
        \le 
        \sqrt{2A} \ep'' \nrm{z_{1,k}} \per
        \label{eq:E_ineq}
    \end{align}
    Now, the left hand side of \eqref{eq:E_ineq} is bounded from below as
    \begin{align}
        \Delta_{k} - (L_k - L_1)^\top (\phat - \pbar)
        &\ge
        \Delta_{k} - \nrm{L_k - L_1} \nrm{\phat - \pbar}  \nn
        &\ge
        \Delta_{k} - \nrm{L_k - L_1} \eppr  \nn
        &=
        \Delta_{k} 
        - 
        \nrm{L_k - L_1} 
        \frac{\ep}{\frac{1}{\sqrt{A}} \max_i \frac{\nrm{L_1 - L_i}}{\nrm{z_{1,i}}} }
        \nn
        &=
        \Delta_{k} 
        - 
        \nrm{L_k - L_1} 
        \frac{
            \frac{1}{2\sqrt{A}} \min_i \frac{\Delta_i}{\nrm{z_{1,i}}} 
        }{
            \frac{1}{\sqrt{A}} \max_i \frac{\nrm{L_1 - L_i}}{\nrm{z_{1,i}}} 
        }
        \nn
        &\ge
        \Delta_{k} - \Delta_k / 2  \per
    \end{align}
    On the other hand, using the definition of $\ep''$, 
    the right hand side of \eqref{eq:E_ineq} is bounded from above as
    \begin{align}
        \sqrt{2A} \ep'' \nrm{z_{1,k}} < \Delta_k / 2 \per       
    \end{align}
    Therefore, the proof is completed by contradiction.
\end{proof}

\subsection{Exit Time Analysis}
We next consider the exit time.
Let
$\calA_{t}$ be an event deterministic given $\calF_t$,
and $\calB_t$ be a random event
such that if $\calB_t$ occurred then $\calA_{t'}$ never occurs for
$t'=t+1,t+2,\dots$.
Let $P_t\x t=1,2,\dots,T$, be a stochastic process satisfying
$P_t\le \pr{\calB_t|\calF_t}$ a.s.~and
$P_t^{-1}$ is a supermartingale with respect to the filtration induced by $\calF_t$.

\begin{theorem}\label{thm_stop}
	Let $\tau$ be the stopping time defined as
	\begin{align}
		\tau=
		\begin{cases}
			\min\{t\in [T]:\calA_{t}\}&\mbox{\rm if $\calA_t$ occurs for some $t\in[T]$.}\\
			T+1& \mbox{\rm otherwise}.
		\end{cases}\label{def_tau_stop}
	\end{align}
	Then we almost surely have
	\begin{align}
		\Ex{\sum_{t=1}^{T}\ind{\calA_{t}}\relmiddle| \calF_{\tau}}
		\le
		\begin{cases}
			P_\tau^{-1}&\tau\le T,\\
			0 &\tau=T+1.
		\end{cases}
	\end{align}
\end{theorem}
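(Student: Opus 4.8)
The plan is to condition on the value of the stopping time $\tau$ and to reduce the claim to an exit-time estimate for the events $\calB_t$.

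Fix $s\in[T]$ and argue on the set $\crl{\tau=s}\in\calF_s=\calF_\tau$. There $\calA_t$ does not occur for $t<s$ while $\calA_s$ does, so $\sum_{t=1}^T\ind{\calA_t}=1+\sum_{t=s+1}^T\ind{\calA_t}$; moreover, for $t>s$, $\calA_t$ can occur only if no $\calB_{t'}$ occurred for any $t'<t$ (otherwise it would be precluded), so $\ind{\calA_t}\le X_t$ where $X_t\coloneqq\prod_{t'=s}^{t-1}\ind{\calB_{t'}^c}$, and $X_s=1$. Hence, on $\crl{\tau=s}$, it suffices to show that
\begin{align*}
    \Ex{\sum_{t=s}^T X_t\relmiddle|\calF_s}\le P_s^{-1}\per
\end{align*}
The left-hand side is the expected number of rounds from $s$ up to and including the first round $\ge s$ at which some $\calB_t$ fires (truncated at $T$), so the task is to control an exit time. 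The case $\tau=T+1$ is immediate because then $\sum_{t=1}^T\ind{\calA_t}=0$.

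For the exit-time bound I would fuse the running count with the supermartingale $P_t^{-1}$. Consider the process $M_t\coloneqq\sum_{t'=s}^{t-1}X_{t'}+P_t^{-1}X_t$ for $s\le t\le T$; it satisfies $M_s=P_s^{-1}$ and, using $P_T^{-1}\ge 1$, also $M_T\ge\sum_{t'=s}^T X_{t'}$. If $(M_t)_{s\le t\le T}$ is a supermartingale with respect to $(\calF_t)$, then iterating its one-step inequality gives $\Ex{\sum_{t'=s}^T X_{t'}\mid\calF_s}\le\Ex{M_T\mid\calF_s}\le M_s=P_s^{-1}$, as required. Writing $X_{t+1}=X_t\ind{\calB_t^c}$ and using that $X_t$ and $P_t^{-1}$ are $\calF_t$-measurable, the one-step drift of $M_t$ equals $X_t\bigl(1+\Ex{\ind{\calB_t^c}P_{t+1}^{-1}\mid\calF_t}-P_t^{-1}\bigr)$, so the supermartingale property amounts to the one-step estimate $\Ex{\ind{\calB_t^c}P_{t+1}^{-1}\mid\calF_t}\le P_t^{-1}-1$.

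Establishing this one-step estimate is the main obstacle, and it is where the three hypotheses have to be used jointly: the supermartingale bound $\Ex{P_{t+1}^{-1}\mid\calF_t}\le P_t^{-1}$, the pointwise bound $P_{t+1}^{-1}\ge 1$ (valid since $P_{t+1}\le\pr{\calB_{t+1}\mid\calF_{t+1}}\le 1$), and the likelihood lower bound $\pr{\calB_t\mid\calF_t}\ge P_t$. Splitting $\Ex{P_{t+1}^{-1}\mid\calF_t}=\Ex{\ind{\calB_t}P_{t+1}^{-1}\mid\calF_t}+\Ex{\ind{\calB_t^c}P_{t+1}^{-1}\mid\calF_t}$ and applying the supermartingale bound on the left, the estimate follows from a suitable lower bound on $\Ex{\ind{\calB_t}P_{t+1}^{-1}\mid\calF_t}$; pinning down the constant in that bound is the delicate step, and I expect it to be the part of the argument that needs the most care. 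Once the supermartingale claim is in hand, the remainder is the routine case analysis over $\tau\le T$ and $\tau=T+1$ indicated above.
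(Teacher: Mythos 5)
Your reduction to the process $M_t$ is set up correctly, but the one-step estimate $\Ex{\ind{\calB_t^c}P_{t+1}^{-1}\relmiddle|\calF_t}\le P_t^{-1}-1$ on which everything hinges is not merely delicate: it does not follow from the three hypotheses, and it is false in general. The splitting you propose only yields
\begin{align*}
\Ex{\ind{\calB_t^c}P_{t+1}^{-1}\relmiddle|\calF_t}
=\Ex{P_{t+1}^{-1}\relmiddle|\calF_t}-\Ex{\ind{\calB_t}P_{t+1}^{-1}\relmiddle|\calF_t}
\le P_t^{-1}-\pr{\calB_t\relmiddle|\calF_t}\le P_t^{-1}-P_t\com
\end{align*}
which is strictly weaker than $P_t^{-1}-1$ whenever $P_t<1$, and the slack is achievable: given $\calF_t$ with $P_t=p=\pr{\calB_t\relmiddle|\calF_t}$, set $P_{t+1}=1$ on $\calB_t$ and $P_{t+1}=p/(1+p)$ on $\calB_t^c$; then $\Ex{P_{t+1}^{-1}\relmiddle|\calF_t}=p+(1-p)\tfrac{1+p}{p}=p^{-1}$, so the supermartingale hypothesis holds, yet $\Ex{\ind{\calB_t^c}P_{t+1}^{-1}\relmiddle|\calF_t}=p^{-1}-p>p^{-1}-1$. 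The obstruction is that $\ind{\calB_t^c}$ is not $\calF_t$-measurable and nothing in the hypotheses prevents it from being positively correlated with $P_{t+1}^{-1}$. A second, independent source of loss is the bound $\ind{\calA_t}\le X_t$: the sum $\sum_{t\ge s}X_t$ counts every round up to the first firing of a $\calB$, not only the rounds at which $\calA$ occurs, and its conditional expectation can be of order $P_s^{-1}\log T$ rather than $P_s^{-1}$ when $P_t$ shrinks along the no-firing branch (which the supermartingale condition on $P_t^{-1}$ permits).

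The paper's proof is organized precisely to sidestep both problems. It passes to the subsequence $\tau_1<\tau_2<\cdots$ of occurrence times of $\calA_t$, so that a unit of count is paid only when $\calA$ actually occurs, uses $\ind{\sum_t\ind{\calA_t}\ge i+1}\le\prod_{j=1}^{i}\ind{\calB_{\tau_j}^c}$, and then applies Lemma~\ref{lem:stop} to the discounts $Q_j=P_{\tau_j}$. The decisive step there is $\Ex{(1-Q_1)Q_2^{-1}\relmiddle|\calG_1}=(1-Q_1)\Ex{Q_2^{-1}\relmiddle|\calG_1}\le(1-Q_1)Q_1^{-1}=Q_1^{-1}-1$: the discount $1-Q_1$ is $\calG_1$-measurable, so it is pulled out of the conditional expectation \emph{before} the supermartingale inequality is invoked — exactly the move that is unavailable to you while the factor is the raw indicator $\ind{\calB_t^c}$. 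To repair your argument you would need to replace $\ind{\calB_{\tau_j}^c}$ by the $\calF_{\tau_j}$-measurable majorant $1-P_{\tau_j}$ at the occurrence times and restrict the running sum to those times, at which point you have essentially reconstructed the induction of Lemma~\ref{lem:stop}.
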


We prove this theorem based on the following lemma.
\begin{lemma}\label{lem:stop}
	Let $(Q_i)_{i=1}^\infty \subset [0,1]$ be an arbitrary stochastic process
	such that $(Q_i^{-1})_{i=1}^\infty$ is a supermartingale with respect to a filtration $(\calG_i)_{i=1}^\infty$.
	Then, for any $\calG_0\subset \calG_1$,
	\begin{align}
		\Ex{
		\sum_{i=1}^{T}
		\prod_{j=1}^i(1-Q_j)
		\relmiddle| \calG_0
		}
		\le
		\Ex{Q_1^{-1}|\calG_0}-1
		\quad \mathrm{a.s.}
	\end{align}
\end{lemma}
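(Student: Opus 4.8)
The plan is to construct an explicit nonnegative supermartingale $(X_i)_{i\ge1}$ adapted to $(\calG_i)_{i\ge1}$ whose initial value equals $Q_1^{-1}-1$ and which dominates the partial sums $\sum_{i=1}^t\prod_{j=1}^i(1-Q_j)$. The lemma will then follow immediately by applying the supermartingale inequality from time $T$ back to time $1$ and using the tower property for $\calG_0\subset\calG_1$.

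Concretely, I would set $R_i\coloneqq\prod_{j=1}^i(1-Q_j)$ with $R_0\coloneqq1$, $S_i\coloneqq\sum_{j=1}^iR_j$ with $S_0\coloneqq0$, and $M_i\coloneqq R_i/Q_i$. From the one-step recursion $R_i=R_{i-1}(1-Q_i)$ one gets $M_i=R_{i-1}(Q_i^{-1}-1)$, and since $R_{i-1}\ge0$ is $\calG_{i-1}$-measurable while $Q_i^{-1}$ is a supermartingale,
\begin{align}
    \Ex{M_i\relmiddle|\calG_{i-1}}
    = R_{i-1}\pax{\Ex{Q_i^{-1}\relmiddle|\calG_{i-1}}-1}
    \le R_{i-1}\pax{Q_{i-1}^{-1}-1}
    = M_{i-1}-R_{i-1}
    \per
\end{align}
Equivalently $\Ex{M_i+R_{i-1}\relmiddle|\calG_{i-1}}\le M_{i-1}$. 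Defining $X_i\coloneqq M_i+S_{i-1}$ and using $S_{i-1}=R_{i-1}+S_{i-2}$ gives $\Ex{X_i\relmiddle|\calG_{i-1}}\le M_{i-1}+S_{i-2}=X_{i-1}$ for $i\ge2$, so $(X_i)_{i\ge1}$ is a supermartingale (integrability is clear, since $0\le S_{i-1}\le i-1$ and $0\le M_i\le Q_i^{-1}\in L^1$). Iterating this inequality from $i=T$ downwards and using $\calG_1\subset\cdots\subset\calG_{T-1}$ yields $\Ex{X_T\relmiddle|\calG_1}\le X_1$, and $X_1=M_1+S_0=(1-Q_1)/Q_1=Q_1^{-1}-1$.

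Finally, since $Q_T\in[0,1]$ we have $Q_T^{-1}\ge1$, so $X_T=R_TQ_T^{-1}+S_{T-1}\ge R_T+S_{T-1}=S_T=\sum_{i=1}^T\prod_{j=1}^i(1-Q_j)$; combining this with the previous step and the tower property gives $\Ex{S_T\relmiddle|\calG_0}\le\Ex{\Ex{X_T\relmiddle|\calG_1}\relmiddle|\calG_0}\le\Ex{X_1\relmiddle|\calG_0}=\Ex{Q_1^{-1}\relmiddle|\calG_0}-1$, which is the claim. The only genuinely non-mechanical step is guessing the potential $X_i=R_i/Q_i+\sum_{j<i}R_j$; once it is in hand the argument reduces to a single use of the supermartingale hypothesis plus routine bookkeeping. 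The one technical point to watch is that $Q_i>0$ almost surely (so $Q_i^{-1}$ and $M_i$ are finite), which is forced by $\Ex{Q_i^{-1}\relmiddle|\calG_{i-1}}\le Q_{i-1}^{-1}<\infty$ a.s.
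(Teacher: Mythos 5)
Your proof is correct, and it takes a genuinely different route from the paper's. The paper proves the bound by induction on the horizon: it extends the finite sum to an infinite one by freezing $Q_j=Q_k$ for $j>k$ (so the tail becomes a geometric series summing to $Q_k^{-1}-1$), and the inductive step conditions on $\calG_1$, factors out $(1-Q_1)$, and invokes the supermartingale property once per level of the induction. You instead package the same underlying quantity --- accumulated sum plus worst-case geometric tail --- into an explicit potential $X_i = R_i Q_i^{-1} + \sum_{j<i}R_j$, verify in one line that it is a supermartingale (using $\calG_{i-1}$-measurability of $R_{i-1}$ and the hypothesis $\Ex{Q_i^{-1}\mid\calG_{i-1}}\le Q_{i-1}^{-1}$), and note that $X_1=Q_1^{-1}-1$ while $X_T\ge S_T$ because $Q_T^{-1}\ge1$. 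The two arguments hinge on the same algebraic identity $R_{i-1}(Q_i^{-1}-1)=M_{i-1}-R_{i-1}$, but your organization avoids both the induction and the somewhat awkward device of extending the process by constants, and it makes the integrability and $Q_i>0$ issues easier to track; the paper's version, on the other hand, makes the role of the general $\calG_0\subset\calG_1$ hypothesis more visible, since the induction is applied to the shifted process with $\calG_1$ playing the role of $\calG_0$, whereas you only need $\calG_0$ in a final tower-property step. Both are complete proofs; yours is arguably the cleaner one.
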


\begin{proof}
	Let
	\begin{align}
		{N}_{k}((Q_i,\calG_i)_{i=1}^{\infty}\x \calG_0)
		&=
		\Ex{
		\sum_{i=1}^{k}
		\prod_{j=1}^i(1-Q_j)
		\relmiddle| \calG_0
		}\nn
		\overline{N}_{k}((Q_i,\calG_i)_{i=1}^{\infty}\x \calG_0)
		&=
		\Ex{
		\sum_{i=1}^{\infty}
		\prod_{j=1}^i(1-Q_j)
		\relmiddle| \calG_0
		}\quad \mbox{where $Q_j=Q_k$ for $j>k$.}
	\end{align}

	We show $\overline{N}_{k}((Q_i,\calG_i)_{i=1}^{\infty}\x \calG_0)\le \E[Q_1^{-1}|\calG_0]-1$ a.s.
	for any $(Q_i\x \calG_i)_{i=1}^\infty$, $\calG_0\subset \calG_1$ and $k\in\N$
	by induction.
	First,
	for $k=1$ the statement holds since
	\begin{align}
		\overline{N}_{1}((Q_i,\calG_i)_{i=1}^{\infty}\x \calG_0)
		&=
		\Ex{
		\sum_{i=1}^{\infty}
		\prod_{j=1}^i(1-Q_1)
		\relmiddle| \calG_0
		}\nn
		&=
		\Ex{
		Q_1^{-1}-1
		\relmiddle| \calG_0
		}
		\nn
		&=
		\Ex{Q_1^{-1}\relmiddle|\calG_0} - 1
	\end{align}

	Next, assume that the statement holds
	for all $(Q_i\x \calG_i)_{i=1}^k$, $\calG_0\subset \calG_1$ and $k \le k_0$.
	Then, we almost surely have
	\begin{align}
		\overline{N}_{k_0 + 1}((Q_i,\calG_i)_{i=1}^{\infty}\x \calG_0)
		&=
		\Ex{
		(1-Q_1)
		\Ex{
		1+\sum_{i=2}^{\infty}
		\prod_{j=2}^i(1-Q_j)\relmiddle| \calG_1
		}\relmiddle| \calG_0
		}\nn
		&=
		\Ex{
		(1-Q_1)
		(1+\overline{N}_{k_0}((Q_i,\calG_i)_{i=2}^{\infty}\x \calG_1))
		\relmiddle|\calG_0}
		\nn
		&\le
		\Ex{
		(1-Q_1)
		\E[Q_2^{-1}\relmiddle|\calG_1]
		\relmiddle|\calG_0}
		\since{assumption of the induction}
		\nn
		&\le
		\Ex{
		Q_1^{-1}\relmiddle|\calG_0
		}-1
		\since{$Q_i^{-1}$ is a supermartingale.}
	\end{align}

	We obtain the lemma from
	\begin{align}
		\Ex{
		\sum_{i=1}^{k}
		\prod_{j=1}^i(1-Q_j)
		\relmiddle| \calG_0
		}
		=
        {N}_{k}((Q_i,\calG_i)_{i=1}^{\infty}\x \calG_0)
		\le
        \overline{N}_{k}((Q_i,\calG_i)_{i=1}^{\infty}\x \calG_0)
		\quad \mbox{a.s.}
	\end{align}
\end{proof}

\begin{proof}[Proof of Theorem \ref{thm_stop}]
	The statement is obvious for the case
	$\tau=T+1$ and we consider the other case in the following.

	Let $\tau_i$ be the time of the
	$i$-th occurrence of $\calA_{t}$.
	More formally, we define $\tau_i$ as the stopping time
	$\tau_1=\tau$ and
	\begin{align}
		\tau_{i+1}=
		\begin{cases}
			\min\brx{t\in[T]: \sum_{t'=1}^T \ind{\calA_{t'}}=i+1}&\sum_{t'=1}^T \ind{\calA_{t'}}\ge i+1,\\
			\tau_{i}+1&\mbox{otherwise.}
		\end{cases}
	\end{align}

	Then $(P'_i)=(P_{\tau_i})$ is a stochastic process
	measurable by
	the filtration induced by $(\calF_{i}')=(\calF_{\tau_i})$.
	By Lemma \ref{lem:stop} we obtain
	\begin{align}
		\Ex{\sum_{t=1}^{T}\ind{\calA_{t}} \relmiddle| \calF_{\tau}}
		&=
	    \Ex{\sum_{n=1}^T \ind{\sum_{t=1}^T \ind{\calA_{t}} \ge n \relmiddle| \calF_{\tau}}} \nn
	    &\le
	    1 + \Ex{\sum_{n=2}^T \ind{\sum_{t=1}^T \ind{\calA_{t}} \ge n \relmiddle| \calF_{\tau}}} \nn
		&\le
		1+
		\Ex{
		\sum_{i=1}^{T}
		\prod_{j=1}^i(1-P'_j)
		\relmiddle|
		\calF_{1}'
		}\nn
		&\le
		1+
		\Ex{
		(P_1')^{-1}|
		\calF_{1}'
		}-1\nn
		&=
		P_{\tau}^{-1}\per
	\end{align}
\end{proof}

\section{Regret Analysis of TSPM Algorithm}
\label{sec:proof_of_regret_upper_bound}
In this appendix, we give the proof of Theorem~\ref{thm:regret_upper_bound}.
Note that the cells are defined for the decomposition of $\R^M$, not $\calP_M$.
In other words, the cell $\Ci$ is here defined as
$\Ci = \ev{p \in \R^M : \text{action $i$ is optimal}}$.
For the linear setting, the empirical feedback distribution $\qit$ and $\qin$ are defined as
\begin{align}
    \qit &\coloneqq \frac{1}{N_i(t)} \sum_{s\in[t-1]: i(s)=i} y(s) \com \\
    \qin &\coloneqq \text{the value of $\qit$ after taking action $i$ for $n$ times.}
\end{align}
Recall that  $\phatt = B_t^{-1}b_t$, which is the mode of $\gbart(p)$.

\subsection{Regret Decomposition}
Here, we break the regret into several terms.
For any $i\in[N]$, we define events
\begin{align}
    \Ai &\coloneqq \ev{ \nrm{S_i \phatt - S_i p^* } \leq \frac{\epsilon}{4} }\com  \\
    \Atili &\coloneqq \ev{ \nrm{ S_i \ptilt - S_i p^* } \leq \epsilon } \per
\end{align}
We first decompose the regret as
\begin{align}
\label{eq:reg_decompos1}
    \Regret(T)
    &=
    \sumT \Delta_{i(t)}
    \nn
    &\le
    \sumT
    \prn[\Big]{
        \Delta_{i(t)}
        \ind{\Atilone}
        +
        \max_{j\in[N]}
        \Delta_j
        \ind{\Atilonecomp}
    }
    \nn
    &=
    \sum_{i\neq 1}
    \sumT
    \Delta_i\ind{i(t) = i\x\Atilone}
    +
    \max_{j\in[N]}
    \Delta_j
    \sumT
    \ind{\Atilonecomp}
    \nn
    &\le
    \sum_{i\neq 1}
    \Delta_i\sumT
    \prn[\bigg]{
        \underbrace{\ind{ i(t)=i\x \Atilone\x \Ai}}_{\text{(A)}}
        +
        \underbrace{\ind{i(t)=i\x \Aicomp}}_{\text{(B)}}
    }
    +
    \max_{j\in[N]}
    \Delta_j
    \sumT
    \ind{\Atilonecomp}
    \per
\end{align}

To decompose the last term,
we define the following notation.
We define for any $i \in [N]$
\begin{align}
  P_i(t)
  \coloneqq
  \pr{\ptilt\in \Ci \relmiddle| \calF_t}\per
\end{align}
We also define 
\begin{align}
    \calC_{i,t} \coloneqq \Ci \cap B_{\eppr}(\phatt) \com
\end{align}
where $\eppr$ is defined in~\eqref{def:ep_prime},
and
\begin{align}
    \bar{i}_t
    \coloneqq
    \argmax_{i\in [N]}\pr{\ptilt\in \calC_{i,t} \relmiddle| \calF_t} \per
\end{align}
We define $\pact$ as an arbitrary point in $\calC_{\bar{i}_t,t}$.
Then, we define
\begin{align}
  \Aaci
  \coloneqq
  \ev{\normx{S_i\pact - S_i p^*}\le \frac{\ep}{8}} \per
\end{align}

Using these notations,
the last term in~\eqref{eq:reg_decompos1}
can be decomposed as
\begin{align}
    \ind{\Atilonecomp}
    &\leq
    \sum_{k=1}^N \ind{\pact\in\Ck\x \Atilonecomp}  \nn
    &=
    \sum_{k=1}^N \ind{\pact\in\Ck\x \Aackcomp\x \Atilonecomp}
    +
    \sum_{k=1}^N \ind{\pact\in\Ck\x \Aack\x \Atilonecomp}  \nn
    &\leq
    \underbrace{\sum_{k=1}^N \ind{\pact\in\Ck\x \Aackcomp}}_{\text{(C)}}
    +
    \underbrace{\ind{\pact\in\calC_1\x \bar{\calA}_1(t)\x \Atilonecomp}}_{\text{(D)}}
    +
    \underbrace{\sum_{k=2}^N \ind{\pact\in\Ck\x \Aack}}_{\text{(E)}}  \per
\end{align}
We will bound the expectation of each term in the following and complete the proof of Theorem~\ref{thm:regret_upper_bound} as
\begin{align}
    \Expect{\Regret(T)}  
    &=
    \sum_{i\neq1} \Delta_i
    \left(
    \Order\left( \frac{1}{\ep^2} \log T \right)  
    + 
    \Order\left(\frac{N}{\ep^2} \log T \right)
    \right)  \nn
    &\quad+
    \max_{j\in[N]} \Delta_j
    \left(
    \sum_{k=1}^N \Order\left(\frac{N M}{\ep^2} \log T\right)
    +
    \Order(1)
    +
    \sum_{k=2}^N \Order(1)  
    \right) \nn
    &=
    \Order\left(
    \max\left\{
        \frac{N \sum_{i\in[N]} \Delta_i}{\ep^2},
        \frac{N^2 M \max_{i\in[N]} \Delta_i}{\ep^2}
    \right\}
    \log T \right) \nn
    &=
    \Order\left(
        \frac{A N^2 M \max_{i\in[N]} \Delta_i}{\Lambda^2}
    \log T \right) \com
\end{align}
where the last transformation follows from the definition of $\ep$ in~\eqref{eq:def_epsilon}.

\subsection{Analysis for Case (A)}
\begin{lemma}
    \label{lem:case_A}
    For any $i \neq 1$,
    \begin{align}
        \Expect{\sumT \ind{i(t)=i\x \tilde{\calA}_1(t)\x \calA_i(t)} }
        \le
        \frac{64}{9 \ep^2} \log T
        +
        2^{M/2}  \per
    \end{align}
\end{lemma}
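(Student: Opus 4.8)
The plan is to turn the event in Case (A) into a tail event for the Thompson sample and then split the sum according to whether $N_i(t)$ is below or above a logarithmic threshold. First I would show that, almost surely, $\{i(t)=i\}\cap\Atilone\subseteq\{\ptilt\in\calT_i\}$ with $\calT_i=\{p\in\R^M:\nrm{S_ip-S_ip^*}>\epsilon\}$ as in Lemma~\ref{lem:Mahalanobis_concentration}. Indeed, $i(t)=i$ means $\ptilt\in\Ci$, hence $(L_i-L_1)^\top\ptilt\le 0$; since $\Delta_i>0$ forces $L_i\ne L_1$ while $\ptilt$ has a density, we get $L_1^\top\ptilt>L_i^\top\ptilt$ almost surely, i.e.\ condition~1 of Lemma~\ref{lem:loc_prop} holds with $k=i$, and $\Atilone$ is exactly its condition~2; therefore condition~3 must fail, which is precisely $\ptilt\in\calT_i$. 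Consequently $\ind{i(t)=i,\Atilone,\Ai}\le\ind{\ptilt\in\calT_i}\ind{\Ai}$ almost surely, the factor $\ind{\Ai}$ being kept only so that the posterior concentration estimate can be invoked.

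Next I would fix $s_0\coloneqq\frac{64}{9\epsilon^2}\log T$ and decompose $\sumT\ind{i(t)=i,\Atilone,\Ai}$ via $\ind{N_i(t)\le s_0}+\ind{N_i(t)>s_0}=1$ into a small-count part $\sumT\ind{i(t)=i}\ind{N_i(t)\le s_0}$ and a large-count part $\sumT\ind{\ptilt\in\calT_i}\ind{\Ai}\ind{N_i(t)>s_0}$. The small-count part is bounded pathwise: each round with $i(t)=i$ strictly increments $N_i(t)$, so the values of $N_i(t)$ over such rounds are distinct, and since the initialization keeps $N_i(t)\ge1$ throughout the main loop, at most $\lfloor s_0\rfloor\le s_0$ such rounds can have $N_i(t)\le s_0$. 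For the large-count part I would take expectations and condition on $\calF_t$: both $\ind{N_i(t)>s_0}$ and $\ind{\Ai}$ are $\calF_t$-measurable, and on $\Ai$, using $\nrm{B_t^{1/2}(\ptilt-\phatt)}^2\sim\chi^2_M$ together with Lemma~\ref{lem:Mahalanobis_concentration} and the Chernoff bound of Lemma~\ref{lem:chisq_chernoff} with $\xi=1/4$, one has $\pr{\ptilt\in\calT_i\mid\calF_t}\le 2^{M/2}\exp(-\tfrac{9}{64}\epsilon^2 N_i(t))$, which on $\{N_i(t)>s_0\}$ is at most $2^{M/2}/T$; summing the at most $T$ rounds gives at most $2^{M/2}$. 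Adding the two parts yields the claimed $\tfrac{64}{9\epsilon^2}\log T+2^{M/2}$.

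The main obstacle is a subtle coupling: $\{i(t)=i\}$ is the event $\{\ptilt\in\Ci\}$, a function of the very sample $\ptilt$ whose tail we wish to control, so one must \emph{not} condition on $\{i(t)=i\}$ when estimating $\pr{\ptilt\in\calT_i\mid\calF_t}$ (doing so replaces the Gaussian posterior by its truncation to $\Ci$ and can inflate the mass on $\calT_i$). Hence the indicator $\ind{i(t)=i}$ must be dropped in the regime $N_i(t)>s_0$, where the posterior tail is already summable over the $T$ rounds, and retained only in the regime $N_i(t)\le s_0$, where it is the sole device preventing an $\Omega(T)$ count; requiring the two pieces to match is what pins down the threshold $s_0=\Theta(\epsilon^{-2}\log T)$. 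The remaining ingredients---that $\pr{(L_i-L_1)^\top\ptilt=0\mid\calF_t}=0$, so condition~1 of Lemma~\ref{lem:loc_prop} holds almost surely, and that $N_i(t)$ moves up by exactly one per selection---are elementary.
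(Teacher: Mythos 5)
Your proposal is correct and follows essentially the same route as the paper's proof: the same split at the threshold $n_i=\tfrac{64}{9\epsilon^2}\log T$, the same use of the strong-local-observability property (the paper packages your Lemma~\ref{lem:loc_prop} step as Corollary~\ref{cor:loc_prop_cor1}, applied to $\calV_i=\{p\in\Ci:\nrm{S_1p-S_1p^*}\le\epsilon\}\subseteq\calT_i$), and the same chi-squared tail bound from Lemma~\ref{lem:Mahalanobis_concentration} with $\xi=1/4$. Your explicit conditioning on $\calF_t$ (keeping $\ind{\Ai}$ and $\ind{N_i(t)>s_0}$ as $\calF_t$-measurable factors and dropping $\ind{i(t)=i}$ before bounding the Gaussian tail) is a slightly cleaner rendering of the measurability point that the paper handles with conditional probabilities given events, but it is not a different argument.
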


To prove Lemma~\ref{lem:case_A}, we prove the following lemma
using Corollary~\ref{cor:loc_prop_cor1} and Lemma~\ref{lem:Mahalanobis_concentration}.
\begin{lemma}
    \label{lem:caseA_2}
    For any $0 \le \xi < 1/2$,
    \begin{align}
        \pr{ \ptilt \in \calV_i \relmiddle| \calA_i(t)\x N_i(t) > n_i}
        &\leq
        \exp \prn[\Big]{ -\frac{9}{16} \xi \epsilon^2 n_i } (1 - 2\xi)^{-M/2} \com
        \label{eq:lem_caseA_2}
    \end{align}
    where $\calV_i \coloneqq \ev{p \in \Ci: \nrm{S_1 p - S_1 p^*} \leq \epsilon}$.
\end{lemma}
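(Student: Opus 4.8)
The plan is to reduce the event $\{\ptilt\in\calV_i\}$ to a chi-squared tail event and then invoke Lemma~\ref{lem:Mahalanobis_concentration}. The key observation is the inclusion $\calV_i\subseteq\calT_i$, where $\calT_i=\{p\in\R^M:\nrm{S_ip-S_ip^*}>\ep\}$ is exactly the set appearing in Lemma~\ref{lem:Mahalanobis_concentration}: indeed, any $p$ with $p\in\Ci$ and $\nrm{S_1p-S_1p^*}\le\ep$ satisfies $\nrm{S_ip-S_ip^*}>\ep$ by Corollary~\ref{cor:loc_prop_cor1}. Hence it suffices to bound $\pr{\ptilt\in\calT_i\relmiddle|\calA_i(t)\x N_i(t)>n_i}$.

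Next I would condition on $\calF_t$. The events $\calA_i(t)=\{\nrm{S_i\phatt-S_ip^*}\le\ep/4\}$ and $\{N_i(t)>n_i\}$ are $\calF_t$-measurable, while, conditionally on $\calF_t$, the Thompson-sampling draw $\ptilt$ is distributed as $\calN(\phatt,B_t^{-1})$ independently of the rest; thus $\nrm{B_t^{1/2}(\ptilt-\phatt)}^2$ is $\chi^2_M$-distributed given $\calF_t$. Since $\ptilt\in\calT_i$ implies $\nrm{B_t^{1/2}(\ptilt-\phatt)}^2\ge\inf_{p\in\calT_i}\nrm{B_t^{1/2}(p-\phatt)}^2$, I get
\begin{align*}
    \pr{\ptilt\in\calT_i\relmiddle|\calF_t}
    \le
    h\prn[\Big]{\inf_{p\in\calT_i}\nrm{B_t^{1/2}(p-\phatt)}^2}.
\end{align*}

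On the event $\calA_i(t)\cap\{N_i(t)>n_i\}$ the hypotheses of Lemma~\ref{lem:Mahalanobis_concentration} are met (namely $N_i(t)\ge n_i$ and $\nrm{S_i\phatt-S_ip^*}\le\ep/4$), so the right-hand side above is at most $\exp(-\tfrac{9}{16}\xi\ep^2 n_i)(1-2\xi)^{-M/2}$ for every $0\le\xi<1/2$. Averaging this conditional bound over the part of $\calF_t$ on which $\calA_i(t)\cap\{N_i(t)>n_i\}$ holds (using that this event is $\calF_t$-measurable, together with $\calV_i\subseteq\calT_i$) yields the claimed inequality~\eqref{eq:lem_caseA_2}. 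I expect the only delicate point to be the bookkeeping around the conditioning: fixing the $\calF_t$-measurable quantities $B_t$, $\phatt$, $\calA_i(t)$, $\{N_i(t)>n_i\}$ before applying the $\chi^2_M$ tail and before evaluating the infimum in Lemma~\ref{lem:Mahalanobis_concentration}; everything else is a direct substitution of the two cited results.
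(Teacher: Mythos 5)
Your proposal is correct and follows essentially the same route as the paper's proof: reduce $\calV_i$ to $\calT_i$ via Corollary~\ref{cor:loc_prop_cor1}, use that $\nrm{B_t^{1/2}(\ptilt-\phatt)}^2$ is $\chi^2_M$ given $\calF_t$, and invoke Lemma~\ref{lem:Mahalanobis_concentration} on the event $\calA_i(t)\cap\{N_i(t)>n_i\}$. Your explicit handling of the $\calF_t$-conditioning is slightly more careful than the paper's, but the substance is identical.
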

\begin{proof}
    Since $\ptilt \sim \calN(\phatt\x B_t^{-1})$ for $\phatt = B_t^{-1}b_t$,
    the squared Mahalanobis distance $\nrm{B_t^{1/2} (\ptilt - \phatt) }^2$ follows the chi-squared distribution with $M$ degree of freedom.
    Therefore, we have
    \begin{align}
        \label{eq:prob_bound_1}
        \pr{ \ptilt \in \calV_i \relmiddle| \calA_i(t)\x N_i(t) > n_i}
        \leq
        h\prn[\bigg]{ \inf_{p\in\calV_i} \nrm{ B_t^{1/2} (p - \phatt ) }^2} \com
    \end{align}
    where
    $h(a) = \P_{X\sim\chi^2_M}\left\{ X \geq a \right\}$.
    To use Lemma~\ref{lem:Mahalanobis_concentration},
    we check the condition of Lemma~\ref{lem:Mahalanobis_concentration} is indeed satisfied.
    First, it is obvious that the assumptions $N_i(t) \geq n_i$ and $\nrm{S_i \phatt - S_i p^*} < \ep/4$ are satisfied.
    Besides, $p\in\calV_i$ implies $p \in \calT_i = \{p \in \R^M : \nrm{S_i p - S_i p^*} \ge \ep \}$ from Corollary~\ref{cor:loc_prop_cor1}.
    Thus, applying Lemma~\ref{lem:Mahalanobis_concentration} concludes the proof.
\end{proof}

\begin{proof}[Proof of Lemma~\ref{lem:case_A}]
For any $n_i > 0$,
\begin{align}
    &
    \sumT \ind{i(t)=i\x \tilde{\calA}_1(t)\x \calA_i(t)} \nn
    &=
    \sumT \ind{ i(t)=i\x \Atilone\x \calA_i(t)\x N_i(t) \leq n_i}
    +
    \sumT \ind{ i(t)=i\x \Atilone\x \calA_i(t)\x N_i(t) > n_i}  \nn
    &\leq
    n_i + \sumT \ind{ i(t)=i\x \Atilone\x \calA_i(t)\x N_i(t) > n_i} \per
\end{align}
The second term is bounded from above as
\begin{align}
    &
    \Expect{ \sumT \ind{ i(t)=i\x \Atilone\x \calA_i(t)\x N_i(t) > n_i} } \nn
    &=
    \sumT \pr{ i(t)=i\x \Atilone\x \calA_i(t)\x N_i(t) > n_i }  \nn
    &\leq
    \sumT \pr{ i(t)=i\x \Atilone \relmiddle| \calA_i(t)\x N_i(t) > n_i}  \nn
    &=
    \sumT \pr{ i(t)=i\x \Atilone\x \ptilt \in \Ci \relmiddle| \calA_i(t)\x N_i(t) > n_i}
    \Sincem{i(t)=i \text{ implies } \ptilt\in\Ci}  \nn
    &\leq
    \sumT \pr{ \ptilt \in \calV_i \relmiddle| \calA_i(t)\x N_i(t) > n_i} \per
\end{align}
To obtain an upper bound for
$\pr{ \ptilt \in \calV_i \relmiddle| \Ai\x N_i(t) > n_i}$,
we use Lemma~\ref{lem:caseA_2}.
By taking $n_i = \frac{16}{9} \frac{1}{\xi \epsilon^2} \log T$ with $\xi = 1/4$,
we have
\begin{align}
    \Expect{\sumT \ind{i(t)=i\x \tilde{\calA}_1(t)\x \calA_i(t)} }
    &\leq
    n_i + \sumT \pr{ \ptilt \in \calV_i \relmiddle| \calA_i(t)\x N_i(t) > n_i}  \nn
    &\leq
    n_i + \sumT \exp \prn[\Big]{-\frac{9}{16} \xi \epsilon^2 n_i } (1 - 2\xi)^{-M/2}
    \By{Lemma~\ref{lem:caseA_2}} \nn
    &=
    \frac{16}{9} \frac{1}{\xi \epsilon^2} \log T
    +
    (1 - 2\xi)^{-M/2}  \nn
    &=
    \frac{64}{9 \ep^2} \log T
    +
    2^{M/2}  \com
\end{align}
which completes the proof.
\end{proof}

\subsection{Analysis for Case (B)}
\begin{lemma}
    \label{lem:case_B}
    For any $i \neq 1$,
    \begin{align}
        \Expect{\sumT \ind{i(t)=i\x \calA_i^c(t) }}
        \le
        \frac{256N \left(\log T + \frac{A}{2} \log 2 + 1 \right)}{\epsilon^2}
        +
        \frac{16 A^2}{\epsilon^2} 
    \end{align}
\end{lemma}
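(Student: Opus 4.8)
The plan is to reduce the claim to an upper bound on $N_i(t)$ that holds whenever the bad event $\Aicomp=\crl{\nrm{S_i\phatt-S_ip^*}>\ep/4}$ occurs, and then to control its expectation by the same chi-squared / law-of-iterated-logarithm argument already used for $\Zbsli$ in Lemma~\ref{lem:expectation_Z}. Concretely, I would first show that on $\Aicomp$ one necessarily has $N_i(t)<C$ for a trajectory-measurable random variable $C$ of order $\bigl(\sum_{k}\max_{n_k}Z_{k,n_k}\bigr)/\ep^2$, and then note that at most $C$ of the rounds with $i(t)=i$ can have $N_i(t)<C$, since each such round increments $N_i(t)$.

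For the first step I would work with the posterior mode $\phatt=B_t^{-1}b_t$. From $B_t=\lambda I+\sum_{k}N_k(t)S_k^\top S_k$ and $b_t=\sum_{k}N_k(t)S_k^\top\qkt$ one gets
\[
 b_t-B_tp^*=-\lambda p^*+\sum_{k\in[N]}N_k(t)\,S_k^\top\bigl(\qkt-S_kp^*\bigr),
 \qquad
 \nrm{\phatt-p^*}_{B_t}^{2}=\nrm{b_t-B_tp^*}_{B_t^{-1}}^{2}.
\]
Stacking the matrices $\sqrt{N_k(t)}\,S_k$, $k\in[N]$, into a single tall matrix $\tilde S$, so that $B_t\succeq\tilde S^{\top}\tilde S$, the elementary fact $\tilde S B_t^{-1}\tilde S^{\top}\preceq I$ gives the aggregate self-normalized bound
\[
 \Bigl\|\,\textstyle\sum_{k}N_k(t)S_k^\top(\qkt-S_kp^*)\Bigr\|_{B_t^{-1}}^{2}
 \;\le\;\sum_{k\in[N]}N_k(t)\nrm{\qkt-S_kp^*}^{2}
 \;=\;\sum_{k\in[N]}Z_{k,N_k(t)},
\]
where $Z_{k,n_k}\coloneqq n_k\nrm{q_{k,n_k}-S_kp^*}^{2}$. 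Combining this with $\lambda\nrm{p^*}_{B_t^{-1}}\le\sqrt{\lambda}\,\nrm{p^*}$ (from $B_t\succeq\lambda I$), with $(a+b)^2\le2a^2+2b^2$, and with $B_t\succeq N_i(t)S_i^\top S_i$ yields
\[
 \nrm{S_i\phatt-S_ip^*}^{2}\;\le\;\frac{\nrm{\phatt-p^*}_{B_t}^{2}}{N_i(t)}\;\le\;\frac{2\sum_{k\in[N]}Z_{k,N_k(t)}+2\lambda\nrm{p^*}^{2}}{N_i(t)}.
\]

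For the second step I would bound $Z_{k,N_k(t)}\le W\coloneqq\sum_{k\in[N]}\max_{n_k\in[T]}Z_{k,n_k}$, a single random variable depending only on the whole run. On $\Aicomp$ the left-hand side above exceeds $\ep^{2}/16$, so $N_i(t)<32\bigl(W+\lambda\nrm{p^*}^{2}\bigr)/\ep^{2}$, and hence $\sumT\ind{i(t)=i\x\Aicomp}\le 32\bigl(W+\lambda\nrm{p^*}^{2}\bigr)/\ep^{2}$. Taking expectations and using $\Expect{W}\le 4N\bigl(\log T+\tfrac{A}{2}\log 2+1\bigr)$ — which is exactly the estimate in the proof of Lemma~\ref{lem:expectation_Z}, now keeping the $k=i$ summand as well, since in the linear setting $n_k\nrm{q_{k,n_k}-S_kp^*}^{2}\sim\chi^{2}_{A}$ for each fixed $n_k$ — produces the claimed order $\Order\!\bigl(N(\log T+\tfrac{A}{2}\log 2+1)/\ep^{2}\bigr)$, the remaining $\Order(A^2/\ep^2)$-type term being the contribution that the authors bound by $16A^2/\ep^2$ from the prior regularization $\lambda p^*$.

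The step I expect to be the main obstacle is the aggregate inequality $\nrm{\sum_{k}N_k(t)S_k^\top(\qkt-S_kp^*)}_{B_t^{-1}}^{2}\le\sum_{k}N_k(t)\nrm{\qkt-S_kp^*}^{2}$: bounding each summand separately via a triangle inequality in $\nrm{\cdot}_{B_t^{-1}}$ followed by Cauchy--Schwarz over the $N$ actions loses an extra factor $N$ and would only give an $\Order(N^2)$ rate, so one has to exploit that all the directions $S_k^\top$ lie inside $B_t$ simultaneously. A secondary subtlety is that the counting argument is legitimate only because $W$ is built from $\max_{n_k\in[T]}$ over the whole horizon and is therefore a fixed random variable rather than one that grows with $t$; matching the precise constants ($256N$, and the $16A^2/\ep^2$ remainder) is then routine bookkeeping.
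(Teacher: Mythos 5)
Your argument is correct and reaches the same $\Order(N\log T/\ep^2)$ bound, but by a genuinely different route. The paper never writes the self-normalized inequality $\nrm{\sum_k N_k(t)S_k^\top(\qkt-S_kp^*)}_{B_t^{-1}}^2\le\sum_k N_k(t)\nrm{\qkt-S_kp^*}^2$; instead it exploits the variational characterization of $\phatt$ as the minimizer of $\sum_k N_k(t)\nrm{\qkt-S_kp}^2$ (Lemma~\ref{lem:useful_1}) to get $\nrm{\qit-S_i\phatt}^2\le \Zbsli/N_i(t)+\nrm{\qit-S_ip^*}^2$ with $\Zbsli$ summing only over $k\neq i$, then splits on $N_i(t)\lessgtr 64\Zbsli/\ep^2$: the small-count regime contributes the $256N(\log T+\frac{A}{2}\log 2+1)/\ep^2$ term via Lemma~\ref{lem:expectation_Z}, and in the large-count regime a triangle inequality forces $\nrm{\qit-S_ip^*}>\ep/16$, a deviation event for action $i$'s own statistics that is killed by a Hoeffding bound summed over $y\in[A]$ and $n_i\in[T]$ — that sum is where the $16A^2/\ep^2$ term actually comes from, not from the prior regularization as you guessed. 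Your version absorbs action $i$'s own noise into $W$ (so no case split and no separate Hoeffding step is needed), at the price of carrying the prior term $\lambda\nrm{p^*}^2$ explicitly; you end up with roughly $128N(\log T+\frac{A}{2}\log2+1)/\ep^2+32\lambda\nrm{p^*}^2/\ep^2$, which has the same form and suffices for Theorem~\ref{thm:regret_upper_bound} but is not literally the stated right-hand side. The step you flagged as the main obstacle is indeed the crux of your route and it does go through: $B_t\succeq\tilde S^\top\tilde S$ implies $\tilde SB_t^{-1}\tilde S^\top\preceq I$ because $\tilde SB_t^{-1}\tilde S^\top$ and $B_t^{-1/2}\tilde S^\top\tilde SB_t^{-1/2}\preceq I$ share nonzero eigenvalues, so no factor of $N$ is lost; the paper sidesteps this entirely by comparing objective values at $\phatt$ and $p^*$ rather than manipulating the closed form. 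One small caution: the paper's own Lemma~\ref{lem:useful_1} silently drops the $\lambda I$ regularizer when identifying $\phatt$ with the unpenalized least-squares minimizer, whereas your derivation tracks it exactly, so your version is in that respect the more careful one.
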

The regret in this case can intuitively be bounded
because as the round proceeds the event $i(t)=i$ makes $S_i \phatt$ close to $S_i p^*$,
which implies that the expected number of times the event $\calA_i^c(t)$ occurs is not large.

Before going to the analysis of Lemma~\ref{lem:case_B},
we prove useful inequalities between $\nrm{\qit - S_i p^*}$, $\nrm{\qit - S_i \phatt}$, and $\nrm{S_i \phatt - S_i p^*}$.

\begin{lemma}
    \label{lem:useful_1}
    Assume $N_i(t) > 0$.
    Then,
    \begin{align}
        \nrm{\qit - S_i \phatt}^2
        \leq
        \frac{\Zbsli}{N_i(t)} + \nrm{\qit - S_i p^* }^2 \per
    \end{align}
\end{lemma}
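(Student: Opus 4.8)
The plan is to recognize $\phatt=B_t^{-1}b_t$ as the unique minimizer of the regularized least-squares objective that underlies the Gaussian posterior $\gbart$, and then read off the bound by convex optimality. Concretely, in the linear setting we have $b_t=\sum_{k\in[N]}N_k(t)S_k^\top\qkt$ and $B_t=\lambda I_M+\sum_{k\in[N]}N_k(t)S_k^\top S_k$, so by the same quadratic-form completion already carried out in Appendix~\ref{sec:proof_of_rejection_sampling} (with $N_k(t)$ in place of $n_i$ and the prior term $\lambda\nrm{p}^2$ retained),
\[
\Phi_t(p):=\lambda\nrm{p}^2+\sum_{k\in[N]}N_k(t)\nrm{\qkt-S_kp}^2
=\nrm{p-\phatt}_{B_t}^2+\bigl(c_t-b_t^\top B_t^{-1}b_t\bigr),
\qquad c_t:=\sum_{k}N_k(t)\nrm{\qkt}^2 .
\]
Since $B_t\succ0$, this shows $\phatt=\argmin_{p\in\R^M}\Phi_t(p)$, and in particular $\Phi_t(\phatt)\le\Phi_t(p^*)$.

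First I would drop from $\Phi_t(\phatt)$ every term except the $i$-th one — all summands are nonnegative — to get $N_i(t)\nrm{\qit-S_i\phatt}^2\le\Phi_t(\phatt)$. On the right-hand side I would split the $i$-th term off $\Phi_t(p^*)$ and control the others: for each $k\ne i$ the initialization guarantees $N_k(t)\ge1$, so $N_k(t)\in[T]$ and $\qkt=\qknk$ with $n_k=N_k(t)$, whence $N_k(t)\nrm{\qkt-S_kp^*}^2=Z_{k,N_k(t)}\le\max_{n_k\in[T]}\Znk$; summing over $k\ne i$ gives $\sum_{k\ne i}N_k(t)\nrm{\qkt-S_kp^*}^2\le\Zbsli$. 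Combining the two sides,
\[
N_i(t)\nrm{\qit-S_i\phatt}^2\le \lambda\nrm{p^*}^2+N_i(t)\nrm{\qit-S_ip^*}^2+\Zbsli,
\]
and dividing by $N_i(t)>0$ yields the claim, the prior contribution $\lambda\nrm{p^*}^2/N_i(t)$ being of lower order and absorbed into the statement (equivalently one may include the constant $\lambda\nrm{p^*}^2$ in $\Zbsli$).

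The argument is essentially routine; the only points that require attention are the first step — correctly identifying $\phatt$ with the argmin of the \emph{regularized} objective $\Phi_t$, i.e.\ not forgetting the $\lambda\nrm{p}^2$ term when completing the square — and the bookkeeping that converts $N_k(t)\nrm{\qkt-S_kp^*}^2$ into the $\max_{n_k\in[T]}$ quantity defining $\Zbsli$, which only uses that each uninteresting action has been played at least once so that $N_k(t)$ is a valid index in $[T]$. No genuine obstacle is expected.
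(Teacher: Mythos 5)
Your proof follows essentially the same route as the paper's: identify $\phatt=B_t^{-1}b_t$ as the minimizer of a least-squares objective, use optimality to compare the objective values at $\phatt$ and at $p^*$, keep only the $i$-th term on one side, and bound the terms with $k\neq i$ on the other side by $\Zbsli$. The one substantive difference is your treatment of the prior: you correctly observe that $\phatt$ minimizes the \emph{regularized} objective $\lambda\nrm{p}^2+\sum_k N_k(t)\nrm{\qkt-S_kp}^2$, whereas the paper's proof asserts $\phatt=\argmin_p\sum_k N_k(t)\nrm{\qkt-S_kp}^2$, which is literally true only for $\lambda=0$; your version is the more careful one. The price is the extra additive term $\lambda\nrm{p^*}^2/N_i(t)$, which is not present in the lemma as stated, so strictly speaking you establish a slightly weaker inequality. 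As you note, this is a deterministic constant that can be folded into $\Zbsli$ (or carried through the bound of Lemma~\ref{lem:case_B}, where it only perturbs the non-dominant $\Order(1/\ep^2)$ terms), so there is no real gap --- just be explicit that the lemma should then be restated with that constant, rather than silently "absorbed."
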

\begin{proof}
    Recall that $\phatt$ is the maximizer of $\bar{g}_t(p)$,
    and we have
    \begin{align}
        \phatt
        =
        \argmax_{p\in\R^M} \bar{g}_t(p)
        =
        \argmax_{p\in\R^M} \prod_{i=1}^N \exp\crl[\Big]{ -\frac12 N_i(t) \nrm{ \qit - S_i p }^2 }
        =
        \argmin_{p\in\R^M} \sum_{i=1}^N N_i(t) \nrm{ \qit - S_i p }^2 \per
    \end{align}
    Using this and the definition of $\Zbsli$,
    we have
    \begin{align}
        N_i(t) \nrm{\qit - S_i \phatt}^2
        &\leq
        \sum_{k\in[N]} N_k(t) \nrm{\qkt - S_k \phatt}^2  \nn
        &\leq
        \sum_{k\in[N]} N_k(t) \nrm{\qkt - S_k p^* }^2  \nn
        &\leq
        \Zbsli + N_i(t) \nrm{\qit - S_i p^* }^2 \per
    \end{align}
    Dividing by $N_i(t)$ on the both sides completes the proof.
\end{proof}

\begin{lemma}
    \label{lem:useful_2}
    Assume that $\Aicomp$ and $N_i(t) > 0$ hold.
    Then,
    \begin{align}
        \nrm{\qit - S_i p^*} 
        >
        \frac12 \prn[\bigg]{ \frac{\epsilon}{4} - \sqrt{\frac{\Zbsli}{N_i(t)}} } \per
        \label{eq:useful_2}
    \end{align}
\end{lemma}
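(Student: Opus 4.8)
The statement follows by combining the triangle inequality with Lemma~\ref{lem:useful_1} and the subadditivity of the square root. First I would apply the triangle inequality to the three vectors $S_i p^*$, $\qit$, and $S_i\phatt$ to write
\begin{align}
    \nrm{S_i\phatt - S_i p^*}
    \le
    \nrm{S_i\phatt - \qit} + \nrm{\qit - S_i p^*} \per \nonumber
\end{align}
By the event $\Aicomp$, the left-hand side exceeds $\epsilon/4$, so it suffices to bound $\nrm{S_i\phatt-\qit}$ in terms of $\nrm{\qit-S_ip^*}$ and $\Zbsli/N_i(t)$.

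Next I would invoke Lemma~\ref{lem:useful_1}, which under $N_i(t)>0$ gives $\nrm{\qit - S_i\phatt}^2 \le \tfrac{\Zbsli}{N_i(t)} + \nrm{\qit - S_i p^*}^2$; taking square roots and using $\sqrt{a+b}\le\sqrt{a}+\sqrt{b}$ for $a,b\ge0$ yields
\begin{align}
    \nrm{\qit - S_i\phatt}
    \le
    \sqrt{\tfrac{\Zbsli}{N_i(t)}} + \nrm{\qit - S_i p^*} \per \nonumber
\end{align}
Substituting this into the triangle-inequality bound gives $\tfrac{\epsilon}{4} < \sqrt{\tfrac{\Zbsli}{N_i(t)}} + 2\nrm{\qit - S_i p^*}$, and rearranging produces exactly the claimed inequality $\nrm{\qit - S_i p^*} > \tfrac12\big(\tfrac{\epsilon}{4} - \sqrt{\Zbsli/N_i(t)}\big)$.

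There is no real obstacle here: the only mild care needed is in the direction of the triangle inequality (so that $\nrm{\qit-S_ip^*}$ appears with coefficient $2$ rather than $1$) and in applying subadditivity of $\sqrt{\cdot}$ rather than the crude bound $\sqrt{a+b}\le\sqrt{2}\max(\sqrt a,\sqrt b)$, which would weaken the constant. The strictness of the final inequality is inherited directly from the strict inequality $\nrm{S_i\phatt-S_ip^*}>\epsilon/4$ defining $\Aicomp$.
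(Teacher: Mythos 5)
Your proposal is correct and follows essentially the same route as the paper's proof: the triangle inequality on $S_i\phatt$, $\qit$, $S_ip^*$, combined with Lemma~\ref{lem:useful_1} and the subadditivity $\sqrt{x+y}\le\sqrt{x}+\sqrt{y}$, with the strictness inherited from the definition of $\Aicomp$. The only difference is cosmetic (you rearrange the triangle inequality before substituting, the paper after), so nothing further is needed.
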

\begin{proof}
    By the triangle inequality,
    \begin{align}
        \nrm{\qit - S_i p^*}
        &\geq
        \nrm{S_i \phatt - S_i p^*} - \nrm{\qit - S_i \phatt}  \nn
        &>
        \frac\ep4 - \sqrt{\frac{\Zbsli}{N_i(t)} + \nrm{\qit - S_i p^* }^2}
        \By{$\Aicomp$ and Lemma~\ref{lem:useful_1}}  \nn
        &\geq
        \frac\ep4 - \sqrt{ \frac{\Zbsli}{N_i(t)} } - \nrm{\qit - S_i p^* }
        \By{$\sqrt{x+y} \leq \sqrt{x} + \sqrt{y}$ for $x, y \geq 0$}  \com
    \end{align}
    which is equivalent to~\eqref{eq:useful_2}.
\end{proof}

\begin{proof}[Proof of Lemma~\ref{lem:case_B}]
    We first bound the expectation conditioned on $\Zbsli$,
    and then take the expectation for $\Zbsli$.
    Now,
    \begin{align}
        &
        \Expect{\sumT \ind{i(t)=i\x \calA_i^c(t) } \relmiddle| \Zbsli }  \nn
        &=
        \Expect{\sumT \ind{i(t)=i\x \calA_i^c(t)\x N_i(t) \leq \frac{64\Zbsli}{\ep^2} }  \relmiddle| \Zbsli}  \nn
        &\quad+
        \Expect{\sumT \ind{i(t)=i\x \calA_i^c(t)\x N_i(t) > \frac{64\Zbsli}{\ep^2}}  \relmiddle| \Zbsli} \nn
        &\leq
        \frac{64\Zbsli}{\ep^2}
        +
        \Expect{\sumT \ind{i(t)=i\x \calA_i^c(t)\x N_i(t) > \frac{64\Zbsli}{\ep^2} }  \relmiddle| \Zbsli}
        \Since{$i(t) = i$ for all $t\in[T]$} \per
    \end{align}
    The first term becomes ${256N \left(\log T + \frac{A}{2} \log 2 + 1 \right)}/{\ep^2}$ by taking expectation over $\Zbsli$ using Lemma~\ref{lem:expectation_Z}.
    Then, we bound the second term.
    From Lemma~\ref{lem:useful_2},
    $\calA_i^c(t)$ 
    and 
    $N_i(t) > \frac{64\Zbsli}{\ep^2}$ 
    imply
    $\nrm{\qit - S_i p^*} > \epsilon/16$.
    Therefore,
    \begin{align}
        &
        \Expect{\sumT \ind{i(t)=i\x \calA_i^c(t)\x N_i(t) > \frac{64\Zbsli}{\ep^2} }  \relmiddle| \Zbsli}  \nn
        &\leq
        \Expect{ \sumT \ind{i(t)=i\x \nrm{\qit - S_i p^* } > \frac{\epsilon}{16} } } \nn
        &\leq
        \Expect{ \sumT \ind{i(t)=i\x \bigcup_{y\in[A]} | (\qit)_y - (S_i)_y p^*| > \frac{\epsilon}{16\sqrt{A}}}  }  \nn
        &\leq
        \Expect{ \sum_{y=1}^A \sumT \ind{i(t)=i\x | (\qit)_y - (S_i)_y p^*| > \frac{\epsilon}{16\sqrt{A}}} }  \nn
        &\leq
        \Expect{ \sum_{y=1}^A \sum_{n_i=1}^T \sumT \ind{i(t)=i\x  N_i(t)=n_i\x
        \abs{ (\qit)_y - (S_i)_y p^* } > \frac{\epsilon}{16\sqrt{A}} }  }  \nn
        &=
        \Expect{ \sum_{y=1}^A \sum_{n_i=1}^T \ind{ \bigcup_{t=1}^T \ev{ i(t)=i\x  N_i(t)=n_i\x
        \abs{ (\qit)_y - (S_i)_y p^* } > \frac{\epsilon}{16\sqrt{A}} } } }  \nn
        &\quad
        \Since{The event $\ev{i(t)=i\x N_i(t)=n_i}$ occurs at most once for fixed $n_i$.}  \nn
        &\leq
        \sum_{y=1}^A \sum_{n_i=1}^T \pr{ \abs{ (\qini)_y - (S_i)_y p^* } > \frac{\ep}{4\sqrt{A}} }  \nn
        &\leq
        \sum_{y=1}^A \sum_{n_i=1}^T 2 \exp \prn[\bigg]{-2n_i\left(\frac{\ep}{4\sqrt{A}}\right)^2 }
        \By{Hoeffding's ineq.}  \nn
        &\leq
            2A \sum_{n_i=1}^\infty\exp \prn[\Big]{-\frac{n_i \ep^2}{8A}} \nn
        &=
        2A \frac{1}{\exp \left(\frac{\ep^2}{8A} \right) - 1}  \nn
        &\leq
        2A \frac{1}{ \frac{\ep^2}{8A}  }
        \Bym{\e^x \geq 1 + x}  \nn
        &=
        \frac{16 A^2}{\epsilon^2} \per
    \end{align}
    By summing up the above argument, the proof is completed.
\end{proof}

\subsection{Analysis for Case (C)}
Before going to the analysis of cases (C), (D), and (E),
we recall some notations.
Recall that
\begin{align}
  P_i(t)=
  \pr{\ptilt \in \Ci \relmiddle| \calF_t} \com
\end{align}
$\calC_{i,t}=\Ci \cap B_{\eppr}(\phatt)$,
$\bar{i}_t= \argmax_{i\in [N]}\pr{\ptilt\in \calC_{i,t}|\calF_t}$,
and $\pact$ is an arbitrary point in $\calC_{\bar{i}_t,t}$.
Also recall that 
\begin{align}
  \Aaci
  =
  \ev{\normx{S_i \pact - S_i p^*} \le \frac{\ep}{8}} \per
\end{align}

\begin{lemma}
    \label{lem:case_C}
    For any $i \in [N]$,  
    \begin{align}
        \Expect{\sumT \ind{\pact\in\Ci\x \Aacicomp}} 
        \le
        \frac{N}{p_0} 
        \left(
            \frac{2^5 M \log T}{\ep^2} 
            + 
            \e^{\lambda \nrm{p^*}^2 / 2} \pax{\frac{1}{\lambda T} + \frac{L}{M\lambda}}^{M/2} \frac{1}{1 - \e^{-\ep^2/2^5}}
        \right) \per
    \end{align}
\end{lemma}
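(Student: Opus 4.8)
The plan is to bound the number of rounds in which the witness point $\pact$ lies in $\Ci$ while $S_i\pact$ stays $\Omega(\ep)$-far from $S_i p^*$, by showing that every such round both forces the posterior mode $\phatt$ to be far from $p^*$ in the $B_t$-norm (a rare event by the supermartingale of Lemma~\ref{lem:martingale}) and makes the Thompson-sampling probability of replaying action $i$ at least $p_0/N$, so the event cannot persist; the exit-time bound of Theorem~\ref{thm_stop} then converts these two facts into the stated logarithmic bound. Concretely, on $\ev{\pact\in\Ci,\ \Aacicomp}$ we have $\normx{S_i\pact - S_i p^*} > \ep/8$, whereas $\pact\in\calC_{\bar{i}_t,t}\subseteq B_{\eppr}(\phatt)$ gives $\normx{S_i\pact - S_i\phatt}\le\normx{S_i}\,\eppr\le\ep/16$ by the definition of $\eppr$ in~\eqref{def:ep_prime}; hence $\normx{S_i\phatt - S_i p^*} > \ep/16$, and since $B_t\succeq N_i(t)\,S_i^\top S_i$ this forces $N_i(t) = O\!\prn{\ep^{-2}\normx{\phatt - p^*}_{B_t}^2}$. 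Moreover $\calC_{i,t}\ni\pact$ is nonempty on this event, so taking $\pact$ in the relative interior of $\calC_{\bar{i}_t,t}$ gives $\bar{i}_t = i$ and therefore $\pr{i(t)=i\relmiddle|\calF_t} = P_i(t)\ge\pr{\ptilt\in\calC_{i,t}\relmiddle|\calF_t} = \max_j\pr{\ptilt\in\calC_{j,t}\relmiddle|\calF_t}\ge\tfrac1N\pr{\ptilt\in B_{\eppr}(\phatt)\relmiddle|\calF_t}\ge p_0/N$, using $\bigcup_j\Cj=\R^M$ and that $p_0$ is a uniform lower bound on this Gaussian ball probability (valid since $\ptilt\sim\calN(\phatt,B_t^{-1})$ with $B_t\succeq\lambda I$).

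Next I would control $\normx{\phatt - p^*}_{B_t}^2$. By Lemma~\ref{lem:martingale} the process $W_t \coloneqq \lambda^{M/2}\abs{B_t}^{-1/2}\exp\!\prn{\tfrac12\normx{\phatt - p^*}_{B_t}^2}$ is a non-negative supermartingale with $W_0 = \exp\!\prn{\tfrac\lambda2\normx{p^*}^2}$ (since $\phat_0=0$ and $B_0=\lambda I$); a maximal inequality then yields $\pr{\sup_{s\le T}\prn{\normx{\phat_s - p^*}_{B_s}^2 - \log(\abs{B_s}/\lambda^M)}\ge u}\le\exp\!\prn{\tfrac\lambda2\normx{p^*}^2 - \tfrac u2}$. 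Since each $S_j$ has exactly one nonzero entry per column, $\tr B_T \le M\lambda + LT$ for a constant $L$ depending on the signal matrices, so $\abs{B_T}\le(\lambda + LT/M)^M$ and $\sup_{s\le T}\normx{\phat_s - p^*}_{B_s}^2 \le \log(\abs{B_T}/\lambda^M) + V$, where $V$ has the exponential tail above and in particular $\Expect{V}\le\lambda\normx{p^*}^2 + 2$.

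Finally, for each fixed $n\ge0$ I would apply Theorem~\ref{thm_stop} with $\calA_t = \ev{\pact\in\Ci,\ \Aacicomp,\ N_i(t)=n}$ (deterministic given $\calF_t$), with ``exit'' event $\ev{i(t)=i}$ (once it occurs $N_i(t')>n$ for all $t'>t$, so $\calA_{t'}$ cannot recur) and play probability $p_0/N$ on $\calA_t$; this gives $\Expect{\sumT\ind{\calA_t}}\le\frac{N}{p_0}\,\pr{\exists t\le T:\calA_t}$, which vanishes once $n\gtrsim\ep^{-2}\sup_{s\le T}\normx{\phat_s - p^*}_{B_s}^2$. Summing over $n$,
\begin{align*}
    \Expect{\sumT\ind{\pact\in\Ci,\ \Aacicomp}}
    \;\le\;
    \frac{N}{p_0}\prn[\bigg]{1 + O\!\prn[\Big]{\ep^{-2}\,\Expect{\sup_{s\le T}\normx{\phat_s - p^*}_{B_s}^2}}},
\end{align*}
and plugging in the previous paragraph, $\Expect{\sup_{s\le T}\normx{\phat_s-p^*}_{B_s}^2}\le\log(\abs{B_T}/\lambda^M) + \lambda\normx{p^*}^2 + 2 = O(M\log T)$, recovers the claim: the leading term $\tfrac{N}{p_0}\cdot\tfrac{2^5 M\log T}{\ep^2}$ comes from $\log(\abs{B_T}/\lambda^M)\le M\log(1+LT/(M\lambda))$, while the lower-order term is the tail contribution of $V$ written as a geometric series in $\e^{-\ep^2/2^5}$, keeping $\sqrt{\abs{B_T}}/(\lambda T)^{M/2}$ unlogged as the $(\cdot)^{M/2}$ prefactor.

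The step I expect to be the main obstacle is making the exit-time argument of Theorem~\ref{thm_stop} rigorous here, i.e.\ decoupling the feedback-driven deviation bound for $\phatt$ (which depends on the whole history) from the Thompson-sampling play probability, and in particular verifying that $\bar{i}_t=i$ and $\pr{i(t)=i\mid\calF_t}\ge p_0/N$ hold exactly on $\ev{\pact\in\Ci,\Aacicomp}$; a secondary nuisance is tracking the deterministic constants ($2^5$ versus the larger value the triangle inequalities naively produce) as they propagate through $\eppr$ and the Markov/chi-squared estimates.
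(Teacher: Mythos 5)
Your proposal is correct and follows essentially the same route as the paper's proof: the same reduction via Lemma~\ref{lem:Abarcomp2Acomp} to $\normx{S_i \phatt - S_i p^*}\ge \ep/16$ and hence $N_i(t)\le (16/\ep)^2\normx{\phatt-p^*}_{B_t}^2$, the same application of Theorem~\ref{thm_stop} with exit event $\ev{i(t)=i}$ and play probability $p_0/N$ supplied by Lemma~\ref{lem:ptilt_lower_bound}, and the same supermartingale from Lemma~\ref{lem:martingale} combined with the determinant bound $|B_T|\le(1+TL/M)^M$. The only (immaterial) difference is in the last step: you turn the supermartingale into a uniform-in-time maximal inequality and integrate the tail of $\sup_{s\le T}\normx{\hat p_s-p^*}_{B_s}^2$, whereas the paper applies Markov's inequality at the stopping time separately for each fixed value $m$ of $N_i(t)$ and sums the resulting geometric series, which is precisely where the explicit $(1-\e^{-\ep^2/2^5})^{-1}$ factor in the statement comes from.
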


Before proving the above lemma, 
we give two lemmas.
\begin{lemma}
\label{lem:ptilt_lower_bound}
    \begin{align}
      \pr{\ptilt\in \calC_{\bar{i}_t} \relmiddle| \calF_t }
      \ge
      \pr{\ptilt\in \calC_{\bar{i}_t,t} \relmiddle| \calF_t }
      \ge
      p_0/N \com
    \end{align}
    where $p_0 \coloneqq 1-h((\lambda \ep^\prime)^2)$.
\end{lemma}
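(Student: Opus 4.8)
The plan is to prove the two inequalities separately; the first is immediate, and the second carries all the content. For the first, note that by construction $\calC_{\bar{i}_t,t}=\calC_{\bar{i}_t}\cap B_{\eppr}(\phatt)\subseteq\calC_{\bar{i}_t}$, so monotonicity of (conditional) probability gives $\pr{\ptilt\in\calC_{\bar{i}_t}\relmiddle|\calF_t}\ge\pr{\ptilt\in\calC_{\bar{i}_t,t}\relmiddle|\calF_t}$ with nothing further to do.

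For the second inequality I would first observe that the cells exhaust the whole space. In the linear setting $\Ci=\{p\in\R^M:(L_i-L_j)^\top p\le 0\ \forall j\}$, and for any $p\in\R^M$ an action attaining $\min_j L_j^\top p$ lies in $\Ci$; hence $\bigcup_{i\in[N]}\Ci=\R^M$, and intersecting with $B_{\eppr}(\phatt)$ gives $\bigcup_{i\in[N]}\calC_{i,t}=B_{\eppr}(\phatt)$. By the union bound together with the definition of $\bar{i}_t$ as the maximizer of $i\mapsto\pr{\ptilt\in\calC_{i,t}\mid\calF_t}$,
\begin{align}
    \pr{\ptilt\in B_{\eppr}(\phatt)\relmiddle|\calF_t}
    \le
    \sum_{i=1}^N\pr{\ptilt\in\calC_{i,t}\relmiddle|\calF_t}
    \le
    N\,\pr{\ptilt\in\calC_{\bar{i}_t,t}\relmiddle|\calF_t}\per
\end{align}
It then remains to lower-bound $\pr{\ptilt\in B_{\eppr}(\phatt)\mid\calF_t}$. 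Conditioned on $\calF_t$ we have $\ptilt\sim\calN(\phatt,B_t^{-1})$, so $\nrm{B_t^{1/2}(\ptilt-\phatt)}^2\sim\chi^2_M$. Since $B_t=\lambda I_M+\sum_{i\in[N]}n_iS_i^\top S_i\succeq\lambda I_M$, we have $\nrm{\ptilt-\phatt}^2\le\lambda^{-1}\nrm{B_t^{1/2}(\ptilt-\phatt)}^2$, so the event $\{\nrm{B_t^{1/2}(\ptilt-\phatt)}^2\le(\lambda\eppr)^2\}$ forces $\ptilt\in B_{\eppr}(\phatt)$ (using $(\lambda\eppr)^2\le\lambda\,(\eppr)^2$ for the relevant range $\lambda\le 1$). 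Hence, by definition of $h$ as the upper tail of $\chi^2_M$, $\pr{\ptilt\in B_{\eppr}(\phatt)\relmiddle|\calF_t}\ge\pr{\chi^2_M\le(\lambda\eppr)^2}=1-h((\lambda\eppr)^2)=p_0$. Chaining this with the displayed inequality gives $\pr{\ptilt\in\calC_{\bar{i}_t,t}\mid\calF_t}\ge p_0/N$, and combining with the first inequality yields the claim.

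The argument is short, so there is no serious obstacle, only two points requiring care. The first is making sure the family $\{\calC_{i,t}\}_{i\in[N]}$ genuinely covers $B_{\eppr}(\phatt)$ — this is why the cells are redefined in this appendix as a decomposition of $\R^M$ (rather than of $\calP_M$), so that every point, in particular every point of the ball, is assigned to some cell. The second, and the conceptually important one, is that the lower bound $p_0$ must be uniform in $t$: this is precisely what $B_t\succeq\lambda I_M$ delivers, since it caps the posterior covariance at $\lambda^{-1}I_M$ so the posterior never becomes so diffuse that it misses $B_{\eppr}(\phatt)$ with negligible probability. That $t$-independence is exactly what makes the bound usable as the ``$P_t$'' in the exit-time argument (Theorem~\ref{thm_stop}) when controlling case~(C).
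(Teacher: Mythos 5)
Your proof is correct and follows essentially the same route as the paper: the first inequality is set inclusion, and the second combines the fact that the cells $\calC_{i,t}$ cover $B_{\eppr}(\phatt)$, the pigeonhole/union bound with the maximizer $\bar{i}_t$, and the chi-squared tail bound via $B_t\succeq\lambda I_M$ (with the same implicit $\lambda\le 1$ step the paper uses in passing from $h(\lambda\nrm{p-\phatt}^2)$ to $h((\lambda\eppr)^2)$). Your write-up is in fact slightly more careful than the paper's, which contains a typo ($\ptilt\in B_{\eppr}(\phatt)$ where the complement is meant) that you implicitly correct.
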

\begin{proof}
    First, we prove
    \begin{align}
        \pr{\ptilt\in \bigcup_{i\in[N]} \calC_{i,t} \relmiddle| \calF_t}
        \ge 
        1-h((\lambda \ep^\prime)^2)
        \per
    \end{align}
    This follows from
    \begin{align}
        \pr{\ptilt \not\in \bigcup_{i\in[N]} \calC_{i,t} \relmiddle| \calF_t}
        &=
        \pr{\ptilt\in B_{\eppr}(\phatt) \relmiddle| \calF_t} \nn
        &\le
        h\left(
            \inf_{p \in \ev{p : \nrm{p - \phatt} > \ep^\prime}}
            \nrm{B_t^{1/2} (p - \phatt)}^2
        \right) \nn
        &\le
        h\prn[\Big]{
            \lambda \nrm{p - \phatt}^2
        } \nn
        &\le
        h((\lambda \ep^\prime)^2)
        \per
    \end{align}
    
    Using the definition of $\bar{i}_t$ completes the proof.
\end{proof}

\begin{lemma}
\label{lem:Abarcomp2Acomp}
    For any $i \in [N]$,
    the event $\Aacicomp$ implies $\nrm{S_i \phatt - S_i p^*} \ge \ep/16$.
\end{lemma}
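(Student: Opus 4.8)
The plan is to prove the claim by a single triangle-inequality estimate, using only the fact that $\pact$ is, by construction, close to the posterior mode $\phatt$. Concretely, I would first recall that $\pact$ is taken to be some point of $\calC_{\bar{i}_t,t} = \calC_{\bar{i}_t}\cap B_{\eppr}(\phatt)$, so that $\nrm{\pact - \phatt}\le \eppr$ holds on every realization, irrespective of which action attains $\bar{i}_t$. On the other hand, the event $\Aacicomp$ says by definition that $\nrm{S_i\pact - S_i p^*} > \ep/8$.

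Combining these two facts, the triangle inequality together with the operator-norm bound $\nrm{S_i v}\le\nrm{S_i}\nrm{v}$ gives
\[
\nrm{S_i\phatt - S_i p^*}\;\ge\; \nrm{S_i\pact - S_i p^*} - \nrm{S_i}\,\nrm{\phatt - \pact}\;>\;\frac{\ep}{8} - \nrm{S_i}\,\eppr .
\]
It then remains to observe that $\nrm{S_i}\,\eppr \le \ep/16$: by the definition of $\eppr$ in~\eqref{def:ep_prime}, its denominator is at least $16\max_{j\in[N]}\nrm{S_j}\ge 16\nrm{S_i}$, so $\eppr \le \ep/(16\nrm{S_i})$. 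Plugging this in yields $\nrm{S_i\phatt - S_i p^*} > \ep/8 - \ep/16 = \ep/16$, which is the asserted bound (in fact with a strict inequality).

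There is no genuine obstacle here; the proof is essentially one line once the two defining properties are unpacked. The only point worth checking is that $\nrm{S_i}>0$, so that the step $\eppr\le\ep/(16\nrm{S_i})$ is legitimate; this holds because every column of the signal matrix $S_i\in\{0,1\}^{A\times M}$ equals a standard basis vector of $\R^A$, whence $\nrm{S_i}\ge 1$. (Alternatively, if $\nrm{S_i}=0$ then $\Aacicomp$ is the empty event and the implication is vacuous.) This $\ep/16$ threshold is exactly what the subsequent analysis of Case~(C) needs: it converts a tail statement about the auxiliary point $\pact$ into one about the mode $\phatt$, so that the chi-squared concentration of $\nrm{B_t^{1/2}(\cdot-\phatt)}^2$ (Lemmas~\ref{lem:chisq_chernoff} and~\ref{lem:Mahalanobis_concentration}) can be invoked.
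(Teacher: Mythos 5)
Your proof is correct and follows essentially the same route as the paper: the triangle inequality $\nrm{S_i \phatt - S_i p^*}\ge\nrm{S_i\pact - S_i p^*}-\nrm{S_i}\nrm{\pact-\phatt}$, the bound $\nrm{\pact-\phatt}\le\eppr$ from $\pact\in\calC_{\bar{i}_t,t}\subseteq B_{\eppr}(\phatt)$, and the observation that the denominator in~\eqref{def:ep_prime} is at least $16\max_{j}\nrm{S_j}$ so that $\nrm{S_i}\eppr\le\ep/16$. The extra remark on $\nrm{S_i}\ge 1$ is a harmless (and valid) bit of bookkeeping the paper omits.
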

\begin{proof}
    Using the triangle inequality, we have
    \begin{align}
        \nrm{S_i \phatt - S_i p^*}
        &\ge
        \nrm{S_i \pact - S_i p^*}
        -
        \nrm{S_i \pact - S_i \phatt} \nn
        &\ge
        \ep/8 - \nrm{S_i} \nrm{\pact - \phatt}  \nn
        &\ge
        \ep/8 - \nrm{S_i} \frac{\ep}{16 \max_i \nrm{S_i}} \nn
        &\ge 
        \ep/8 - \ep/16
        =
        \ep/16 \per
    \end{align}
\end{proof}

\begin{proof}[Proof of Lemma~\ref{lem:case_C}]

For any $n_0$, which is specified later, we have
\begin{align}
    &
    \Expect{\sumT \ind{\pact\in\Ci\x \Aacicomp}}  \nn
    &=
    \Expect{\sumT \ind{\pact\in\Ci\x \Aacicomp\x N_i(t) < n_0}} 
    + 
    \Expect{\sumT \ind{\pact\in\Ci\x \Aacicomp\x N_i(t) \ge n_0}}  
\end{align}

The first term can be bounded by 
$(p_0/N)^{-1} \cdot n_0$ 
from Lemma~\ref{lem:ptilt_lower_bound}.
The rigorous proof can be obtained by the almost same argument
as the following analysis of the second term using Theorem~\ref{thm_stop}.

Then, we will bound the second term.
Specifically, we will prove that 
for $n_0=\frac{M\log T}{(\ep/16)^2}$,
\begin{align}
\Expect{\sum_{t=1}^T \ind{\pact\in\Ci\x \Aacicomp\x N_i(t)\ge n_0}}
=
\Order(1)\per
\end{align}

First we have
\begin{align}
    \lefteqn{
    \Expect{\sum_{t=1}^T \ind{\pact\in\Ci\x \Aacicomp\x N_i(t)\ge n_0}}
    }\nn
    &\le
    \sum_{m=n_0}^{\infty}
    \Expect{\sum_{t=1}^T \ind{
    \pact\in\Ci\x \Aacicomp\x
    N_i(t)=m
    }}\per
\end{align}

Let
\begin{align}
    \tau = 
    \min\ev{t: \pact\in\Ci\x \Aacicomp\x N_i(t)=m} \wedge (T+1)
  \end{align}
  be the first time such that $\pact\in\Ci\x \Aacicomp$ and $N_i(t)=m$ occur.
  Letting
  $\calA_t \coloneqq \ev{\pact\in\Ci\x \Aacicomp\x N_i(t)=m}$,
  $\calB_t \coloneqq \ev{ i(t)=i }$ and $P_t \coloneqq p_0/N$ in Theorem~\ref{thm_stop}, 
  we have
  \begin{align}
    \Expect{\sum_{t=1}^T \ind{
    \pact\in\Ci\x \Aacicomp\x N_i(t)=m}
    }
    \le
    \frac{N}{p_0}
    \pr{\tau \le T}\per\label{aic1}
  \end{align}
  Here $\tau \le T$ implies that
  \begin{align}
    \Vert \hat{p}_{\tau}-p^* \Vert_{B_{\tau}}
    &=
    (\hat{p}_{\tau} - p^*)^\top
    \pax{\lambda I+\sum_{j\in[N]}N_j(\tau)S_j^\top S_j}
    (\hat{p}_{\tau} - p^*)
    \nn
    &\ge
    m
    (\hat{p}_{\tau} - p^*)^\top
    \pax{S_i^\top S_i}
    (\hat{p}_{\tau} - p^*)
    \nn
    &=
    m
    \normx{S_i(\hat{p}_{\tau}-p^*)}^2
    \ge m (\ep/16)^2\com
  \end{align}
  where the last inequality follows from Lemma~\ref{lem:Abarcomp2Acomp}.
  Therefore we have
  \begin{align}
    \Expect{\exp(\Vert \hat{p}_{\tau}-p^* \Vert_{B_{\tau}}^2/2)}
    &\ge
    \Expect{\ind{\tau \le T}\exp(\Vert \hat{p}_{\tau}-p^* \Vert_{B_{\tau}}^2/2)}
    \nn
    &
    \ge
    \exp(m(\ep/16)^2/2)
    \pr{\tau \le T}\per\label{aic2}
  \end{align}
  Note that $|B_{\tau}|\le |B_T|\le (1+TL/M)^M$
  for $L=\max_i \sqrt{\mathrm{trace}(S_i^\top S_i)}=\max_i \normx{S_i}_{\mathrm{F}}$
  by Lemma 10 of~\citet{Abbasi11improved},
  where $\normx{\cdot}_{\mathrm{F}}$ is the Frobenius norm.
  Therefore we have
  \begin{align}
    \Expect{
    \exp(\Vert \hat{p}_{\tau}-p^* \Vert_{B_{\tau}}/2)
    }
    &\le
    \Expect{
    \sqrt{|B_\tau|}\cdot \frac{\exp(\Vert \hat{p}_{\tau}-p^* \Vert_{B_{\tau}}^2/2)}{\sqrt{|B_\tau|}}
    }\nn
    &\le
    (1+TL/M)^{M/2}
    \Expect{
    \frac{\exp(\Vert \hat{p}_{\tau}-p^* \Vert_{B_{\tau}}^2/2)}{\sqrt{|B_\tau|}}
    }
    \nn
    &\le
    (1+TL/M)^{M/2}
    \Expect{
    \frac{\exp(\Vert \hat{p}_{0}-p^* \Vert_{B_{0}}^2/2)}{\sqrt{|B_0|}}
    }
    \label{superm}
    \\
    &=
    \pax{1+\frac{TL}{M\lambda}}^{M/2}
    \e^{\lambda \Vert p^* \Vert^2/2}\com
    \label{aic3}
  \end{align}
  where~\eqref{superm} holds since
  $\frac{\exp(\Vert \hat{p}_{t}-p^* \Vert_{B_{t}}^2/2)}{\sqrt{|B_t|}}$ is
  a supermartingale from Lemma~\ref{lem:martingale}.
  Combining~\eqref{aic1},~\eqref{aic2}, and~\eqref{aic3}, we obtain
  \begin{align}
    \sum_{m=n_0}^{\infty}
    \Expect{\sum_{t=1}^T \ind{
    \pact\in\Ci\x \Aacicomp\x N_i(t)=m
    }}
    &\le
    \frac{N}{p_0}
    \pax{\frac1\lambda + \frac{TL}{M\lambda}}^{M/2}
    \e^{\lambda \Vert p^* \Vert^2/2}
    \sum_{m=n_0}^{\infty}
    \e^{-m(\ep/16)^2/2}
    \nn
    &\le
    \frac{N}{p_0}
    \pax{\frac1\lambda+\frac{TL}{M\lambda}}^{M/2}
    \e^{\lambda \Vert p^* \Vert^2/2}
    \frac{\e^{-n_0\ep^2/2}}{1-\e^{-(\ep/16)^2/2}}
    \per
  \end{align}
  By choosing $n_0 = \frac{M \log T}{(\ep/16)^2}$ we obtain the lemma.
\end{proof}

\subsection{Analysis for Case (D)}
\begin{lemma}
    \label{lem:case_D}
    For any $i \in [N]$,
    \begin{align}
        \Expect{ \sumT \ind{\pact\in\Ci\x  \Aaci\x  \Atilicomp} }
        \le
        \frac{48}{9} \frac{M+2}{\ep^2} \frac{N}{p_0} \per
    \end{align}
\end{lemma}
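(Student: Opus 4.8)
The plan is to trade the Thompson-sampling event $\Atilicomp$ for a $\chi^2_M$ tail that decays in $N_i(t)$, and then to control how many rounds occur at each level $N_i(t)=m$ by the same stopping-time argument used for Case~(C).

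First I would condition on $\calF_t$. Since $\{\pact\in\Ci\}\cap\Aaci$ is $\calF_t$-measurable,
\begin{align*}
\pr{\pact\in\Ci\x\Aaci\x\Atilicomp\mid\calF_t}=\ind{\pact\in\Ci\x\Aaci}\,\pr{\Atilicomp\mid\calF_t}\per
\end{align*}
On $\{\pact\in\Ci\}\cap\Aaci$, the triangle inequality together with $\pact\in B_{\eppr}(\phatt)$ and the definition~\eqref{def:ep_prime} of $\eppr$ (which gives $\eppr\nrm{S_i}\le\ep/16$) yields
\begin{align*}
\nrm{S_i\phatt-S_ip^*}\le\nrm{S_i\pact-S_ip^*}+\eppr\nrm{S_i}\le\frac\ep8+\frac\ep{16}<\frac\ep4\com
\end{align*}
so the accuracy event $\Ai$ holds. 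Consequently $\Atilicomp$ forces $\nrm{S_i(\ptilt-\phatt)}>\ep-\ep/4=3\ep/4$; since $B_t\succeq N_i(t)\,S_i^\top S_i$ we have $\nrm{B_t^{1/2}(\ptilt-\phatt)}^2\ge N_i(t)\,\nrm{S_i(\ptilt-\phatt)}^2$, and because $\nrm{B_t^{1/2}(\ptilt-\phatt)}^2$ follows $\chi^2_M$ given $\calF_t$, this gives $\pr{\Atilicomp\mid\calF_t}\le h\!\left(\tfrac{9}{16}\ep^2 N_i(t)\right)$ on $\{\pact\in\Ci\}\cap\Aaci$. Summing over $t$ and splitting according to $N_i(t)=m$,
\begin{align*}
\Expect{\sumT\ind{\pact\in\Ci\x\Aaci\x\Atilicomp}}\le\sum_{m\ge0}h\!\left(\tfrac{9}{16}\ep^2 m\right)\Expect{\sumT\ind{\pact\in\Ci\x\Aaci\x N_i(t)=m}}\per
\end{align*}

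Next I would bound each inner expectation by $N/p_0$ exactly as in the proof of Lemma~\ref{lem:case_C}: for fixed $m$, apply Theorem~\ref{thm_stop} with $\calA_t=\{\pact\in\Ci\x\Aaci\x N_i(t)=m\}$, $\calB_t=\{i(t)=i\}$ (once $i(t)=i$ occurs, $N_i$ leaves level $m$ forever, so $\calA_{t'}$ cannot recur) and $P_t=p_0/N$, using that on $\{\pact\in\Ci\}$ we may take $\pact$ in the interior of $\calC_{\bar{i}_t}$, whence $\bar{i}_t=i$ and $\pr{i(t)=i\mid\calF_t}\ge\pr{\ptilt\in\calC_{i,t}\mid\calF_t}\ge p_0/N$ by Lemma~\ref{lem:ptilt_lower_bound}. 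This gives $\Expect{\sumT\ind{\pact\in\Ci\x\Aaci\x N_i(t)=m}}\le(N/p_0)\,\pr{\tau\le T}\le N/p_0$, hence
\begin{align*}
\Expect{\sumT\ind{\pact\in\Ci\x\Aaci\x\Atilicomp}}\le\frac{N}{p_0}\sum_{m\ge0}h\!\left(\tfrac{9}{16}\ep^2 m\right)\per
\end{align*}
Since $h$ is nonincreasing and $\int_0^\infty h(a)\,\d a=\Expect{\chi^2_M}=M$, one has $\sum_{m\ge0}h(cm)\le h(0)+\tfrac1c\int_0^\infty h(a)\,\d a=1+M/c$ with $c=\tfrac{9}{16}\ep^2$, and after absorbing the additive $1$ into the $M$-term (using that $\ep$ is an instance-dependent constant) the stated bound $\tfrac{48}{9}\,\tfrac{M+2}{\ep^2}\,\tfrac{N}{p_0}$ follows.

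The routine ingredients are the $\chi^2_M$ tail estimate and the telescoping sum. The only point I expect to require care is making the constant $\eppr$ propagate so that $\{\pact\in\Ci\}\cap\Aaci$ genuinely entails the accuracy event $\Ai$, and — as in Case~(C) — coupling the $\chi^2_M$ tail, which decays with $N_i(t)$, with the supermartingale bound of Theorem~\ref{thm_stop}, which is what makes $N_i(t)$ grow at rate at least $p_0/N$ on the rounds that matter.
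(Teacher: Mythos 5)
Your proof is correct and follows essentially the same route as the paper's: you reduce $\Aaci$ to the accuracy event $\Ai$ exactly as in Lemma~\ref{lem:acute2vanilla}, bound $\pr{\Atilicomp \relmiddle| \calF_t}$ by the $\chi^2_M$ tail $h(\tfrac{9}{16}\ep^2 N_i(t))$ as in Lemmas~\ref{lem:Mahalanobis_concentration} and~\ref{lem:caseD_empin-sampleout}, and control the expected number of rounds at each level $N_i(t)=m$ by $N/p_0$ via the posterior mass lower bound of Lemma~\ref{lem:ptilt_lower_bound}. The only cosmetic differences are that you invoke Theorem~\ref{thm_stop} for the per-level count where the paper writes out the equivalent geometric series over the occurrence times $\sigma_k$, and that you bound $\sum_m h(cm)$ by the integral $1+M/c$ where the paper first applies a Chernoff bound to $h$ and then sums the resulting geometric series.
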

\begin{remark}
    To prove the regret upper bound, it is enough to prove Lemma~\ref{lem:case_D} only for $i = 1$.
    However, for the sake of generality, we prove the lemma for any $i \in [N]$. 
\end{remark}
Before proving Lemma~\ref{lem:case_D},
we give two following lemmas.
\begin{lemma}
\label{lem:acute2vanilla}
    For any $i \in [N]$, the event $\Aaci$ implies $\Ai$.
\end{lemma}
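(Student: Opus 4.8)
The plan is to derive $\Ai$ directly from $\Aaci$ via the triangle inequality, exploiting that $\pact$ is by construction close to the mode $\phatt$. First I would recall that $\pact$ was chosen as an arbitrary point of $\calC_{\bar{i}_t,t} = \calC_{\bar{i}_t}\cap B_{\eppr}(\phatt)$, so in particular $\normx{\pact - \phatt}\le \eppr$. Hence $\normx{S_i\pact - S_i\phatt}\le \normx{S_i}\,\eppr$.

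Next I would split $\normx{S_i\phatt - S_i p^*}$ as $\normx{S_i\phatt - S_i\pact} + \normx{S_i\pact - S_i p^*}$ by the triangle inequality. On the event $\Aaci$ the second summand is at most $\ep/8$, while the first is at most $\normx{S_i}\,\eppr$. The definition of $\eppr$ in~\eqref{def:ep_prime} gives $\eppr\le \ep/\bigl(16\max_{j\in[N]}\normx{S_j}\bigr)\le \ep/(16\normx{S_i})$, so $\normx{S_i}\,\eppr\le \ep/16$. Therefore $\normx{S_i\phatt - S_i p^*}\le \ep/16 + \ep/8 = 3\ep/16 \le \ep/4$, which is precisely the event $\Ai$.

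This argument is the mirror image of Lemma~\ref{lem:Abarcomp2Acomp}, and I do not foresee any genuine obstacle; the only point requiring minor care is selecting the first branch of the maximum in the denominator of $\eppr$ so that the factor $16\normx{S_i}$ appears. All remaining steps amount to a single application of the triangle inequality and elementary arithmetic.
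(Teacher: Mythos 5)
Your argument is correct and is essentially identical to the paper's own proof: both apply the triangle inequality to $\nrm{S_i\phatt - S_i p^*}$ via $\pact$, bound $\nrm{S_i\pact - S_i\phatt}$ by $\nrm{S_i}\,\eppr \le \ep/16$ using the definition of $\eppr$, and conclude $\ep/16 + \ep/8 < \ep/4$. No gaps.
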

\begin{proof}
    Using the triangle inequality, we have
    \begin{align}
        \nrm{S_i p^* - S_i \phatt}
        &\le
        \nrm{S_i p^* - S_i \pact}
        +
        \nrm{S_i \pact - S_i \phatt}  \nn
        &\le
        \ep/8 + \nrm{S_i} \cdot \frac{\ep}{16 \max_i \nrm{S_i}}
        <
        \ep/4 \com
    \end{align}
    which completes the proof.
\end{proof}

Now, Lemma~\ref{lem:case_D} can be intuitively proven
because from Lemma~\ref{lem:acute2vanilla}, $\Aaci$ implies $\Ai$,
and the events $\Ai$ and $\Atilicomp$ does not simultaneously occur many times.

Let $t = \sigma_1, \dots, \sigma_m$ be the time of the first $m$ times that the event 
$\{\pact\in\Ci\x \Ai\x N_i(t)=n_i\}$ occurred
(not $\{\pact\in\Ci\x \Aaci\x N_i(t)=n_i\}$).
In other words, we define
\begin{itemize}
    \item
    $\sigma_1$ : the first time that $\pact\in\Ci\x \Ai$ and $N_i(t)=n_i$ occurred
    \item
    $\sigma_2$ : the second time that $\pact\in\Ci\x \Ai$ and $N_i(t)=n_i$ occurred
    \item
    \ldots \per
\end{itemize}
Now we prove the following lemma using Lemma~\ref{lem:Mahalanobis_concentration}.
\begin{lemma}
\label{lem:caseD_empin-sampleout}
    For any $0 \le \xi < 1/2$,
    \begin{align}
        \pr{\Atilicomp \relmiddle| \Ai\x \sigma_k = t}
        \leq
        \exp \left(-\frac{9}{16} \xi \epsilon^2 n_i \right) (1 - 2\xi)^{-M/2} \per
        \label{eq:caseD_lem_empin_sampleout}
    \end{align}
\end{lemma}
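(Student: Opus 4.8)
The plan is to reduce the claim directly to Lemma~\ref{lem:Mahalanobis_concentration}, in the same manner Lemma~\ref{lem:caseA_2} was handled; the only difference is that here the bad event $\Atilicomp$ is \emph{exactly} the event $\{\ptilt\in\calT_i\}$ for $\calT_i=\{p\in\R^M:\normx{S_i p-S_i p^*}>\ep\}$, rather than a subset of it obtained through local observability (in case (A) one first had to pass from $\calV_i$ to $\calT_i$ via Corollary~\ref{cor:loc_prop_cor1}, whereas here no such step is needed).

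First I would clear the measurability bookkeeping. Both $\{\sigma_k=t\}$ and $\Ai=\{\normx{S_i\phatt-S_i p^*}\le\ep/4\}$ are $\calF_t$-measurable, since $\phatt=B_t^{-1}b_t$, $B_t$, the counts $N_j(\cdot)$, and the reference points $\bar{p}_s$ (measurable selections from $\calC_{\bar{i}_s,s}$ for $s\le t$) are all determined by $\calF_t$. Hence conditioning on $\Ai\cap\{\sigma_k=t\}$ does not alter the conditional law of the posterior sample given $\calF_t$, which stays $\calN(\phatt,B_t^{-1})$; in particular $\normx{B_t^{1/2}(\ptilt-\phatt)}^2\sim\chi^2_M$, and on $\{\sigma_k=t\}$ we have $N_i(t)=n_i$.

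Using $\Atilicomp=\{\ptilt\in\calT_i\}$, together with the fact that $\ptilt\in\calT_i$ forces $\normx{B_t^{1/2}(\ptilt-\phatt)}^2\ge\inf_{p\in\calT_i}\normx{B_t^{1/2}(p-\phatt)}^2$, I obtain
\begin{align}
\pr{\Atilicomp\relmiddle|\Ai\x \sigma_k=t}
&\le
h\left(\inf_{p\in\calT_i}\normx{B_t^{1/2}(p-\phatt)}^2\right)\per
\end{align}
On the conditioning event the hypotheses of Lemma~\ref{lem:Mahalanobis_concentration} are satisfied, namely $N_i(t)=n_i\ge n_i$ and $\normx{S_i\phatt-S_i p^*}\le\ep/4$ (by $\Ai$), so applying that lemma bounds the right-hand side by $\exp(-\frac{9}{16}\xi\ep^2 n_i)(1-2\xi)^{-M/2}$ for every $0\le\xi<1/2$, which is the claim. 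I do not expect a genuine obstacle here; the only point requiring care is the conditioning step, i.e.\ verifying that every event being conditioned on lies in $\calF_t$ so the posterior sample keeps its clean Gaussian law, after which the statement follows by a direct appeal to Lemma~\ref{lem:Mahalanobis_concentration} and the $\chi^2_M$ distribution of the squared Mahalanobis norm.
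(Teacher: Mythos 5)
Your proposal is correct and follows essentially the same route as the paper's proof: identify $\Atilicomp$ with $\{\ptilt\in\calT_i\}$, use that $\nrm{B_t^{1/2}(\ptilt-\phatt)}^2\sim\chi^2_M$ under the $\calF_t$-measurable conditioning event to bound the probability by $h\bigl(\inf_{p\in\calT_i}\nrm{B_t^{1/2}(p-\phatt)}^2\bigr)$, and then invoke Lemma~\ref{lem:Mahalanobis_concentration}. The only difference is that you spell out the measurability bookkeeping and the observation that no analogue of Corollary~\ref{cor:loc_prop_cor1} is needed here, both of which the paper leaves implicit.
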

\begin{proof}
    Recall that $\calT_i = \ev{p \in \R^M : \nrm{S_i p - S_i p^* } > \epsilon}$.
    We follow a similar argument as the analysis for Lemma~\ref{lem:caseA_2}.
    Since $\ptilt \sim \calN(B_t^{-1}b_t, B_t^{-1})$,
    the squared Mahalanobis distance $\nrm{ B_t^{1/2} (p - \phatt )}^2$
    follows the chi-squared distribution with $M$ degree of freedom.
    Hence,
    for $h(a) = \P_{X\sim\chi^2_M}\left\{ X \geq a \right\}$,
    we have
    \begin{align}
        \pr{\Atilicomp \relmiddle| \Aini\x \sigma_k = t}
        \leq
        h\left(\inf_{p\in\calT_i} \nrm{ B_t^{1/2} (p - \phatt )}^2\right) \per
    \end{align}
    Then, Eq.~\eqref{eq:caseD_lem_empin_sampleout} directly follows from Lemma~\ref{lem:Mahalanobis_concentration}.
\end{proof}

\begin{proof}[Proof of Lemma~\ref{lem:case_D}]
From Lemma~\ref{lem:acute2vanilla}, the event $\Aaci$ implies $\Ai$.
Hence, it is enough to derive the upper bound for 
\begin{align}
    \Expect{ \sumT \ind{\pact\in\Ci\x  \Ai\x  \Atilicomp} }
\end{align}
instead of the bound for 
\begin{align}
    \Expect{ \sumT \ind{\pact\in\Ci\x  \Aaci\x  \Atilicomp} } \per
\end{align}

Using Lemma~\ref{lem:caseD_empin-sampleout},
we can bound the term for case (D) from above as
\begin{align}
    &
    \Expect{ \sumT \ind{\pact\in\Ci\x \Ai\x \Atilicomp} }  \nn
    &=
    \Expect{ \sum_{n_i=1}^T \sumT \ind{\Ai\x \Atilicomp\x N_i(t)=n_i} }  \nn
    &=
    \sum_{n_i=1}^T \sumT \pr{\Ai\x \Atilicomp\x N_i(t)=n_i}  \nn
    &=
    \sum_{n_i=1}^T \sumT \sum_{k=1}^T \pr{\Ai\x \Atilicomp\x \sigma_k = t }
    \Since{the event $\ev{\sigma_k = t}$ is exclusive for fixed $n_i$} \nn
    &=
    \sum_{n_i=1}^T
    \sumT \sum_{k=1}^T
    \pr{\Ai\x \sigma_k = t}
    \pr{\Atilicomp \relmiddle| \Ai\x \sigma_k = t}  \nn
    &\leq
    \sum_{n_i=1}^T
    \sumT \sum_{k=1}^T
    \pr{\Ai\x \sigma_k = t}
    C \e^{-n_i \iota}
    \By{Lemma~\ref{lem:caseD_empin-sampleout}} \nn
    &=
    \sum_{n_i=1}^T
    C \e^{-n_i \iota}
    \sumT \sum_{k=1}^T
    \pr{\Ai\x \sigma_k = t}  \nn
    &\leq
    \sum_{n_i=1}^T
    C \e^{-n_i \iota}
    \sumT \sum_{k=1}^T
    \pr{\sigma_k = t}  \nn
    &\leq
    \sum_{n_i=1}^T
    C \e^{-n_i \iota}
    \sum_{k=1}^T
    \pr{\text{$\sigma_k$ exists}}  \nn
    &\leq
    \sum_{n_i=1}^T
    C \e^{-n_i \iota}
    \sum_{k=1}^T
    \left(1 - \frac{p_0}{N}\right)^{k-1}
    \By{$\tilde{p}_{\sigma_s} \not\in \Ci$ for $s = 1, \dots, k-1$}  \nn
    &\leq
    3C \frac{1}{\e^\iota - 1} \frac{N}{p_0} \nn
    &\leq
    \frac{48}{9} \frac{M+2}{\ep^2} \frac{N}{p_0} \com
\end{align}
where $\iota = \frac{9 \xi \epsilon^2}{16}, C = (1-2\xi)^{-\frac{M}{2}}$,
and in the last inequality we select the optimal $\xi$ and use $1 + x \le \e^x$.
\end{proof}

\subsection{Analysis for Case (E)}
\begin{lemma}
    \label{lem:case_E}
    For any $i \neq 1$,
    \begin{align}
        \Expect{\sumT \ind{\pact\in\Ci\x \Aaci} } 
        \le
        \frac{2^{5M/2+7} \Gamma(M/2+1) \e^{\la^2\normx{p^*}^2/2}}{\de^2\ep^{M+2}\la^{M/2+1}} \com
        \label{eq:case_E}
    \end{align}
    where
    $\ep$ is defined in~\eqref{eq:def_epsilon} and satisfies $\ep\le\min_{p\in\calC_1^c}\normx{p-p^*}/3$,
    and
    \begin{align}
        \delta
        \coloneqq
        \min_{\phat: (L_1-L_i)^\top\phat\ge 0\x \normx{S_{i}(\phat-p^*)}\le \ep/8} 
        \normx{S_1(\phat-p^*)}  \per
    \end{align}
\end{lemma}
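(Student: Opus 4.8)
The plan is to reuse the exit-time machinery of Theorem~\ref{thm_stop}, exactly as in Cases~(C) and~(D), but now indexing by the number of times the optimal action has been played. For the suboptimal index $i\neq1$ of the statement and each integer $m\ge0$ set
\begin{align}
\calA_t^{(m)}\coloneqq\ev{\pact\in\Ci\x\Aaci\x N_1(t)=m}\com\qquad
\calB_t\coloneqq\ev{i(t)=1}\per\nonumber
\end{align}
Since playing action $1$ increases $N_1$, the occurrence of $\calB_t$ forbids $\calA_{t'}^{(m)}$ for all $t'>t$, so Theorem~\ref{thm_stop} applies once we exhibit a process $P_t\le\pr{i(t)=1\mid\calF_t}$ with $P_t^{-1}$ a supermartingale; writing $\tau$ for the stopping time~\eqref{def_tau_stop} associated with $\calA_t^{(m)}$, this gives
\begin{align}
\Expect{\sumT\ind{\pact\in\Ci\x\Aaci}}
=\sum_{m\ge0}\Expect{\sumT\ind{\calA_t^{(m)}}}
\le\sum_{m\ge0}\Expect{P_\tau^{-1}\ind{\tau\le T}}\per\nonumber
\end{align}

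The key input is Theorem~\ref{thm_delta}. On $\calA_t^{(m)}$ we have $\pact\in\calC_{i,t}=\Ci\cap B_{\eppr}(\phatt)$, hence $(L_1-L_i)^\top\pact\ge0$, $\normx{S_i\pact-S_ip^*}\le\ep/8$ and $\normx{\pact-\phatt}\le\eppr$, which yields $\phatt\in\Ci+B_{\eppr}(0)$ and $\normx{S_i\phatt-S_ip^*}\le\ep/8+\normx{S_i}\eppr\le3\ep/16<\ep$; moreover the contradiction argument behind Theorem~\ref{thm_delta} shows that the constant $\delta$ of the statement is strictly positive. Applying Theorem~\ref{thm_delta} with $\phat=\phatt$, $\ep''=3\ep/16$ and $n=N_1(t)=m$ (legitimate since $B_t\succeq mS_1^\top S_1+\lambda I$) and then Lemma~\ref{lem:minimax} to rewrite the $\inf$-$\sup$ term, we obtain on $\calA_t^{(m)}$ that $\inf_{p:\normx{p-p^*}\le\ep}\normx{p-\phatt}_{B_t}^2\le\normx{\phatt-p^*}_{B_t}^2-\ep\delta\sqrt{\lambda m}$. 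Consequently, for a ball $B$ of radius a fixed fraction of $\ep$ around $p^*$ — which by the choice of $\ep$ in~\eqref{eq:def_epsilon}, being proportional to $\min_{p\in\calC_1^c}\normx{p-p^*}$, is contained in $\calC_1$ — every $p\in B$ satisfies $\normx{p-\phatt}_{B_t}^2\le\normx{\phatt-p^*}_{B_t}^2-\ep\delta\sqrt{\lambda m}$, so integrating the $\calN(\phatt,B_t^{-1})$-density over $B$ and using $\operatorname{vol}(B)\propto\ep^M/\Gamma(M/2+1)$ gives
\begin{align}
\pr{i(t)=1\mid\calF_t}\ge\pr{\ptilt\in B\mid\calF_t}
\ge\frac{c_M\,\ep^M\sqrt{|B_t|}}{\Gamma(M/2+1)}\exp\!\prn[\Big]{-\tfrac12\normx{\phatt-p^*}_{B_t}^2+\tfrac12\ep\delta\sqrt{\lambda m}}=:P_t\com\nonumber
\end{align}
for an explicit constant $c_M$. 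Up to the constant factor $\frac{\Gamma(M/2+1)}{c_M\ep^M}\e^{-\ep\delta\sqrt{\lambda m}/2}$, the process $P_t^{-1}$ is exactly $\e^{\normx{\phatt-p^*}_{B_t}^2/2}/\sqrt{|B_t|}$, which is a supermartingale by Lemma~\ref{lem:martingale}, so Theorem~\ref{thm_stop} is applicable.

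Putting the pieces together, optional stopping of $\e^{\normx{\phatt-p^*}_{B_t}^2/2}/\sqrt{|B_t|}$ at the bounded stopping time $\tau$ gives $\Expect{\e^{\normx{\phat_\tau-p^*}_{B_\tau}^2/2}/\sqrt{|B_\tau|}}\le\e^{\lambda\normx{p^*}^2/2}\lambda^{-M/2}$, hence $\Expect{P_\tau^{-1}\ind{\tau\le T}}\le\frac{\Gamma(M/2+1)}{c_M\ep^M\lambda^{M/2}}\e^{\lambda\normx{p^*}^2/2}\e^{-\ep\delta\sqrt{\lambda m}/2}$; summing over $m\ge0$ with $\sum_{m\ge0}\e^{-\ep\delta\sqrt{\lambda m}/2}\le1+\int_0^\infty\e^{-\ep\delta\sqrt{\lambda x}/2}\d x=1+\tfrac{8}{\ep^2\delta^2\lambda}$ produces the claimed $\Order(1)$ bound (the extra $2^{\Theta(M)}$ relative to a naive count absorbs the fixed-fraction radius of $B$). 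The step I expect to be the main obstacle is this last geometric/density estimate: checking that a ball around $p^*$ of the appropriate fixed fraction of $\ep$ lies inside $\calC_1$ and that, on $\calA_t^{(m)}$, its $\calN(\phatt,B_t^{-1})$-mass is at least $P_t$ — i.e., correctly matching the separation produced by Theorem~\ref{thm_delta} (through Lemma~\ref{lem:minimax}) to the lower bound on $\pr{i(t)=1\mid\calF_t}$ with all constants tracked so that the surviving exponent is exactly $\tfrac12\ep\delta\sqrt{\lambda m}$.
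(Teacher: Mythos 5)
Your proposal is essentially the paper's own proof: the same indexing of the exit-time argument by $N_1(t)=m$ with $\calB_t=\{\ptilt\in\calC_1\}$, the same lower bound on $\pr{\calB_t\mid\calF_t}$ via Theorem~\ref{thm_delta} and the volume of an $\ep$-ball inside $\calC_1$, the same supermartingale $P_t^{-1}\propto\e^{\normx{\phatt-p^*}_{B_t}^2/2}/\sqrt{|B_t|}$ from Lemma~\ref{lem:martingale}, and the same summation $\sum_m\e^{-\ep\de\sqrt{\la m}/2}=\Order(\ep^{-2}\de^{-2}\la^{-1})$. The only point to tighten is the one you flagged: the ball carrying the density lower bound is $B_\ep(p_0)$ for the minimizing $p_0$ with $\normx{p_0-p^*}\le2\ep$ (hence contained in $B_{3\ep}(p^*)\subseteq\calC_1$), i.e.\ one keeps the $\sup_p\inf_{p'}$ form of \eqref{prob_lower} rather than centering the ball at $p^*$ itself.
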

We prove Lemma~\ref{lem:case_E} using Lemma~\ref{lem:martingale} and Theorem~\ref{thm_stop}.

\begin{remark}
    The upper bound in~\eqref{eq:case_E} goes to infinite
	when we set $\la=0$, that is, a flat prior is used.
	However, this is not the essential effect of the prior but
	just comes from the minimum eigenvalue of $B_1$.
	In fact, we can see from the proof that a similar bound can be obtained for $\lambda=0$
	if we run some deterministic initialization until $B_t$ becomes positive definite.
\end{remark}
	\begin{proof}
		We evaluate each term in the summation using Theorem~\ref{thm_stop} with
		\begin{align}
			\calA_t
			&=
			\{\bar{p}_t\in \calC_i\x  \normx{S_i(\bar{p}_t - p^*)}\le \ep/8\x N_1(t)=n\}\com
			\nn
			\calB_t&=
			\{\tilde{p}_t\in \calC_1\}\per
		\end{align}
		for $n\in[T]$. Recall that
		\begin{align}
			\bar{g}_t(p)
			=
			\frac{1}{\sqrt{(2\pi)^M|B_{t}^{-1}|}}
			\exp\pax{
			-\frac12\normx{p-\hat{p}_t}_{B_{t}}^2
			}
		\end{align}
		is the probability density function
		of $\hat{p}_t$ given $\calF_t=\{B_t,b_t\}$.
		Using $\tau$ defined in~\eqref{def_tau_stop}, it holds for any $\tau \in [T]$
		that
		\begin{align}
			\pr{\calB_{\tau}|\calF_{\tau}}
            &=
			\pr{\tilde{p}_{\tau}\in \calC_1 \relmiddle| \calF_{\tau}}
			\nn
			&=
			\int_{p\in \calC_1}
			\bar{g}_{\tau}(p)
			\d p\nn
			&\ge
			\int_{p: \normx{p-p^*}\le 3\ep}
			\bar{g}_{\tau}(p)
			\d p\nn
			&\ge
			\sup_{p: \normx{p-p^*}\le 2\ep}
			\int_{p': \normx{p'-p}\le \ep}
			\bar{g}_{\tau}(p')
			\d p'
			\label{p_subset}
			\\
			&\ge
			\sup_{p: \normx{p-p^*}\le 2\ep}
			\inf_{p': \normx{p'-p}\le \ep}
			\bar{g}_{\tau}(p')
			\mathrm{Vol}(\{p'':\normx{p''-p}\le \ep\})
			\nn
			&=
			\frac{(\sqrt{\pi}\ep)^{M}}{\Gamma(M/2+1)}
			\sup_{p: \normx{p-p^*}\le 2\ep}
			\inf_{p': \normx{p'-p}\le \ep}
			\bar{g}_{\tau}(p')
			\nn
			&=
			\frac{(\ep/\sqrt{2})^M\sqrt{|B_{\tau}|}}{\Gamma(M/2+1)}
			\exp\left\{
			-\frac12
			\pax{
			\inf_{p: \normx{p-p^*}\le 2\ep}
			\sup_{p': \normx{p'-p}\le \ep}
			\normx{p'-\phat_{\tau}}_{B_{\tau}}^2
			}
			\right\}
			\nn
			&\ge
			\frac{(\ep/\sqrt{2})^M\sqrt{|B_{\tau}|}}{\Gamma(M/2+1)}
			\exp\left\{
			-\frac{\normx{\phat_{\tau}-p^*}_{B_{\tau}}^2-\ep\de\sqrt{\la n}}{2}
			\right\}\com
			\label{prob_lower}
		\end{align}
		where~\eqref{p_subset} follows since
		$\{p: \normx{p-p^*}\le 3\ep\}\supset \{p': \normx{p'-p_0}\le \ep\}$
		for any $p_0$ such that $\normx{p_0-p^*}\le 2\ep$,
		and the last inequality follows from Theorem~\ref{thm_delta}.
		To apply Theorem~\ref{thm_delta},
		we used Lemma~\ref{lem:acute2vanilla}.
		
		Now we define a stochastic process corresponds to~\eqref{prob_lower}
		as
		\begin{align}
			P_t
			&=
			\frac{(\ep/\sqrt{2})^M\sqrt{|B_{t}|}}{\Gamma(M/2+1)}
			\exp\left\{
			-\frac{\normx{\phat_{t}-p^*}_{B_{t}}^2-\ep\de\sqrt{\la n}}{2}
			\right\}\per
		\end{align}
		Then, by Lemma~\ref{lem:martingale},
		\begin{align}
			\E[P_{t+1}^{-1}|\calF_t]
			&\le
			\frac{\Gamma(M/2+1)}{(\ep/\sqrt{2})^M}
			\e^{-\ep\de\sqrt{\la n}/2}
			\Ex{
			\frac{1}{\sqrt{|B_{t+1}|}}
			\Ex{
			\exp\left(
			\frac{\normx{\phat_{t}-p^*}_{B_{t+1}}^2}{2}
			\right)
			\relmiddle|\calF_t\x S_{i(t)}
			}\relmiddle|\calF_t}\nn
			&\le
			\frac{\Gamma(M/2+1)}{(\ep/\sqrt{2})^M}
			\e^{-\ep\de\sqrt{\la n}/2}
			\Ex{
			\frac{1}{\sqrt{|B_{t}|}}
			\exp\left(
			\frac{\normx{\phat_{t}-p^*}_{B_{t}}^2}{2}
			\right)
			\relmiddle| \calF_t
			}
			\nn
			&=
			P_t^{-1}\com
		\end{align}
		which means that $P_t^{-1}$ is a supermartingale.
		Therefore we can apply Theorem \ref{thm_stop} and obtain
		\begin{align}
			\Ex{
			\sum_{t=1}^T
			\ind{\phat_t\in \calC_i\x  \normx{S_i(\pbar_t-p^*)}\le \ep/8\x N_1(t)=n}
			}
			&\le
			\Ex{
			\ind{\tau\le T}
			P_{\tau}^{-1}}\nn
			&\le
			\Ex{
			P_{\tau}^{-1}}\nn
			&\le
			\Ex{
			P_{1}^{-1}}\nn
			&=
			\frac{\Gamma(M/2+1)\e^{\la^2\normx{p^*}^2/2}}{(\ep\sqrt{\la/2})^M}
			\e^{-\ep\de\sqrt{\la n}/2}
			\per
		\end{align}

		Finally we have
		\begin{align}
			\lefteqn{
			\Ex{\sum_{t=1}^T
			\ind{\bar{p}_t\in \calC_i\x  \normx{S_i(\bar{p}_t-p^*)}\le \ep/8}
			}}\nn
            &=
            \sum_{n=1}^T
            \Ex{
            \sum_{t=1}^T
            \ind{\bar{p}_t\in \calC_i\x  \normx{S_i(\bar{p}_t-p^*)}\le \ep/8\x N_1(t)=n}
            } \nn
			&\le
			\frac{\Gamma(M/2+1)\e^{\la^2\normx{p^*}^2/2}}{(\ep\sqrt{\la/2})^M}
			\sum_{n=1}^{\infty}
			\e^{-\ep\de\sqrt{\la n}/2}
			\nn
			&\le
			\frac{\Gamma(M/2+1)\e^{\la^2\normx{p^*}^2/2}}{(\ep\sqrt{\la/2})^M}
			\int_0^{\infty}
			\e^{-\ep\de\sqrt{\la x}/2}\d x
			\nn
			&=
			\frac{\Gamma(M/2+1)\e^{\la^2\normx{p^*}^2/2}}{(\ep\sqrt{\la/2})^M}
			\frac{2}{(\ep\de\sqrt{\la}/2)^2}\Gamma(2)
			\nn
			&=
			\frac{2^{M/2+3}\Gamma(M/2+1)\e^{\la^2\normx{p^*}^2/2}}{\de^2 \ep^{M+2} \la^{M/2+1}}
			\com
		\end{align}
		which completes the proof.
	\end{proof}

\section{Property of Dynamic Pricing Games}
\label{sec:prop_dp_easy}
In this appendix, we will see a property of dp-easy games.

\begin{proposition} \label{prop:dp_easy}
Consider any dp-easy games with $c > -1$.
Then, any two actions in the game are neighbors.
\end{proposition}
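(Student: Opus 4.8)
The plan is to pass to ``CDF coordinates'' that linearize the dynamic-pricing loss. Writing $u_k(p):=\sum_{j<k}p_j$ (so $u_1(p)\equiv 0$), a one-line computation from $\ell_{k,j}=-k\,\ind{k\le j}+c\,\ind{k>j}$ gives $L_k^\top p=(c+k)\,u_k(p)-k$ for every action $k$ (I take the dynamic-pricing setup, where $N\le M$). The hypothesis $c>-1$ enters only through the inequalities $c>-k$, which hold for all $k\in[N]$ precisely because $c>-1$. First I would record Pareto-optimality of every action: at $p=e_k$ the action losses are $-1,-2,\dots,-k,c,c,\dots,c$, whose unique minimum is $-k$ since $c>-k$, so action $k$ is (uniquely) optimal at $e_k$ and hence Pareto-optimal. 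So for a fixed pair $i<i'$ the only thing left to check is that $\mathcal{C}_i\cap\mathcal{C}_{i'}$ (a bounded polytope, being the intersection of two cells) has dimension exactly $M-2$.

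Second, I would set up the relevant hyperplane. Evaluating $L_i-L_{i'}$ at the vertices $e_i,e_{i'}$ gives $(L_i-L_{i'})^\top e_i=-(i+c)<0$ and $(L_i-L_{i'})^\top e_{i'}=i'-i>0$; hence $L_i-L_{i'}$ is not a scalar multiple of $\onemat_M$, so the hyperplane $H:=\{p:(L_i-L_{i'})^\top p=0\}$ meets the relative interior of $\mathcal{P}_M$, and $C:=\mathcal{P}_M\cap H$ is a convex set of dimension exactly $M-2$.

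Third, I would exhibit an explicit witness on which actions $i$ and $i'$ tie for the \emph{strict} minimum. Take $q^0:=\alpha\,e_i+(1-\alpha)\,e_{i'}$ with $\alpha:=(i'-i)/(c+i')$; note $\alpha\in(0,1)$, where $\alpha<1$ is exactly the inequality $c>-i$. In the $u$-coordinates $u_k(q^0)=0$ for $k\le i$, $=\alpha$ for $i<k\le i'$, and $=1$ for $k>i'$, so $L_i^\top q^0=L_{i'}^\top q^0=-i$, whereas: for $k<i$ one gets $-k>-i$; for $i<k<i'$ one gets $(c+k)\alpha-k$, which exceeds $(c+i')\alpha-i'=-i$ because $x\mapsto(c+x)\alpha-x$ is strictly decreasing (as $\alpha<1$); and for $k>i'$ one gets $c>-i$. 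Thus $q^0\in\mathcal{C}_i\cap\mathcal{C}_{i'}$, and every defining inequality of the two cells not involving $i$ or $i'$ holds strictly at $q^0$.

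Finally I would promote $q^0$ to an interior witness. Pick any $\bar q\in\operatorname{relint}(C)$ and set $q_t:=(1-t)q^0+t\bar q$, which lies in $\operatorname{relint}(C)$ for all $t\in(0,1]$. For $t$ small the strict inequalities above persist by continuity, and $q_t\in H$, so $q_t\in\mathcal{C}_i\cap\mathcal{C}_{i'}$; moreover on a small ball $U$ around $q_t$ those strict inequalities still hold, so $\mathcal{C}_i\cap\mathcal{C}_{i'}\cap U=C\cap U$, which contains a full $(M-2)$-dimensional neighborhood of $q_t$ inside $\operatorname{aff}(C)$. Since also $\mathcal{C}_i\cap\mathcal{C}_{i'}\subseteq C$, this forces $\dim(\mathcal{C}_i\cap\mathcal{C}_{i'})=M-2$, and as $i,i'$ are Pareto-optimal they are neighbors. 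The main obstacle I anticipate is the third step: verifying that $q^0$ realizes the tie between $i$ and $i'$ as the strict global minimum over \emph{all} actions is where the loss structure and $c>-1$ (via $c>-i$ and $c+k>0$) are genuinely used, and it needs the small monotonicity observation for the ``middle'' actions $i<k<i'$; the concluding dimension count is routine once $q_t$ is in hand.
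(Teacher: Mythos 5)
Your proof is correct and follows essentially the same route as the paper's: the same tie point $\alpha=(i'-i)/(c+i')$ on the edge $[e_i,e_{i'}]$ (the paper's $\alpha^*=(k-j)/(c+k)$ for $p^{(j,k)}$), and the same verification that every action outside the pair is strictly suboptimal in a neighborhood of that point, with your CDF coordinates $u_k(p)$ just streamlining the paper's case analysis over $i<j$, $j<i<k$, $k<i\le N$. The one substantive refinement is that you carry the argument through to the explicit $(M-2)$-dimension count by perturbing $q^0$ into the relative interior of $\calP_M\cap H$, a step the paper leaves implicit after establishing $\mathrm{Ball}_\ep^{(j,k)}\subset \calC_j\cup\calC_k$.
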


\begin{remark}
In section~\ref{sec:experiments}, 
we considered dp-easy games with $c > 0$, 
but this can be relaxed to $c > -1$ to prove Proposition~\ref{prop:dp_easy}.
\end{remark}

\begin{proof}
Take any two different actions $j, k \in [N]$ such that $j < k$.
From the definition of the loss matrix in dp-easy games,
we have $e_j \in \Cj$ and $e_k \in \Ck$.

First, we will find $\alpha \in [0,1]$ such that
\begin{align}\label{eq:alpha_cond}
    \alpha e_j + (1 - \alpha) e_k \in \Cj \cap \Ck \per
\end{align}

From the definition of the loss matrix, 
the $i$-th element of $L(\alpha e_j + (1-\alpha) e_k) \in \calP_M$ is
\begin{align}\label{eq:lin_comb_loss}
    \begin{cases}
        - i & (1 \le i \le j)  \\ 
        \alpha c + (1-\alpha) \cdot (-i) & (j+1 \le i \le k)  \\
        c & (k < i \le N) \\
    \end{cases}
    \per
\end{align}
It is easy to see that the indices
which give the minimum value in~\eqref{eq:lin_comb_loss} is $j$ or $k$.
Thus, to achieve the condition~\eqref{eq:alpha_cond}, the following should be satisfied,
\begin{align}
    -j = \alpha c + (1 - \alpha) \cdot (-k) \com
\end{align}
which is equivalent to 
\begin{align}
    \alpha = \frac{k-j}{c+k} (\eqqcolon \alpha^*) \per
\end{align}
Note that we have $0 \le \alpha \le 1$ for any $c > -1$.

Next, 
we introduce the following definitions.
\begin{align}
    \pjk 
    &\coloneqq
    \alpha^* e_j + (1-\alpha^*) e_k \in \Cj \cap \Ck  
    \com \\
    \mathrm{Ball}_\ep^{(j,k)} 
    &\coloneqq 
    \ev{ p \in \calP_M : \nrm{p - \pjk} \le \ep} 
    \com  \\
    \Lx
    &\coloneqq
    L (\pjk + x) \in \R^N
    \per
\end{align}

To prove the proposition,
it is enough to prove the following:
there exists $\ep > 0$, $\mathrm{Ball}_\ep^{(j,k)} \subset \Cj \cup \Ck$.

To prove this,
it is enough to prove that,
there exists $\ep > 0$,
\begin{align}\label{eq:loss_diff_delta}
    \min_{x\in\R^M : \nrm{x} \le \ep}
    \min_{i \in [N]\backslash\{j,k\}}
        \left( (\Lx)_i - (\Lx)_j \right) 
        \vee
        \left( (\Lx)_i - (\Lx)_k \right)
    > 0
    \per
\end{align}

We will prove~\eqref{eq:loss_diff_delta} in the following.
Take any $i \in [N]\backslash\ev{j,k}$ and 
\begin{align}
    \ep
    \coloneqq
    \min_{i: 1\le i < j} \frac12 \frac{j-i}{\nrm{L_j-L_i}}
    \wedge
    \min_{i: j < i < k} \frac12 \frac{(1-\alpha^*) (k-i)}{\nrm{L_i - L_k}}
    \wedge
    \min_{i: k < i \le N} \frac12 \frac{c + j}{\nrm{L_j - L_i}}
    \per
\end{align}
Note that the $\ep$ used here is different from the one used in the proof of the regret upper bounds.

\noindent \textbf{Case (A):} When $1 \le i < j$,
using Cauchy–Schwarz inequality, we have
\begin{align}
    \left( (\Lx)_i - (\Lx)_j \right) 
    \vee
    \left( (\Lx)_i - (\Lx)_k \right)
    &\ge
    (\Lx)_i - (\Lx)_j   \nn
    &=
    (-i + L_i^\top x)
    -
    (-j + L_j^\top x)  \nn
    &=
    (j-i)
    -
    (L_j - L_i)^\top x \nn
    &\ge
    (j-i) 
    - 
    \nrm{L_j - L_k} \nrm{x} \nn
    &\ge
    (j-i)
    -
    \ep \nrm{L_j - L_i}   \nn
    &\ge
    \frac12 (j-i)  \nn
    &>
    0 
    \per
\end{align}

The arguments for cases (B) and (C) follow in the similar manner as case (A).

\noindent \textbf{Case (B):} When $j < i < k$,
we have
\begin{align}
    \left( (\Lx)_i - (\Lx)_j \right) 
    \vee
    \left( (\Lx)_i - (\Lx)_k \right)
    &\ge
    (\Lx)_i - (\Lx)_k   \nn
    &=
    \ev{
        \alpha^* c + (1-\alpha^*) \cdot (-i) + L_i^\top x
    }
    -
    \ev{
        \alpha^* c + (1-\alpha^*) \cdot (-k) + L_k^\top x
    } \nn
    &=
    (1-\alpha^*) (k-i)
    - 
    (L_i - L_k)^\top  x \nn
    &\ge
    (1-\alpha^*) (k-i)
    - 
    \ep (L_i - L_k)^\top \nn    
    &\ge
    \frac12 (1-\alpha^*) (k-i)  \nn
    &>
    0 
    \per
\end{align}

\noindent \textbf{Case (C)}: When $k < i \le N$,
we have
\begin{align}
    \left( (\Lx)_i - (\Lx)_j \right) 
    \vee
    \left( (\Lx)_i - (\Lx)_k \right)
    &\ge
    (\Lx)_i - (\Lx)_j   \nn
    &=
    (c + L_i^\top x)
    -
    (-j + L_j^\top x)  \nn
    &\ge
    c + j 
    - 
    \nrm{L_j - L_i} \nrm{x}  \nn
    &\ge
    c + j 
    - 
    \ep \nrm{L_j - L_i}  \nn
    &\ge
    \frac12 (c+j)  \nn
    &>
    0 
    \per
\end{align}
Summing up the argument for cases (A) to (C),
the proof is completed.
\end{proof}

\section{Details and Additional Results of Experiments}
\label{sec:all_experiments}
Here we give the specific values of the opponent's strategy
used in Section~\ref{sec:experiments}
and show the extended experimental results for performance comparison.
Table~\ref{tb:opponent_strategy_val} summarizes the values of opponent's strategy 
used in this appendix and Section~\ref{sec:experiments}.
Figure~\ref{fig:result_regret_all} shows the empirical comparison of the proposed algorithms against the benchmark methods,
and Figure~\ref{fig:result_n_rejected_all} shows the number of the rejected times.
We can see the same tendency as Section~\ref{sec:experiments}, that is, TSPM performs the best and the number of rejections does not increase with the time step $t$.

\begin{table}[h]
\centering
\small
\begin{center}
\captionof{table}{\label{tb:opponent_strategy_val} The values of the opponent's strategy.}
\begin{tabular}{cll}
    \toprule
    \# of outcomes $M$ & opponent's strategy $p^*$ \\
    \hline
    $2$ & $[0.7, 0.3]$  \\
    $3$ & $[0.5, 0.3, 0.2]$  \\
    $4$ & $[0.3, 0.3, 0.3, 0.1]$  \\
    $5$ & $[0.2, 0.3, 0.3, 0.1, 0.1]$  \\
    $6$ & $[0.2, 0.2, 0.3, 0.1, 0.1, 0.1]$  \\
    $7$ & $[0.2, 0.2, 0.3, 0.1, 0.1, 0.05, 0.05]$  \\
    \bottomrule
\end{tabular}
\end{center}
\end{table}

\begin{figure}[htb]
    \begin{center}
    \setlength{\subfigwidth}{.32\linewidth}
    \addtolength{\subfigwidth}{-.32\subfigcolsep}
    \begin{minipage}[t]{\subfigwidth}
        \centering
        \subfigure[dp-easy, $N=M=2$]{\includegraphics[scale=0.4]{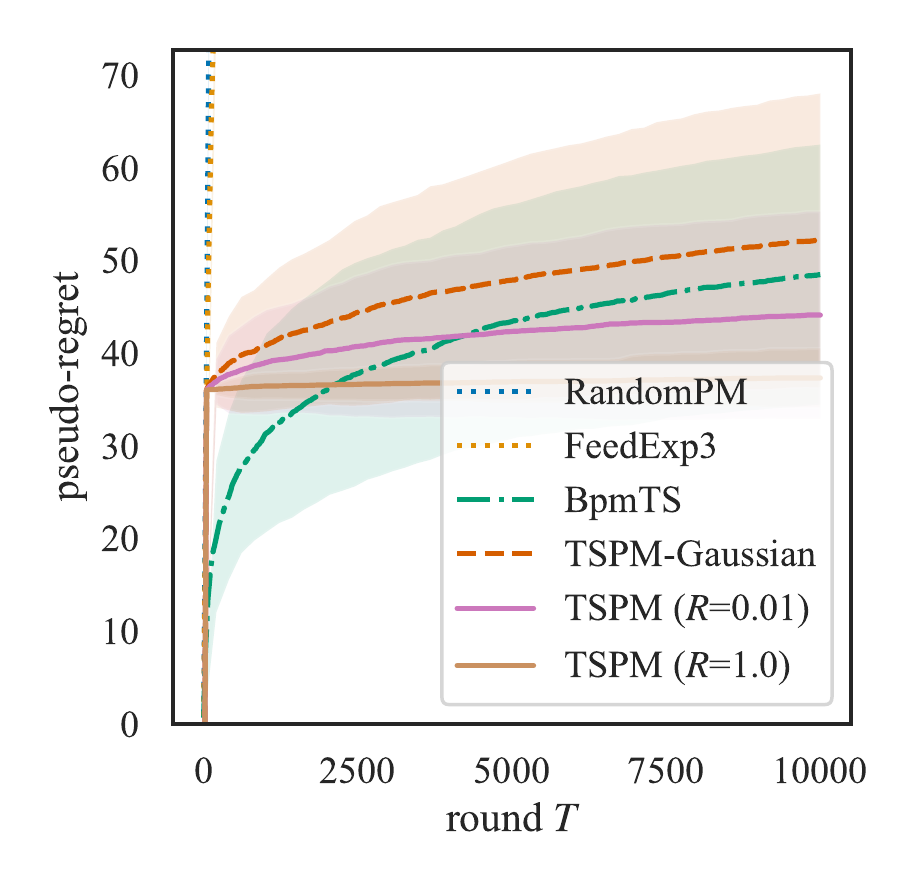}
        }
        \vspace{-0.501em}
    \end{minipage}\hfill
    \begin{minipage}[t]{\subfigwidth}
        \centering
        \subfigure[dp-easy, $N=M=3$]{\includegraphics[scale=0.4]{fig/dp-easy_na_3_no_3.pdf}
        }
        \vspace{-0.501em}
    \end{minipage}\hfill
    \begin{minipage}[t]{\subfigwidth}
        \centering
        \subfigure[dp-easy, $N=M=4$]{\includegraphics[scale=0.4]{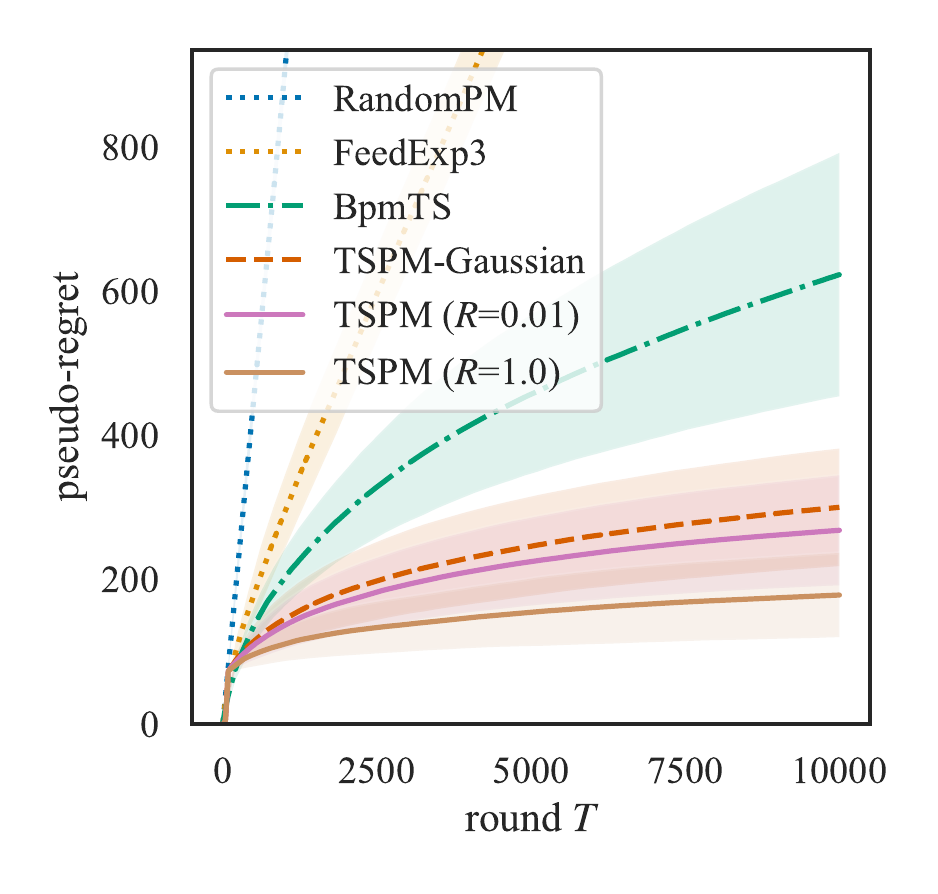}
        }
        \vspace{-0.501em}
    \end{minipage}\hfill
    \begin{minipage}[t]{\subfigwidth}
        \centering
        \subfigure[dp-easy, $N=M=5$]{\includegraphics[scale=0.4]{fig/dp-easy_na_5_no_5.pdf}
        }
        \vspace{-0.501em}
    \end{minipage}\hfill
    \begin{minipage}[t]{\subfigwidth}
        \centering
        \subfigure[dp-easy, $N=M=6$]{\includegraphics[scale=0.4]{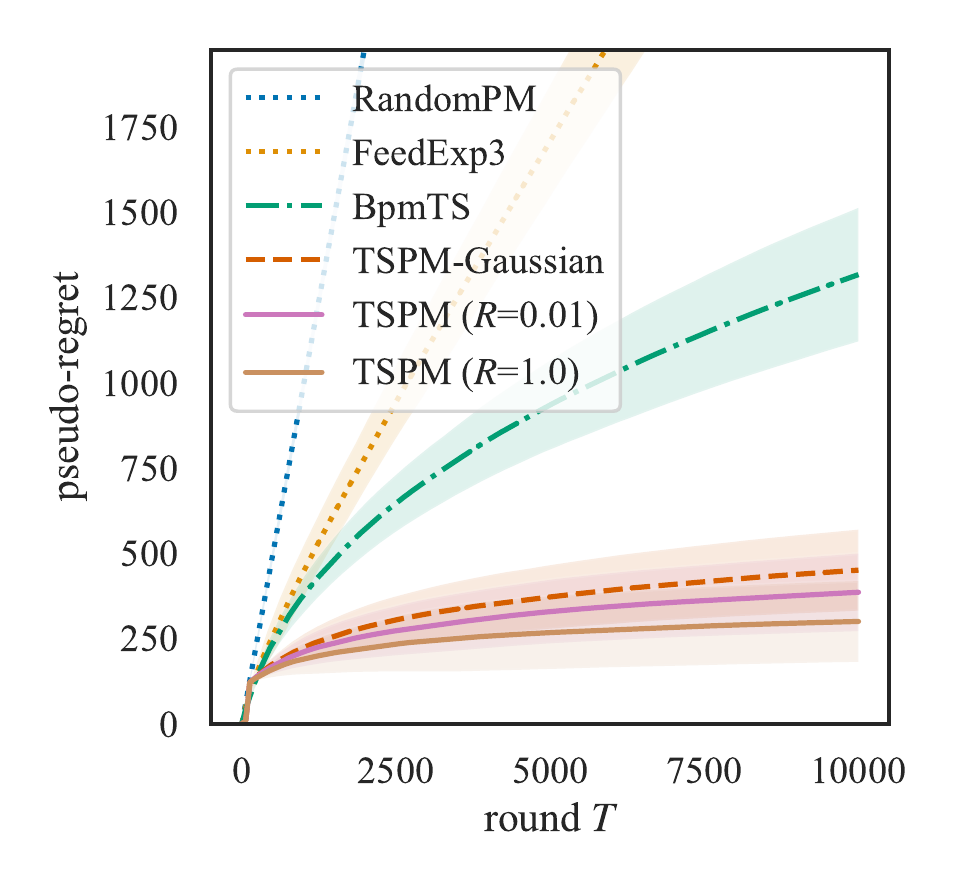}
        }
        \vspace{-0.501em}
    \end{minipage}
    \begin{minipage}[t]{\subfigwidth}
        \centering
        \subfigure[dp-easy, $N=M=7$]{\includegraphics[scale=0.4]{fig/dp-easy_na_7_no_7.pdf}
        }
        \vspace{-0.501em}
    \end{minipage}
    \begin{minipage}[t]{\subfigwidth}
        \centering
        \subfigure[dp-hard, $N=M=2$]{\includegraphics[scale=0.4]{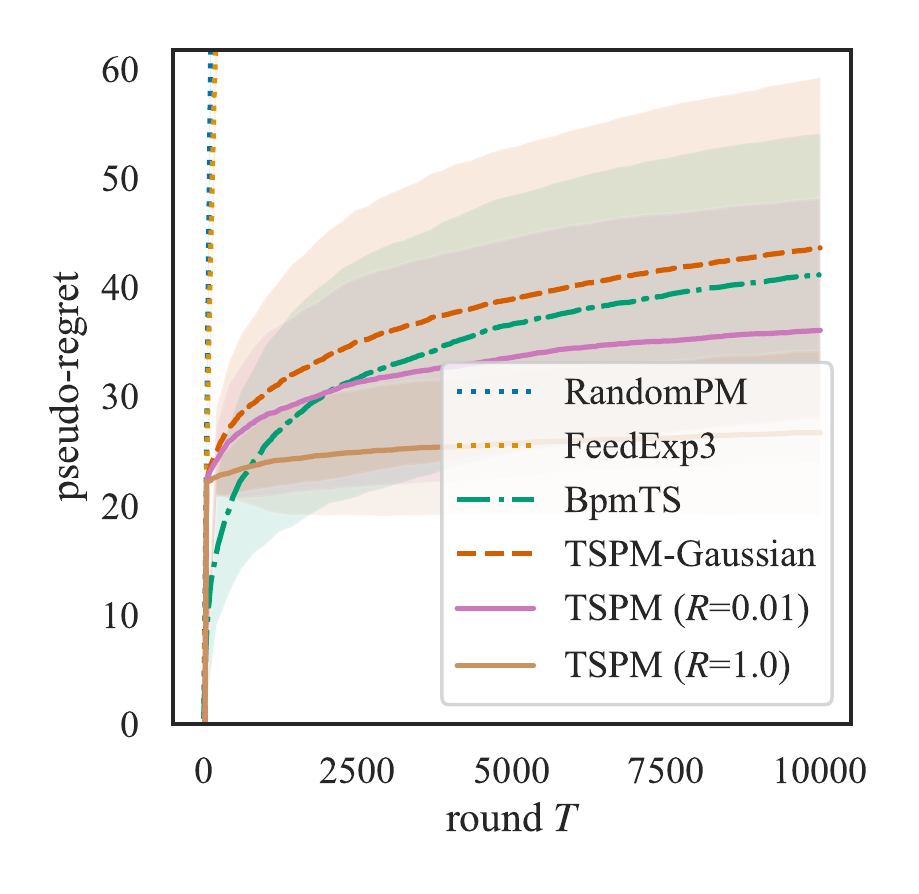}
        }
        \vspace{-0.501em}
    \end{minipage}\hfill
    \begin{minipage}[t]{\subfigwidth}
        \centering
        \subfigure[dp-hard, $N=M=3$]{\includegraphics[scale=0.4]{fig/dp-hard_na_3_no_3.pdf}
        }
        \vspace{-0.501em}
    \end{minipage}\hfill
    \begin{minipage}[t]{\subfigwidth}
        \centering
        \subfigure[dp-hard, $N=M=4$]{\includegraphics[scale=0.4]{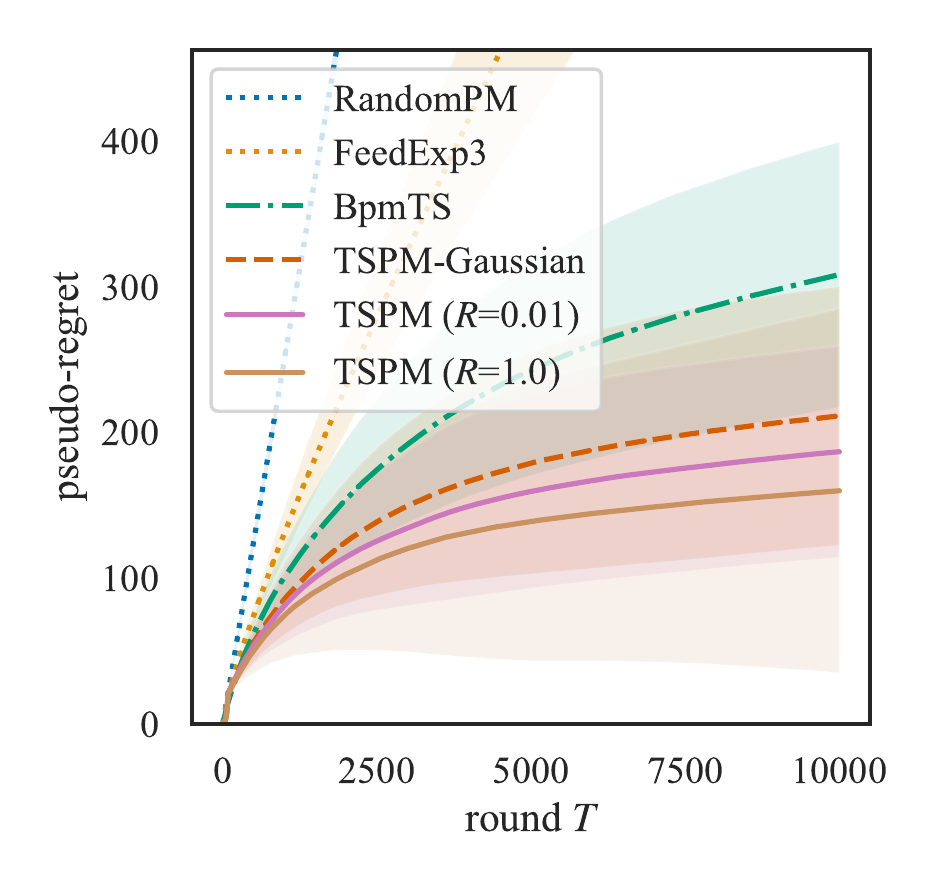}
        }
        \vspace{-0.501em}
    \end{minipage}\hfill
    \begin{minipage}[t]{\subfigwidth}
        \centering
        \subfigure[dp-hard, $N=M=5$]{\includegraphics[scale=0.4]{fig/dp-hard_na_5_no_5.pdf}
        }
        \vspace{-0.501em}
    \end{minipage}\hfill
    \begin{minipage}[t]{\subfigwidth}
        \centering
        \subfigure[dp-hard, $N=M=6$]{\includegraphics[scale=0.4]{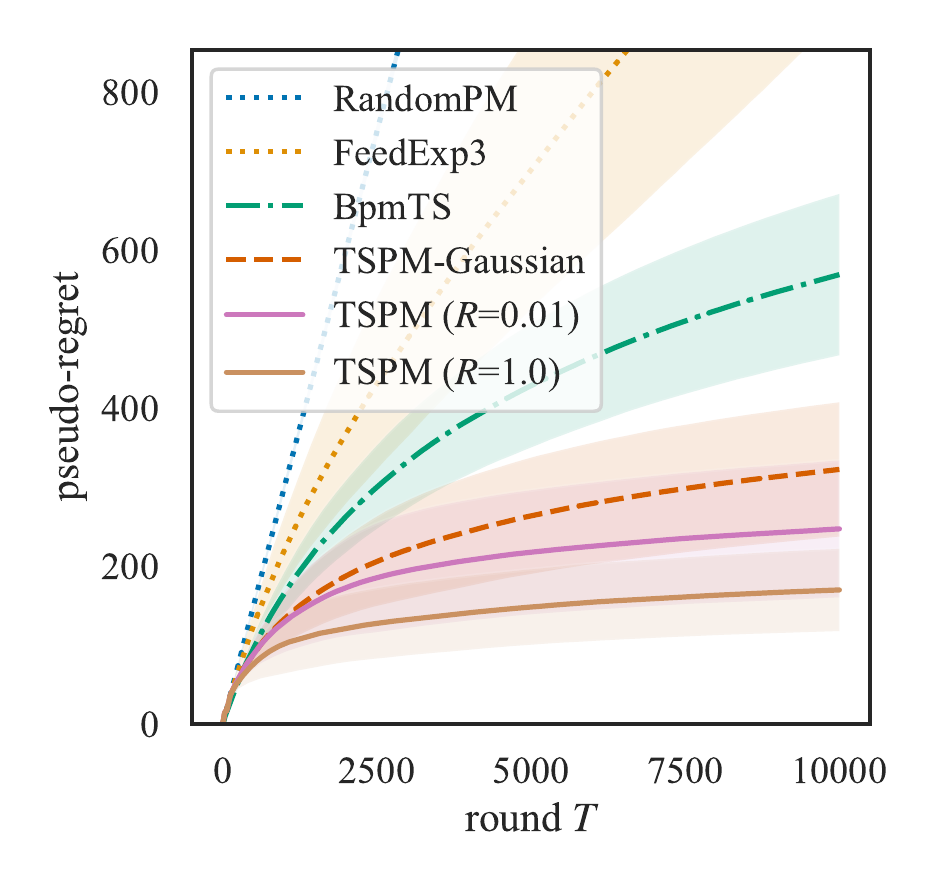}
        }
        \vspace{-0.501em}
    \end{minipage}
    \begin{minipage}[t]{\subfigwidth}
        \centering
        \subfigure[dp-hard, $N=M=7$]{\includegraphics[scale=0.4]{fig/dp-hard_na_7_no_7.pdf}
        }
        \vspace{-0.501em}
    \end{minipage}
    \end{center}
    \caption{
        Regret-round plots of the algorithms.
        The solid lines indicate the average over $100$ independent trials.
        The thin fillings are the standard error.
    }\label{fig:result_regret_all}
\end{figure}%

\setlength\abovecaptionskip{0.2pt}
\begin{figure}[t!]
    \begin{center}
    \setlength{\subfigwidth}{.32\linewidth}
    \addtolength{\subfigwidth}{-.32\subfigcolsep}
    \begin{minipage}[t]{\subfigwidth}
        \centering
        \subfigure[dp-easy, $N=M=2$]{\includegraphics[scale=0.4]{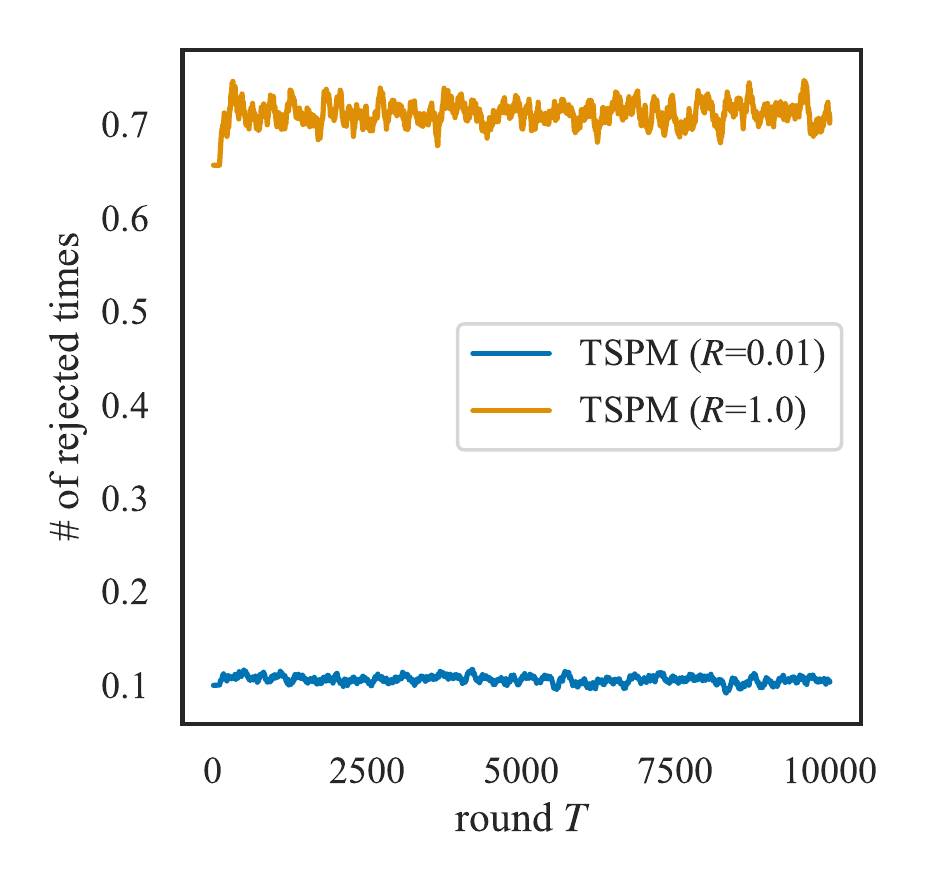}
        }
        \vspace{-0.501em}
    \end{minipage}\hfill
    \begin{minipage}[t]{\subfigwidth}
        \centering
        \subfigure[dp-easy, $N=M=3$]{\includegraphics[scale=0.4]{fig/n_rejected_dp-easy_na_3_no_3.pdf}
        }
        \vspace{-0.501em}
    \end{minipage}\hfill
    \begin{minipage}[t]{\subfigwidth}
        \centering
        \subfigure[dp-easy, $N=M=4$]{\includegraphics[scale=0.4]{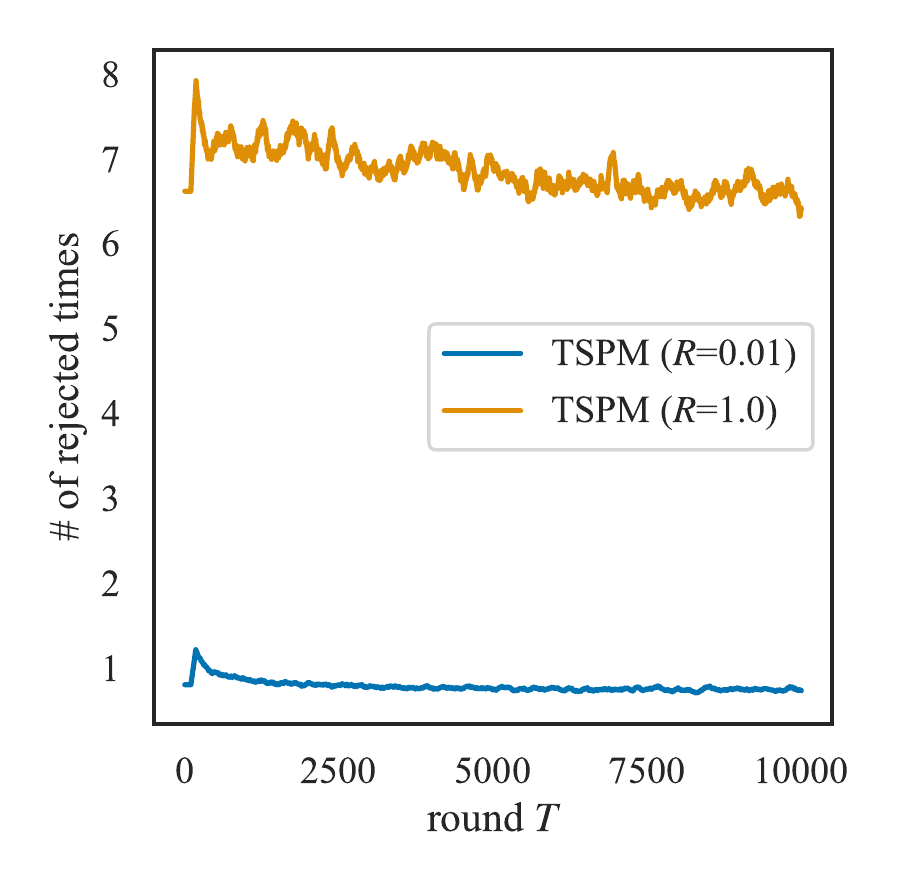}
        }
        \vspace{-0.501em}
    \end{minipage}\hfill
    \begin{minipage}[t]{\subfigwidth}
        \centering
        \subfigure[dp-easy, $N=M=5$]{\includegraphics[scale=0.4]{fig/n_rejected_dp-easy_na_5_no_5.pdf}
        }
        \vspace{-0.501em}
    \end{minipage}\hfill
    \begin{minipage}[t]{\subfigwidth}
        \centering
        \subfigure[dp-easy, $N=M=6$]{\includegraphics[scale=0.4]{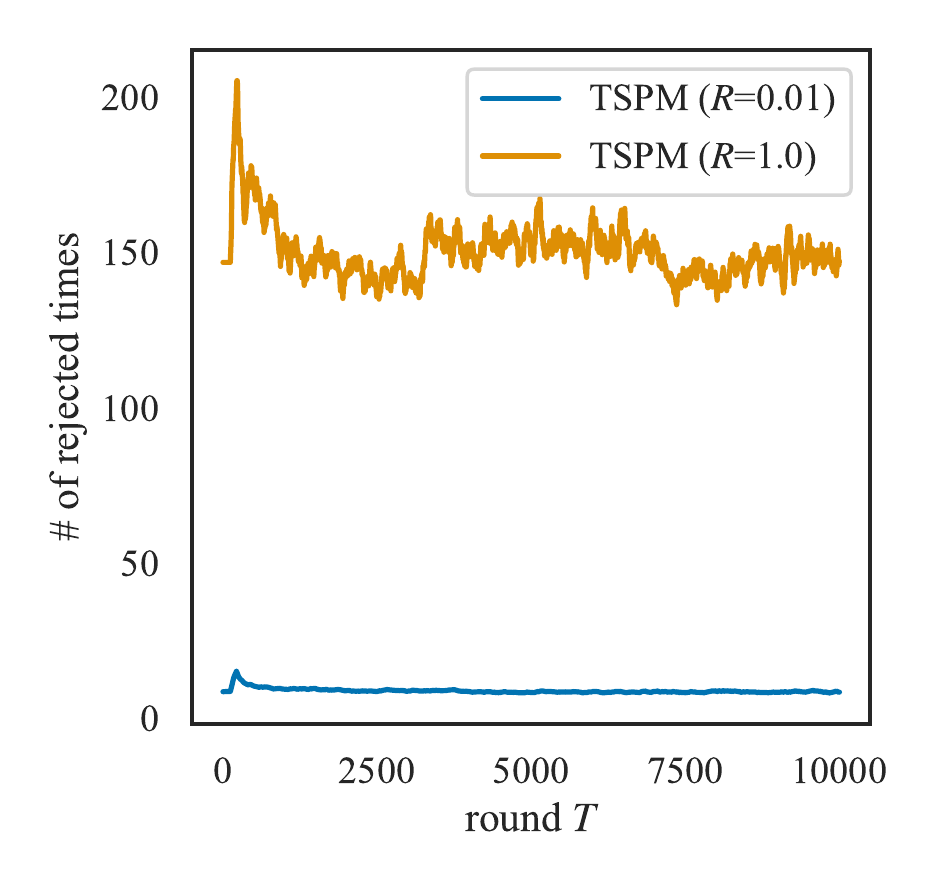}
        }
        \vspace{-0.501em}
    \end{minipage}\hfill
    \begin{minipage}[t]{\subfigwidth}
        \centering
        \subfigure[dp-easy, $N=M=7$]{\includegraphics[scale=0.4]{fig/n_rejected_dp-easy_na_7_no_7.pdf}
        }
        \vspace{-0.501em}
    \end{minipage}
    \begin{minipage}[t]{\subfigwidth}
        \centering
        \subfigure[dp-hard, $N=M=2$]{\includegraphics[scale=0.4]{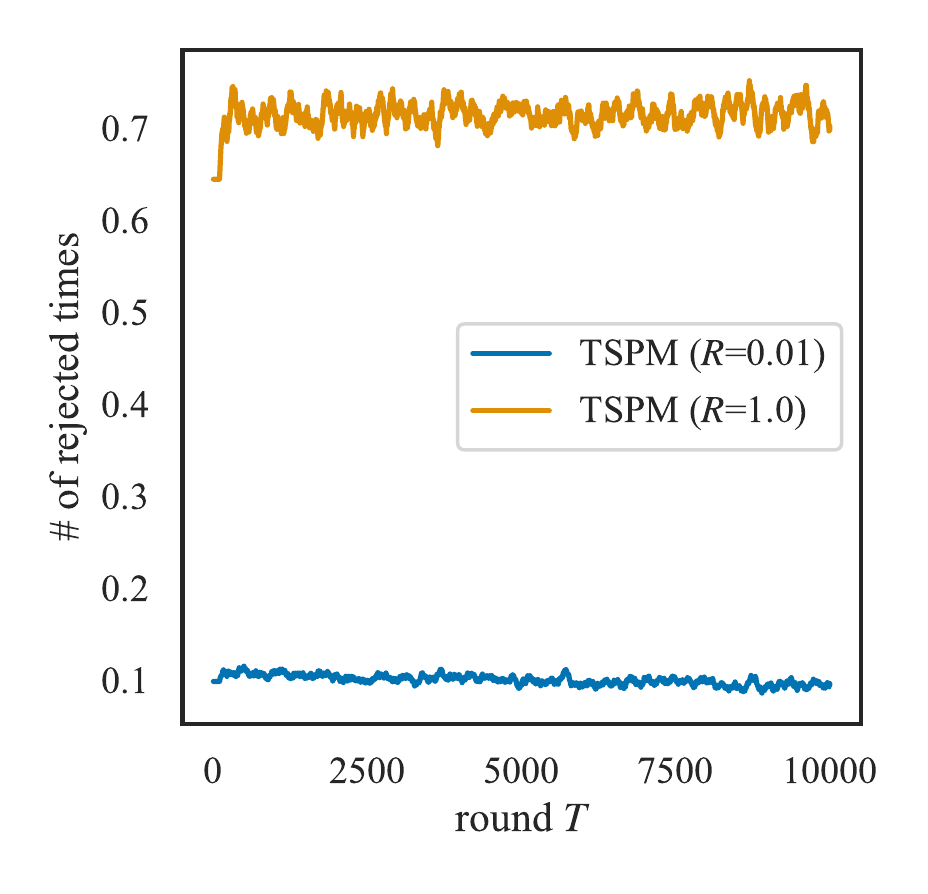}
        }
        \vspace{-0.501em}
    \end{minipage}\hfill
    \begin{minipage}[t]{\subfigwidth}
        \centering
        \subfigure[dp-hard, $N=M=3$]{\includegraphics[scale=0.4]{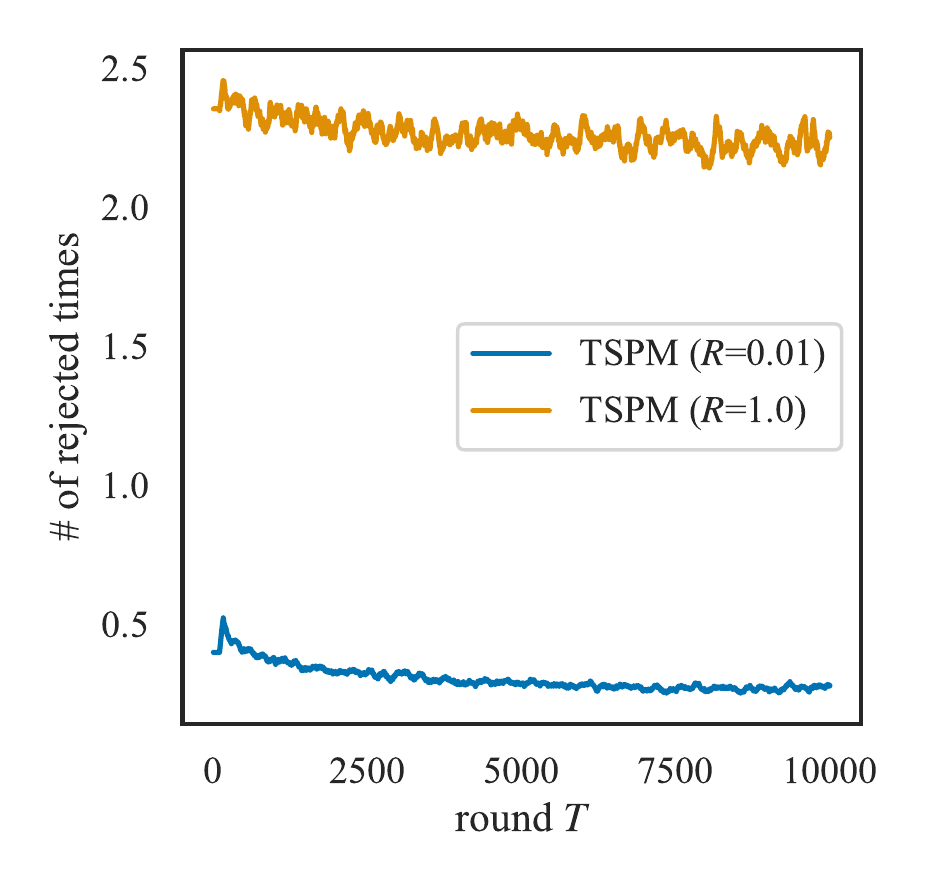}
        }
        \vspace{-0.501em}
    \end{minipage}\hfill
    \begin{minipage}[t]{\subfigwidth}
        \centering
        \subfigure[dp-hard, $N=M=4$]{\includegraphics[scale=0.4]{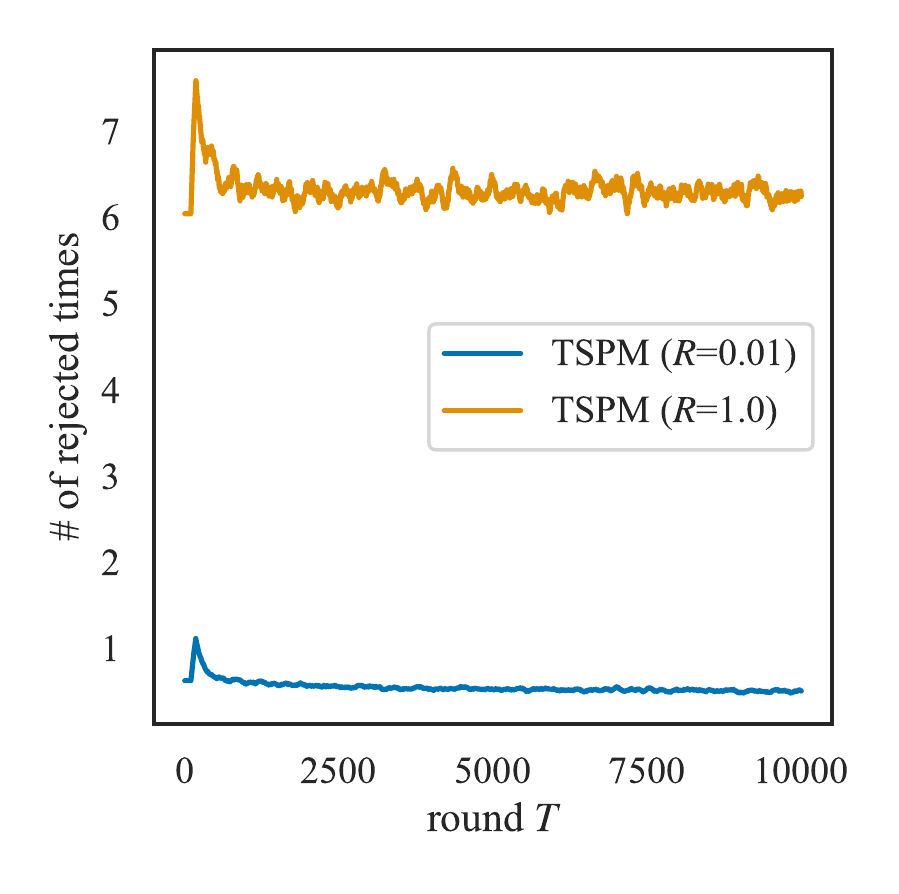}
        }
        \vspace{-0.501em}
    \end{minipage}\hfill
    \begin{minipage}[t]{\subfigwidth}
        \centering
        \subfigure[dp-hard, $N=M=5$]{\includegraphics[scale=0.4]{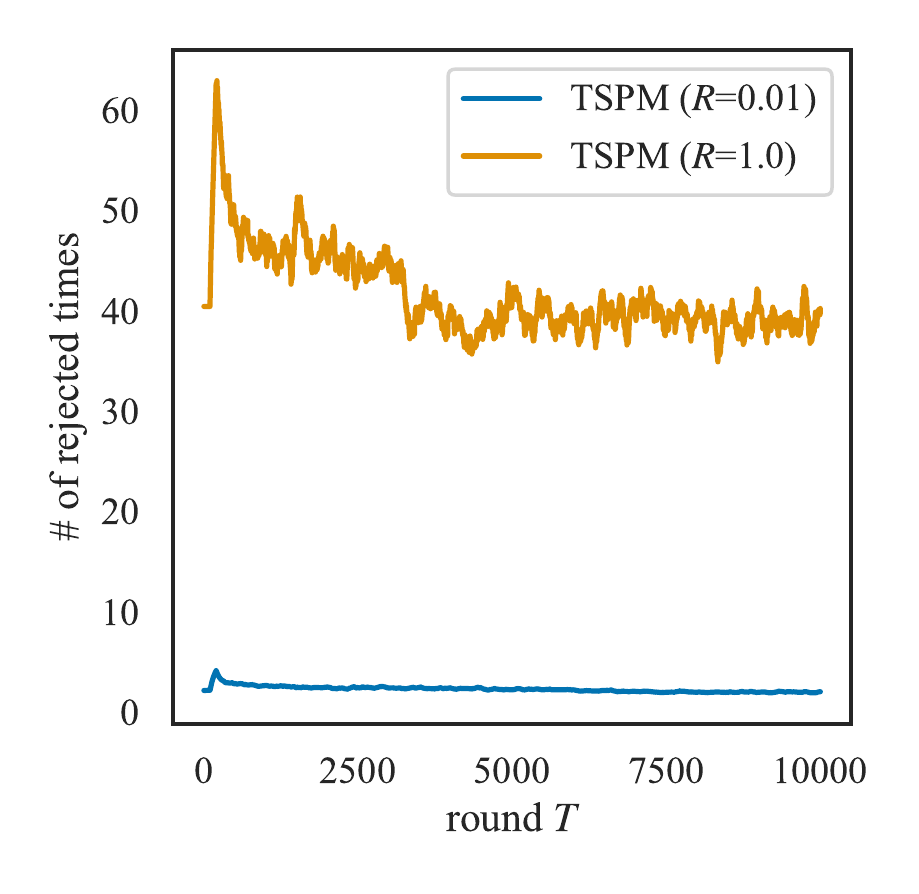}
        }
        \vspace{-0.501em}
    \end{minipage}\hfill
    \begin{minipage}[t]{\subfigwidth}
        \centering
        \subfigure[dp-hard, $N=M=6$]{\includegraphics[scale=0.4]{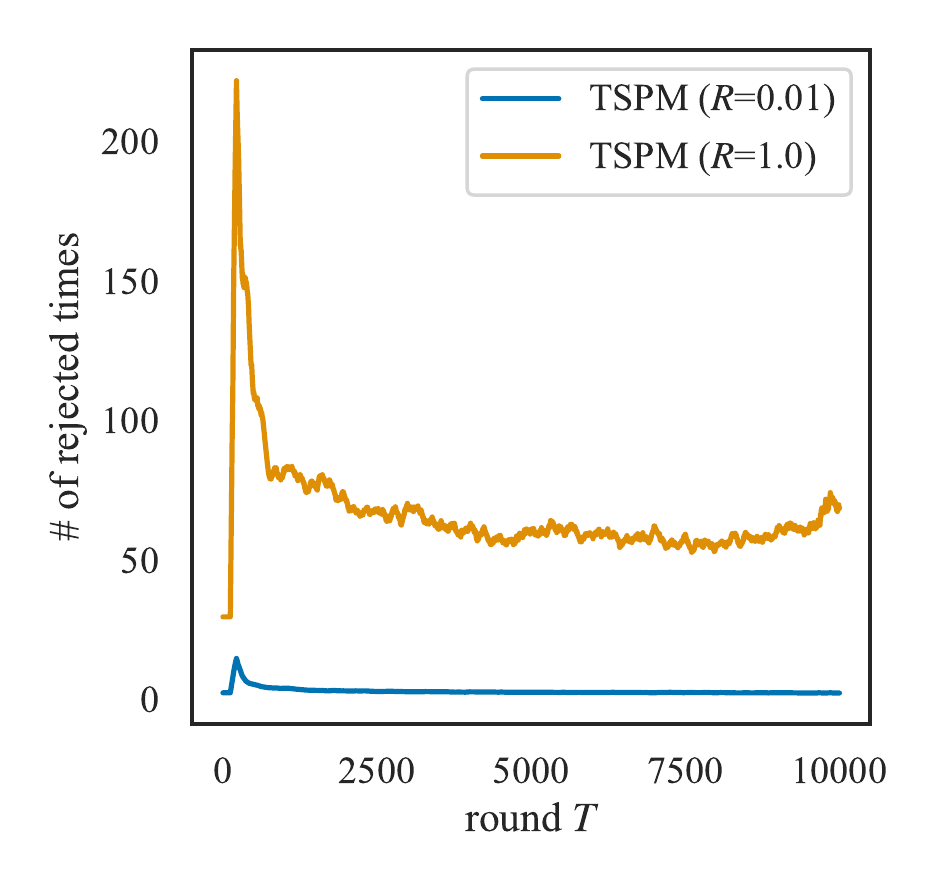}
        }
        \vspace{-0.501em}
    \end{minipage}\hfill
    \begin{minipage}[t]{\subfigwidth}
        \centering
        \subfigure[dp-hard, $N=M=7$]{\includegraphics[scale=0.4]{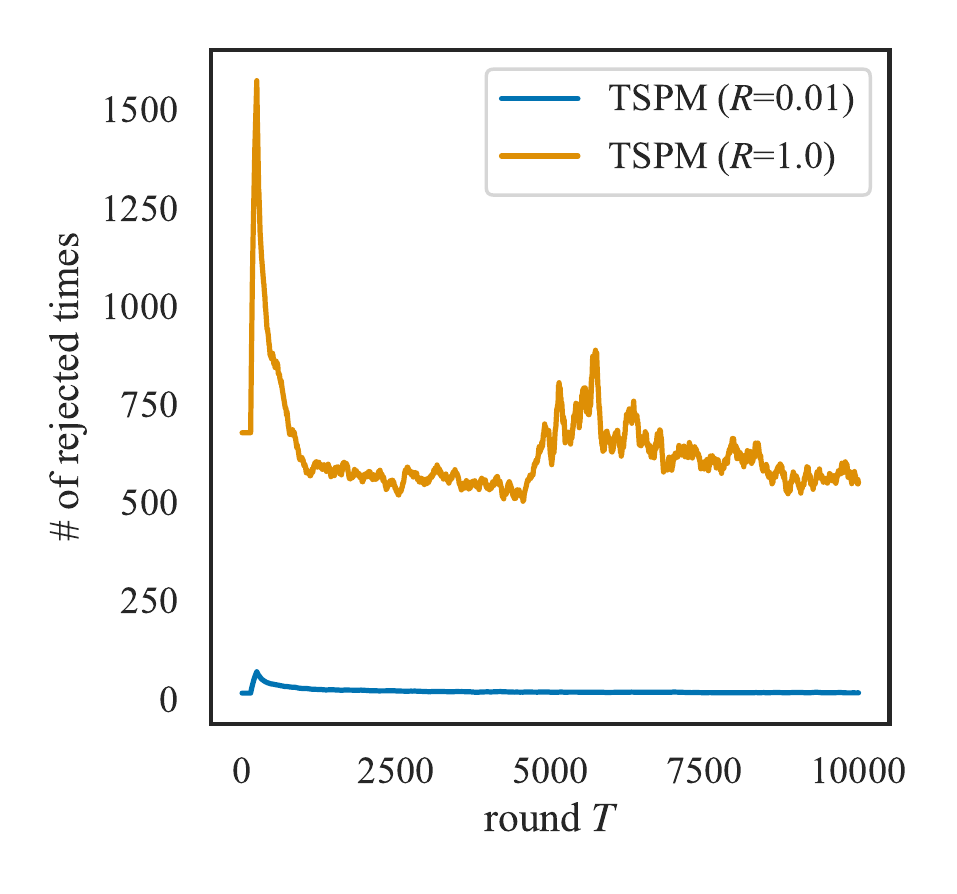}
        }
        \vspace{-0.501em}
    \end{minipage}\hfill
    \end{center}
    \caption{
        The number of rejected times by the accept-reject sampling.
        The solid lines indicate the average over $100$ independent trials after taking moving average with window size $100$.
    }\label{fig:result_n_rejected_all}
\end{figure}%

\end{document}